\newtheorem{lemma}{Lemma}
\newtheorem{proof}{Proof}
\begin{document}

\title{GMOR: A Lightweight Robust Point Cloud Registration Framework via Geometric Maximum Overlapping}

\author{Zhao Zheng, Jingfan Fan, Long Shao, Hong Song, Danni Ai, Tianyu Fu, Deqiang Xiao, Yongtian Wang, and Jian Yang
  \thanks{This work was supported by the National Science Foundation Program of China (62025104, 62422102, 62331005, U22A2052), the National Key R\&D Program of China (2023YFC2415300), and Haidian Original Innovation Joint Fund of Beijing Natural Science Foundation (L242091). (Corresponding Authors: Jingfan Fan; Long Shao; Jian Yang.)}
  \thanks{Zhao Zheng, Jingfan Fan, Long Shao, Danni Ai, Deqiang Xiao, Yongtian Wang, and Jian Yang are with Beijing Engineering Research Center of Mixed Reality and Advanced Display, School of Optics and Photonics, Beijing Institute of Technology, Beijing 100081, China, and Zhengzhou Research Institute, Beijing Institute of Technology, Zhengzhou 450003, China (e-mail:fjf@bit.edu.cn; lshao@bit.edu.cn; jyang@bit.edu.cn).}
  \thanks{Hong Song is with the School of Computer Science and Technology, Beijing Institute of Technology, Beijing 100081, China.}
  \thanks{Tianyu Fu is with the School of Medical Technology, Beijing Institute of Technology, Beijing 100081, China.}
  \thanks{Code released at \href{https://github.com/Bitzhaozheng/GMOR}{https://github.com/Bitzhaozheng/GMOR}.}}

\markboth{Journal of \LaTeX\ Class Files,~Vol.~14, No.~8, August~2021}%
{Shell \MakeLowercase{\textit{et al.}}: A Sample Article Using IEEEtran.cls for IEEE Journals}

\IEEEpubid{0000--0000/00\$00.00~\copyright~2021 IEEE}

\maketitle

\begin{abstract}
  Point cloud registration based on correspondences computes the rigid transformation that maximizes the number of inliers constrained within the noise threshold.
  Current state-of-the-art (SOTA) methods employing spatial compatibility graphs or branch-and-bound (BnB) search mainly focus on registration under high outlier ratios.
  However, graph-based methods require at least quadratic space and time complexity for graph construction, while multi-stage BnB search methods often suffer from inaccuracy due to local optima between decomposed stages.
  This paper proposes a geometric maximum overlapping registration framework via rotation-only BnB search.
  The rigid transformation is decomposed using Chasles' theorem into a translation along rotation axis and a 2D rigid transformation.
  The optimal rotation axis and angle are searched via BnB, with residual parameters formulated as range maximum query (RMQ) problems.
  Firstly, the top-k candidate rotation axes are searched within a hemisphere parameterized by cube mapping, and the translation along each axis is estimated through interval stabbing of the correspondences projected onto that axis.
  Secondly, the 2D registration is relaxed to 1D rotation angle search with 2D RMQ of geometric overlapping for axis-aligned rectangles, which is solved deterministically in polynomial time using sweep line algorithm with segment tree.
  Experimental results on indoor 3DMatch/3DLoMatch scanning and outdoor KITTI LiDAR datasets demonstrate superior accuracy and efficiency over SOTA methods, while the time complexity is polynomial and the space complexity increases linearly with the number of points, even in the worst case.
\end{abstract}

\begin{IEEEkeywords}
  Point cloud registration, branch-and-bound (BnB) search, range maximum query (RMQ), interval stabbing, segment tree.
\end{IEEEkeywords}

\section{Introduction}
\IEEEPARstart{P}{oint} cloud registration estimates the rigid transformation with 6 degrees of freedom (DoF) between two point clouds, which is a fundamental problem in simultaneous localization and mapping (SLAM)~\cite{baiFasterLIO2022,ruanSLAMesh2023,wuVoxelMap2024,mengFast2024}, computer assisted surgery~\cite{liincremental2023,zhuDevelopment2025} and augmented reality-based navigation~\cite{shaoAugmented2022,qinCollaborative2024,wangRobotic2024,yaoWearable2026}.
Although iterative closest point (ICP)~\cite{beslmethod1992} and its variants~\cite{pavlovAAICP2018,koideVoxelized2021,vizzoKISSICP2023,tunaXICP2024} are widely used for fine registration with given initial alignment, they require robust initialization by employing a global registration method.

\IEEEpubidadjcol
Global registration methods estimate the rigid transformation between point clouds by establishing correspondences without initial guesses.
Conventional correspondence-based approaches rely on handcrafted geometric primitives, such as 4-points congruent sets~\cite{aiger4points2008,melladoSuper2014}, convex hull~\cite{fan3Points2016,fanConvex2017} and dense local feature descriptors~\cite{rusuFast2009,sipiranHarris2011}.
Although these methods can generate plausible correspondence sets, they usually suffer from a high outlier ratio, which requires robust filtering.
To address this challenge, various robust global registration methods have been developed.
Random sample consensus (RANSAC)~\cite{fischlerRandom1981} and its variants~\cite{liPoint2021,shiRANSAC2024} remain widely used due to their simplicity, while graph-based matching methods~\cite{baiPointDSC2021,yangSACCOT2022,chenSC2PCR2022} apply spatial compatibility (SC) constraints to achieve strong outlier rejection.
Deterministic approaches based on branch-and-bound (BnB) search~\cite{yangGoICP2016,parrabustosGuaranteed2018} further provide global optimality guarantees in the searched solution space while suffer from rapidly growing search complexity in high-dimensional transformation space.

Within this context, recent deterministic BnB methods reduce the search complexity of rigid registration by decomposing the original 6 DoF into lower-dimensional subproblems~\cite{chenDeterministic2022,liEfficient2024,huangScalable2024,huangEfficient2024}.
For example, TR-DE~\cite{chenDeterministic2022} adopts a two-stage pure BnB strategy with 3+3 DoF decomposition.
Meanwhile, interval stabbing algorithms, which originally designed to maximize a uniform inlier count, are commonly used as 1-DoF reductions within the stages~\cite{parrabustosGuaranteed2018,liEfficient2024,huangScalable2024,huangEfficient2024}.
Chasles' decomposition theorem~\cite{chaslesNote1830} offers another practical way to shrink the search space and was first applied to gravity-constrained 4-DoF registration by Li et al.~\cite{liTransformation2024}.
This method considers a 1+2+1 DoF decomposition and solves the resulting subproblems by interval stabbing, 2D rotation center BnB search, and 1D rotation angle voting.
Despite their effectiveness, these multi-stage methods introduce two limitations:
the uniform inlier count treats all correspondences equally and do not incorporate correspondence weights, which reduces discrimination when correspondence confidence is highly non-uniform;
and sequentially decomposed stages may propagate local optima between stages.
Moreover, since interval stabbing collapses only a single DoF, it remains challenging to design a unified reduction that systematically guides BnB toward further dimension reduction.

To mitigate these limitations while maintaining efficient runtime and low memory consumption, we propose a Geometric Maximum Overlapping Registration (GMOR) framework based on a two-stage rotation-only BnB search.
The method leverages Chasles' decomposition theorem~\cite{chaslesNote1830,josephAlternative2020} to decouple rigid motion in the general 6-DoF case, as illustrated in Fig.~\ref{fig_chasles}.
Although the resulting DoF decomposition is similar in spirit to TR-DE~\cite{chenDeterministic2022} and the gravity-constrained formulation in~\cite{liTransformation2024}, our framework differs in its rotation-only search space and algorithmic structure.
Constrained settings, such as gravity-constrained 4-DoF registration~\cite{caiPractical2019,liTransformation2024}, naturally arise as special cases within this general framework.
In the first stage, the rotation axis is searched directly on the unit sphere using a cube-mapping hemispherical parameterization, and the translation along the axis is estimated via a 1D range maximum query (RMQ) over projected correspondences.
This RMQ adapts interval stabbing algorithms from prior works~\cite{parrabustosGuaranteed2018,huangScalable2024,huangEfficient2024,zhangAccelerating2024,liEfficient2024,liTransformation2024}, but is used here to maximize a weighted correspondence overlap objective rather than uniform inlier counting.
In the second stage, the residual rotation angle is searched in 1D space and the rotation axes with top-k weights in the first stage are selected to mitigate the local optima between stages.
Rotation centers induced by candidate angles are relaxed into axis-aligned bounding boxes, and maximum overlap estimation is formulated as a 2D RMQ problem over pairwise intersecting rectangles.
We further reformulate this geometric maximum overlapping as a stabbing counting query of Klee's rectangle problem~\cite{debergComputational2008,bentleySolution1977}, which can be solved deterministically in linear-logarithmic time using sweep line algorithm inspired by Bentley-Ottmann algorithm~\cite{bentleyAlgorithms1979} augmented with a segment tree.

This viewpoint provides a unified interpretation across both stages: casting interval stabbing as a solver of 1D RMQ problem also motivates the 2D RMQ formulation in the second stage, further reducing the BnB search dimension.
The proposed framework combines rotation-only BnB search over the Special Orthogonal Group ($\mathrm{SO}(3)$) with computational geometry-based RMQ solvers to achieve effective dimension reduction while preserving polynomial time complexity and linear space complexity.
The main contributions of this work are summarized as follows:

\begin{itemize}
  \item{We propose a novel two-stage rotation-only BnB framework for general 6-DoF rigid registration, which optimizes a weighted correspondence overlap objective rather than uniform inlier counting.}
  \item{A cube-mapping hemispherical parameterization is introduced for global rotation axis search, coupled with adaptive interval stabbing to estimate translation along the axis. The correspondence projections progressively tighten the search bounds, enabling efficient overlap maximization.}
  \item{We derive a deterministic polynomial-time solution for the residual 2D rigid registration by reformulating it as a 1D rotation angle search with 2D RMQ. The resulting geometric overlap maximization is cast as a Klee's rectangle problem~\cite{debergComputational2008} and solved via a sweep line algorithm with segment tree in $O(N\log N)$ time per iteration.}
\end{itemize}

\section{Related Work}

\subsection{Feature extraction and learning-based registration}

A common approach for global registration is extracting the corresponding features between two point clouds to estimate the transformation between them.
Rusu et al.~\cite{rusuFast2009} propose a Fast Point Feature Histograms (FPFH) descriptor for global registration,
which is a handcrafted feature capturing the local geometry of each point invariant to scene context.
Learning-based feature descriptors have been widely used for specific scenes, and 3DMatch proposed by Zeng et al.~\cite{zeng3DMatch2017} is a pioneering work which provides the first benchmark dataset for learned feature representations.
Choy et al.~\cite{choyFully2019} propose Fully Convolutional Geometric Features (FCGF),
which is trained separately on indoor/outdoor datasets and achieves higher inlier ratio.
Huang et al.~\cite{huangPREDATOR2021} demonstrate that using pairwise point clouds as joint input extracts richer features, proposing Predator for low-overlap cases.
Qin et al.~\cite{qinGeometric2022} further employ Transformer networks to model global geometric relationships of pairwise point clouds.
To address the positional deviation problem in predicted correspondences, Zhao et al.~\cite{zhaoRobust2025} propose a dual-branch network for matching and predicting patch-based features respectively.

Based on the correspondence matching in feature space, the rigid transformation is estimated by either RANSAC~\cite{fischlerRandom1981} or chunk-based methods~\cite{qinGeometric2022}.
Recently, end-to-end learning approaches, e.g. DGR~\cite{choyDeep2020} and PointDSC~\cite{baiPointDSC2021}, are progressive for solving the rigid transformation of correspondences.
While effective in specific scenarios~\cite{wangCCAG2024,yangLandmark2025}, they generalize poorly to practical applications.
Crucially, traditional methods~\cite{shiRANSAC2024,yangTEASER2021,chenSC2PCR2022,zhang3D2023,huangEfficient2024} without learning now surpass them in accuracy and efficiency through graph or solution space exploration.
These observations motivate a trend towards front-end and back-end decoupling~\cite{qiaoG3Reg2024,caoDMS2025,liuSGreg2025}, where the front end focuses on feature descriptor and correspondence generation, while the back end is responsible for reliable robust estimation.
Following this paradigm, this section concentrates on back-end global registration techniques.

\subsection{Iterative Random Sampling}

RANSAC~\cite{fischlerRandom1981} is a widely used algorithm for robust estimation with outliers, has inspired numerous variants~\cite{barathGraphCut2018,liPoint2021,shiRANSAC2024}.
Rusu et al.~\cite{rusuFast2009} leverage pairwise point distances and Fast Point Feature Histograms (FPFH) descriptor similarities to formulate SAmple Consensus Initial Alignment (SAC-IA), which suppresses outlier influence through finite iterations.
Barath and Matas~\cite{barathGraphCut2018} improve sampling efficiency with Graph-Cut RANSAC, removing outliers by graph cutting with local optimization.
Unlike local sampling approaches, Li et al.~\cite{liPoint2021} develop a one-point RANSAC variant using global pairwise distance and a modified Tukey's biweight function for scaled transformation estimation.
Shi et al.~\cite{shiRANSAC2024} extend this framework and propose TCF, implementing a three-stage RANSAC process for robust outlier rejection.

While these methods demonstrate strong performance under moderate outlier ratios and sufficient correspondences, their computational complexity becomes exponential when outlier ratio is extremely high.
To balance robustness and efficiency, RANSAC~\cite{fischlerRandom1981} is commonly integrated with other constraints, e.g. SC graph, as discussed in subsequent sections.

\subsection{Spatial Compatibility Graph Matching}

SC~\cite{baiPointDSC2021,yangSACCOT2022,chenSC2PCR2022} is a practical measure of the geometric consistency between point clouds after rigid transformation.
SC graph is constructed by pair-wise distances between correspondences, and the vertices are connected if the distances difference is less than a threshold.
Yang et al.~\cite{yangSACCOT2022} propose SAC-COT by randomly sampling three-point correspondences guided by the SC graph,
which generates more reliable correspondences than RANSAC~\cite{fischlerRandom1981}.
Yan et al.~\cite{yannew2022} optimize the candidate selection by the reliability degree of graph nodes and edges for outlier removal.
Based on reliable seeds selection of SC graph in PointDSC~\cite{baiPointDSC2021}, Chen et al.~\cite{chenSC2PCR2022} eliminate the training procedure and propose SC2-PCR with a second order graph (SOG) to sample the correspondences,
later extended to SC2-PCR++~\cite{chenSC2PCR2023} via consensus reselection with one-to-many matching and mixed feature-spatial consistency metric.
Yan et al.~\cite{yanTurboReg2025} consider the GPU-based parallel searching and propose pivot-guided search algorithm for SOG.
These works demonstrate traditional SOG matching's superiority over end-to-end approaches.

For outlier pruning, Yang et al.~\cite{yangTEASER2021} propose TEASER++ that combines truncated least squares estimation with practical maximum clique algorithm~\cite{parrapractical2020}.
Zhang et al.~\cite{zhang3D2023} relax the constraint of the maximum clique in TEASER++~\cite{yangTEASER2021} and propose MAC to find the maximal cliques of the SOG.
This approach employs modified Bron-Kerbosch algorithm implemented in igraph library~\cite{eppsteinListing2010} with complexity $O(d_v(N\!-\!d_v)3^{d_v/3})$, where $d_v$ is the graph degeneracy.
Laserna~\cite{lasernaCliReg2025} proposes CliReg for sparse graphs, which adopts BnB algorithm in maximal clique search of sparse graph.
While efficient for sparse graphs, maximal clique-based methods approach exponential space and time complexity on recursively searching in dense graphs, which leads to memory limitations reported in~\cite{huangScalable2024}.

In summary, graph-based methods achieve high accuracy particularly with sparse correspondence sets.
However, their applicability is constrained by following primary factors:
1) quadratic space complexity in graph construction limits scalability for large-scale point clouds;
2) clique search exhibits exponential time complexity in the worst case of dense graphs;
3) SC's distance invariance under Euclidean Group ($\mathrm{E}(3)$) rather than Special Euclidean Group ($\mathrm{SE}(3)$) may yield inaccurate registrations under improper rotations.

\subsection{BnB Search in Solution Space}

BnB search provides globally optimal solutions for combinatorial optimization problems, but the time complexity increases exponentially in the dimensionality of the solution space.
Recent works~\cite{yangGoICP2016,parrabustosGuaranteed2018,chenDeterministic2022,huangScalable2024} focus on the search in the 6-DoF solution space of rigid transformation.
Hartley and Kahl~\cite{hartleyGlobalOptimizationRotation2009} pioneered this approach, formulating rotation search as BnB over the closed ball with radius $\pi$ in $\mathrm{R}^3$ using Rodrigues' parameterization.
Yang et al.~\cite{yangGoICP2016} propose Go-ICP with nested BnB search of $\mathrm{SE}(3)$ to estimate the 6-DoF rigid transformation between two point clouds.
To reduce the dimensions of search space, Bustos et al.~\cite{parrabustosFast2016} propose a fast rotation search with circular R-tree indexing,
but it still requires the nested $\mathbb{R}^3$ translation search.
In further research, Bustos and Chin~\cite{parrabustosGuaranteed2018} propose GORE, combining BnB with 1D interval stabbing for outlier removal while leaving the nested translation search problem unresolved.

Although $\mathrm{SE}(3)$ BnB search guarantees global optimality, its computational cost remains prohibitive due to 6-DoF search space complexity.
Stage-wise decomposition provides a practical compromise, approximating global optima through sequential subproblems.
Chen et al.~\cite{chenDeterministic2022} propose TR-DE via two-stage 3-DoF searches:
first solving the 2-DoF rotation axis and 1-DoF translation along the axis through 3D BnB,
and then solving the 1-DoF rotation angle and 2-DoF planar translation via nested BnB.
This reduces the computational complexity from 6 DoF to 3 DoF, while maintaining near-optimal registration accuracy.

Similarly, the 1D interval stabbing algorithm can be utilized in these decomposed stages to reduce the search dimensions~\cite{zhangAccelerating2024}.
Li et al.~\cite{liEfficient2024} decompose registration into three 2-DoF rotation matrix row vector estimation subproblems, and estimate the 1-DoF translation projection via interval stabbing.
Based on the row-vector decomposition, Huang et al.~\cite{huangScalable2024} propose TEAR to solve in two stages:
first resolving an initial row vector through 2-DoF BnB in spherical space with estimating translation via interval stabbing,
then adopting rotation matrix orthogonality constraints between rows to simplify subsequent solution.
Huang et al.~\cite{huangEfficient2024} further enhance this approach with HERE, integrating the SC graph with BnB search.
It preprocesses the input correspondences by valid sampling of SC graph and then performs a progressive outlier removal in three-stage BnB search with interval stabbing.
Nevertheless, staged BnB search improves efficiency but sacrifices the global optimality:
once one of the stages gets trapped in a local optima, it is challenging for the subsequent stages to yield the correct result.

\subsection{Practical LiDAR registration with Gravity Priors}

In contrast to the general 6-DoF registration problem, several studies~\cite{caiPractical2019,liFast2023,liTransformation2024,limQuatro2024} have investigated a practical 4-DoF setting enabled by gravity priors in LiDAR systems, where the rotation is constrained to the yaw angle.
With roll and pitch approximately aligned with the gravity direction, the solution space is effectively reduced to a 2.5D problem, consisting of 3 translational and 1 rotational DoF.

Cai et al.~\cite{caiPractical2019} study this setting by adopting Intrinsic Shape Signatures (ISS)~\cite{zhongIntrinsic2009} for correspondence generation and propose Fast Match Pruning (FMP) to further suppress outliers.
The resulting transformation is estimated via 3-DoF BnB search over the translation space with 1-DoF rotation angle interval stabbing.
Li et al.~\cite{liFast2023} also perform the translation space BnB search but estimate the remaining rotation angle through global voting.
To further improve efficiency based on~\cite{liFast2023}, Li et al.~\cite{liTransformation2024} decompose the 4-DoF problem into multiple stages using Chasles' theorem~\cite{chaslesNote1830}.
Specifically, the translation along the known rotation axis is first estimated via interval stabbing, followed by a 2D BnB search for the rotation center, and a final global voting step to recover the rotation angle.
Lim et al.~\cite{limQuatro2024} consider the 4-DoF case of TEASER++~\cite{yangTEASER2021} and propose Quatro, which adopts maximal clique in SC and graduated non-convexity based truncated least square estimation of Quasi-$\mathrm{SO}(3)$ solution.
Recently, Aoki et al.~\cite{aoki3DBBS2024} propose a correspondence-free formulation that performs global BnB search jointly over translation and yaw angle, assuming roll and pitch lie within a small neighborhood around zero.
This approach avoids explicit correspondence matching in feature space but relies on the voxel map-based nearest neighbor search at the cost of increased computational complexity.
Nevertheless, the above BnB-based methods treat translation as the search domain, making their search efficiency and resolution sensitive to the spatial scale of the point clouds.

Building upon the 2.5D formulation, an even more aggressive dimensionality reduction has been adopted by representing LiDAR point clouds in a 2D bird's-eye-view (BEV) space~\cite{xuRING2023,yuanBTC2024,luoBEVPlace2025},
which enables efficient point cloud retrieval for loop detection.
By projecting point clouds onto the ground plane and discarding the dimension along gravity direction, the registration problem is further simplified to a purely 2D formulation with 3 DoF, significantly reducing computational complexity while retaining sufficient geometric structure for localization and alignment~\cite{hessRealtime2016}.
Overall, the 4-DoF and 3-DoF registration problems can be viewed as special cases of the general 6-DoF formulation, which can be flexibly addressed within a unified BnB framework by appropriately redefining the search domain.

\section{Problem Formulation}
\label{sec:problem}

Given two point clouds with $N$ correspondences $\{\mathbf{P}_i\}_{i=1}^N$ and $\{\mathbf{Q}_i\}_{i=1}^N$, the purpose is to estimate the rotation matrix $\hat{\mathbf{R}}$ and translation vector $\hat{\mathbf{t}}$ that maximize the weighted sum of inliers:
\begin{equation}
  \hat{\mathbf{R}}, \hat{\mathbf{t}} = \arg\max_{\mathbf{R}, \mathbf{t}} \sum_{i=1}^N w_i \cdot \mathbb{I}
  \left( \left\| \mathbf{Q}_i - (\mathbf{R}\mathbf{P}_i + \mathbf{t}) \right\| \le \xi \right) ,
  \label{eq_problem}
\end{equation}
where $\mathbb{I}(\cdot)$ is the indicator function of inliers, and $w_i$ represents the $i^{th}$ correspondence weight in $\mathbf{w}=\{w_i\}_{i=1}^N$.
When $\mathbf{w}=\{1\}^N$, the reduces to the maximizing the inliers count. $\xi$ is the noise threshold, following $\xi^2 = 2\sigma^2 \chi_{3,0.95}^2$ from Gaussian noise $\mathcal{N}(0, \sigma^2\mathbf{I})$ per dimension of a single point,
with $\chi_{3,0.95} \approx 2.796 $ being the $95\%$ quantile of the 3-DoF chi-square distribution.

\begin{figure}
  \centering
  \centering
  \subfloat[]{\includegraphics[height=0.32\columnwidth]{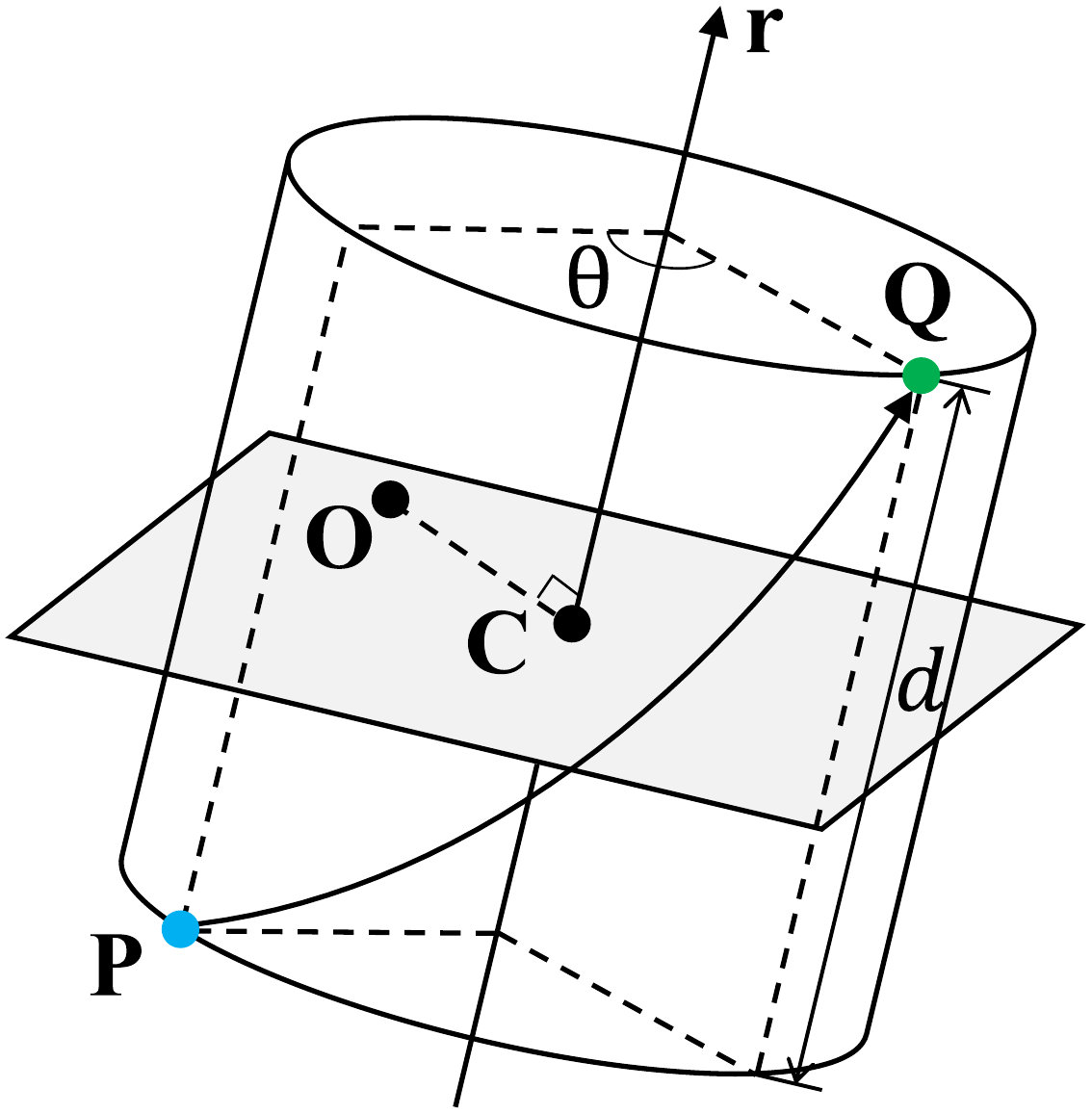}}
  \centering
  \subfloat[]{\includegraphics[height=0.32\columnwidth]{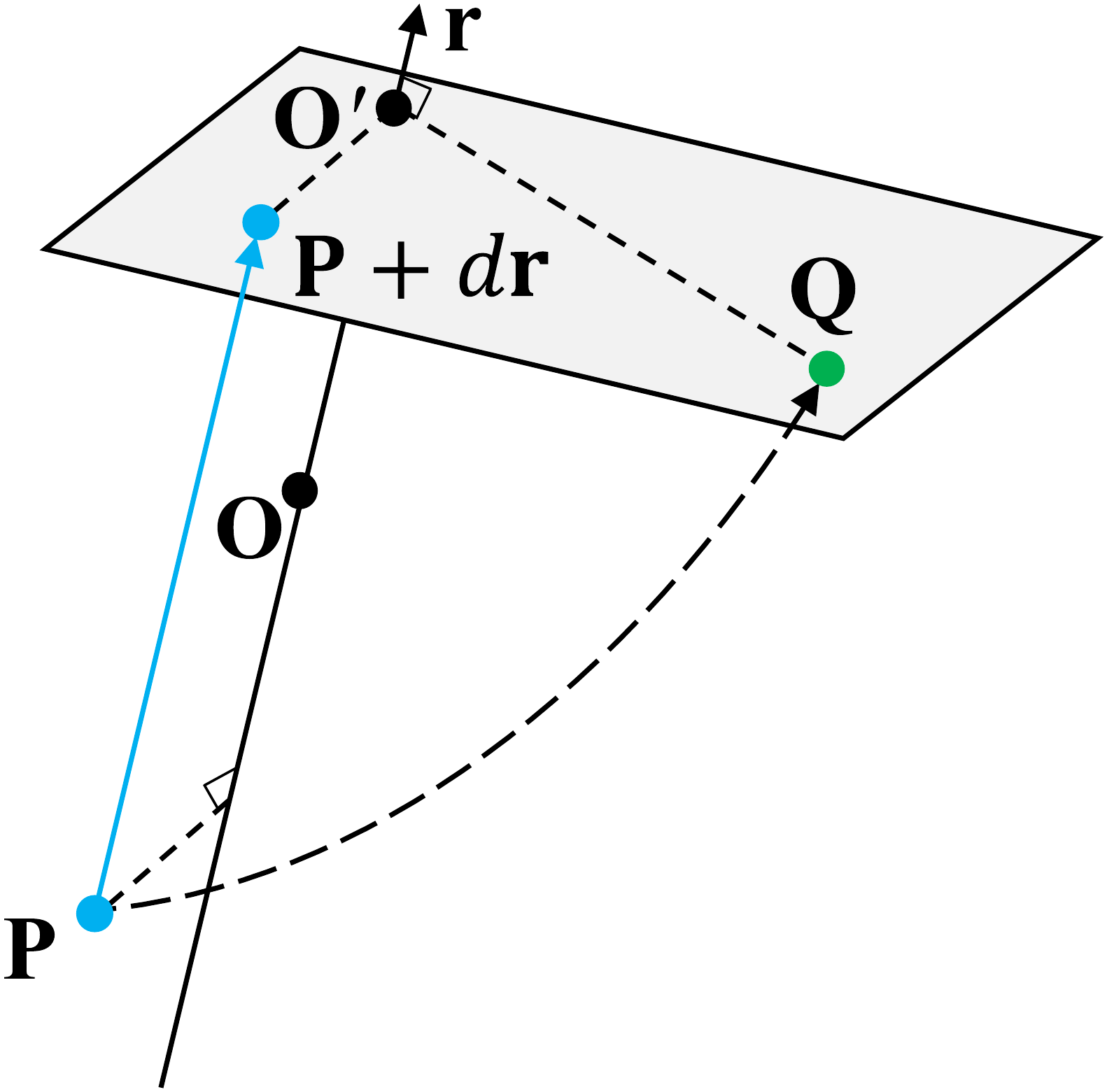}}
  \subfloat[]{\includegraphics[height=0.32\columnwidth]{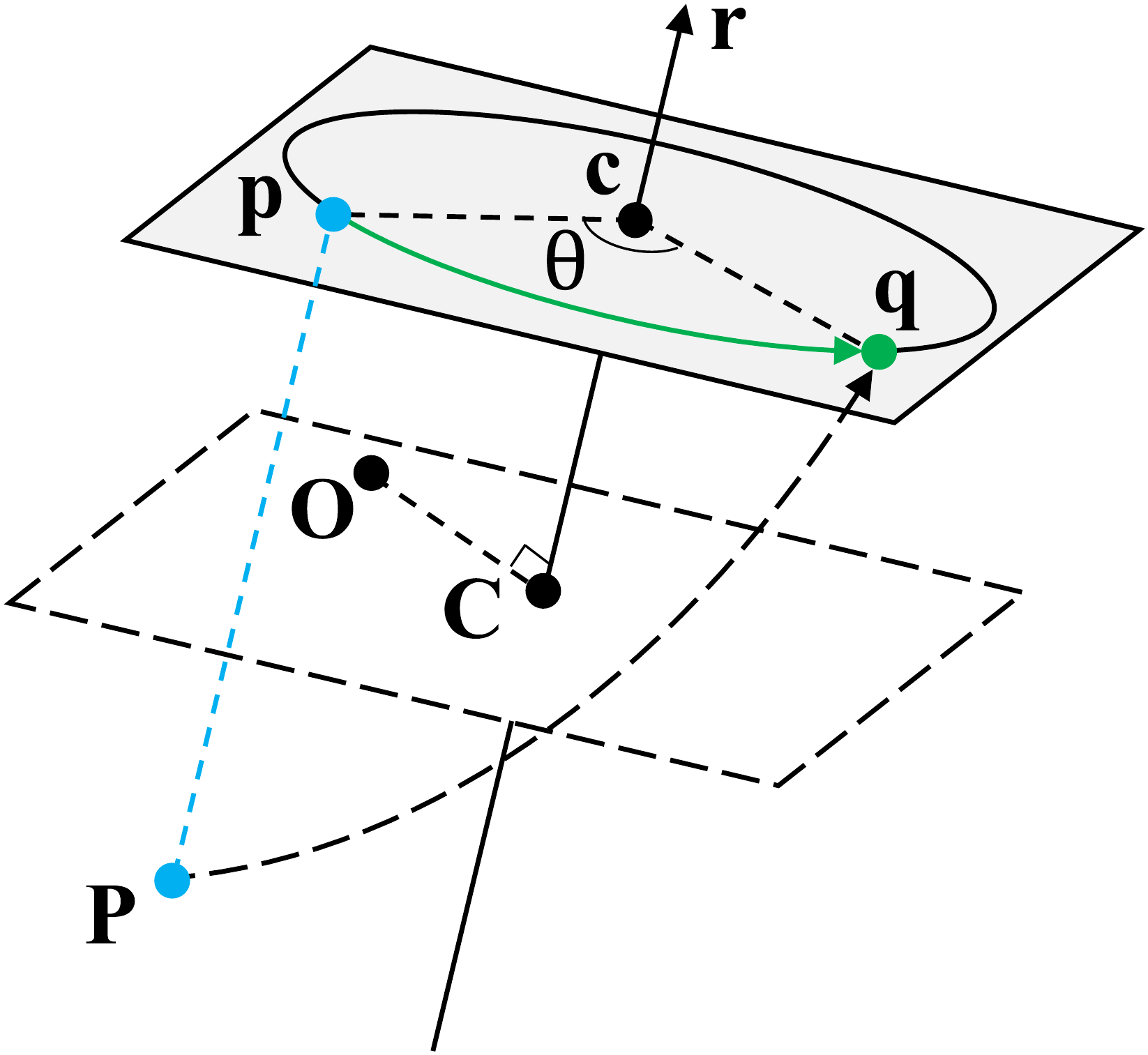}}
  \caption{Decomposition of rigid transformation by Chasles' theorem~\cite{chaslesNote1830}.
    (a) Equivalent screw motion from $\mathbf{P}$ to $\mathbf{Q}$;
    (b) Decomposed translation along the rotation axis direction $\mathbf{r}$;
    (c) Residual 2D rotation about the axis through a determined center $\mathbf{C}$.}
  \label{fig_chasles}
\end{figure}

As shown in Fig.~\ref{fig_chasles}, the rigid transformation is equivalent to a screw motion,
which can be decomposed to a translation along the unit rotation axis direction $\mathbf{r}$ and a rotation about the axis through a fixed point $\mathbf{C}$ satisfying $\mathbf{r} \times (\mathbf{R}\mathbf{C} + \mathbf{t} - \mathbf{C}) = \mathbf{0}$.
Applying the Rodrigues' formula, we reformulate~\eqref{eq_problem} as:
\begin{equation}
  \begin{aligned}
    \hat{\mathbf{R}}, \hat{\mathbf{t}} = & \arg\max_{\mathbf{R}, \mathbf{t}}
    \sum_{i=1}^N w_i \cdot \mathbb{I}
    \left( \left\| (\mathbf{r} \cdot \mathbf{S}_i - d) \mathbf{r} \right. \right. \\
                                         & \left. \left. + \mathbf{r} \times
    \left( \mathbf{Q}_i - \mathbf{C} - \mathbf{R} (\mathbf{P}_i - \mathbf{C}) \right)
    \right\| \le \xi \right),
  \end{aligned}
  \label{eq_chasles}
\end{equation}
with parameters:
\begin{subequations} \label{eq_chasles_params}
  \begin{align}
    \mathbf{S}_i & = \mathbf{Q}_i - \mathbf{P}_i, \label{eq_chasles_params_a} \\
    \mathbf{R}   & = \mathbf{I}\cos\theta
    + (1 - \cos\theta) \mathbf{r}\mathbf{r}^\top
    + \mathbf{r}_\times \sin\theta, \label{eq_chasles_params_b}               \\
    d            & = \mathbf{r} \cdot \mathbf{t}, \label{eq_chasles_params_c} \\
    \mathbf{C}   & = \frac{1}{2} \left(
    \mathbf{I} - \mathbf{r} \mathbf{r}^\top
    + \mathbf{r}_\times  \cot \left( \frac{\theta}{2} \right)
    \right) \mathbf{t}, \label{eq_chasles_params_d}
  \end{align}
\end{subequations}
where $\mathbf{S}_i$ is the difference vector between $\mathbf{Q}_i$ and $\mathbf{P}_i$, $\mathbf{I}$ is the identity matrix, $\mathbf{r}_\times$ is the skew-symmetric matrix of unit rotation axis $\mathbf{r}$, and $\theta$ is the rotation angle in the Rodrigues' formula~\eqref{eq_chasles_params_b}.
Particularly, $\mathbf{C}$ represents an arbitrary point on the rotation axis with direction $\mathbf{r}$, and the selected $\mathbf{C}$ in~\eqref{eq_chasles_params_d} is the origin point projected onto the axis.
The derivation from~\eqref{eq_problem} to~\eqref{eq_chasles} is presented in Appendix~\ref{appd:chasles}.

Based on the decomposition in~\eqref{eq_chasles}, we set minimum branch width $\epsilon$ and search for the globally optimal rigid transformation through two stages:
\begin{enumerate}
  \item{Stage I: BnB search over the hemisphere for rotation axis direction, estimating the translation via interval stabbing of axis-projected correspondences.}
  \item{Stage II: BnB search for the rotation angle, estimating the rotation center using sweep line algorithm accelerated by segment tree.}
\end{enumerate}

\begin{figure*}[!t]
  \centering
  \includegraphics[width=0.9\textwidth]{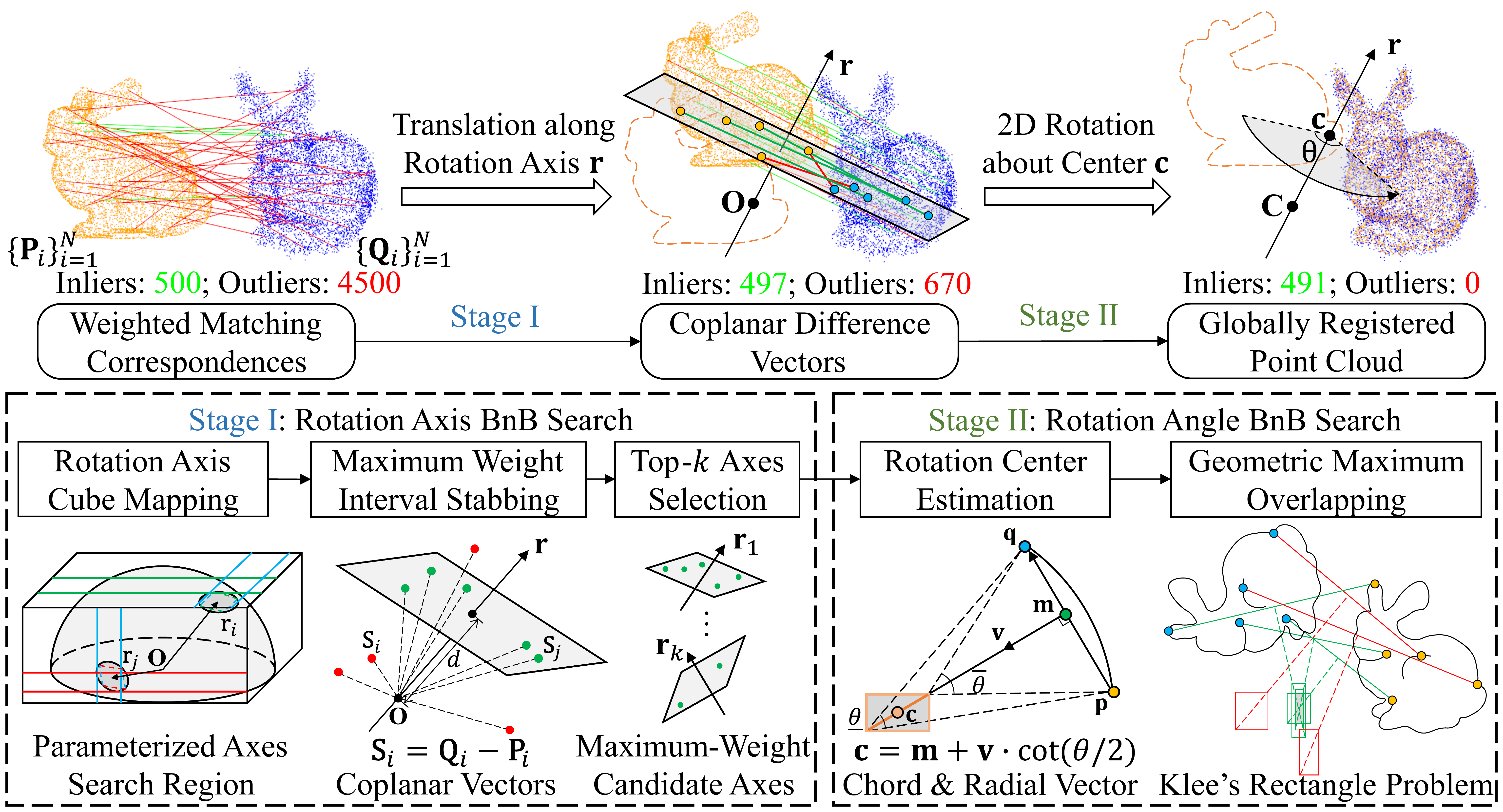}
  \caption{Two-stage BnB search framework of GMOR.}
  \label{fig_framework}
\end{figure*}

The framework of our GMOR method is illustrated in Fig.~\ref{fig_framework}.
With preprocessed weighted matching, the correspondences are filtered in two stages to maximize the weighted sum of inliers.
Furthermore, the subproblem in stage I is selecting maximal-weight coplanar points in 3D space,
and the subproblem in stage II is deterministically estimating the rigid transformation on 2D plane.

\section{Preprocessing: Weighted Cross-Matching in Feature Space}
\label{sec:preprocess}

To generate the weighted correspondences between two original point clouds $\{\mathbf{P}_i^o\}_{i=1}^{N_p}$ and $\{\mathbf{Q}_i^o\}_{i=1}^{N_q}$,
we perform softmax weighted k-nearest neighbor (kNN) search with dustbin in feature space.
The correspondence weights are computed as:
\begin{equation}
  w_i = \frac{\exp \left( -\dfrac{ \| \mathbf{f}^{p}_{i} - \mathbf{f}^{q}_1 \| ^2}
  {2d_\mathrm{f}^2 \|\mathbf{f}^{p}_{i}\|^2} \right)}
  {\exp \left( -\dfrac{\delta^2}{2d_\mathrm{f}^2} \right)
  + \sum\limits_{j=1}^{k_\mathrm{f}} \exp \left( -\dfrac{ \| \mathbf{f}^{p}_i - \mathbf{f}^{q}_j \| ^2}
  {2d_\mathrm{f}^2 \|\mathbf{f}^{p}_{i}\|^2} \right)} ,
  \label{eq_weight}
\end{equation}
where $\mathbf{f}^{p}_i$ and $\mathbf{f}^{q}_j$ are feature vectors of the $i^{th}$ point in $\{\mathbf{P}_i^o\}_{i=1}^{N_p}$ and $j^{\mathrm{th}}$ nearest neighbor in $\{\mathbf{Q}_i^o\}_{i=1}^{N_q}$ respectively.
$d_\mathrm{f}$ is a distance factor in feature space, and $\delta=\sqrt{2}$ is the dustbin threshold to reduce weights for distant features, and $k_\mathrm{f}$ is the number of nearest neighbors.
We implement cross-matching by computing the bidirectional kNN from $\{\mathbf{P}_i^o\}_{i=1}^{N_p}$ to $\{\mathbf{Q}_i^o\}_{i=1}^{N_q}$ and vice versa.
The combined results form correspondences $\{\mathbf{P}_i\}_{i=1}^N \rightarrow \{\mathbf{Q}_i\}_{i=1}^N$ with weights $\{w_i\}_{i=1}^N$.

\section{Stage I: Rotation Axis Search}
\label{sec:stage1}

The stage I objective maximizes inlier weights by finding optimal rotation axis $\mathbf{r}$ and translation distance $d$:
\begin{equation}
  \begin{aligned}
    \hat{\mathbf{r}}, \hat{d} & = \arg\max_{\mathbf{r}, d} W_1                               \\
                              & = \arg\max_{\mathbf{r}, d} \sum_{i=1}^N w_i \cdot \mathbb{I}
    \left( \left| \mathbf{r} \cdot \mathbf{S}_i - d \right| \le \xi_1 \right) ,
  \end{aligned}
  \label{eq_stage1}
\end{equation}
where $\xi_1^2 = 2\sigma^2 \chi_{1, 0.95}^2$ is the 1D noise threshold ($\chi_{1, 0.95} \approx 1.96$).

The objective function~\eqref{eq_stage1} identifies the largest coplanar subset of $\{\mathbf{S}_i\}_{i=1}^N$ by fitting a plane with normal $\mathbf{r}$ at distance $d$ from the origin.
Unlike TR-DE~\cite{chenDeterministic2022}, we parameterize rotation axes via cube mapping~\cite{snyderMap1987} on the hemisphere, as shown in Fig.~\ref{fig_cube}(a).
The transformation from cube-face coordinates $(x_f, y_f)^\top$ (e.g., XY-face) to Cartesian coordinate of rotation axis is
\begin{equation}
  \begin{aligned}
    \mathbf{r}_\mathrm{p} & = (x_f, y_f, 1)^\top ,                                \\
    \mathbf{r}            & = \mathbf{r}_\mathrm{p} / \|\mathbf{r}_\mathrm{p}\| ,
  \end{aligned}
  \label{eq_cube}
\end{equation}
where $\mathbf{r}_\mathrm{p}$ is the unnormalized rotation axis $\mathbf{r}$ on the cube face.
The initial search space per cube face is $[-1, 1] \times [-1, 1]$.
Moreover, a comparative study of different spherical parameterizations (e.g. Miller projection used in TR-DE~\cite{chenDeterministic2022}) is presented in Appendix~\ref{appd:proj}.

\begin{figure}
  \centering
  \subfloat[]{\includegraphics[width=0.5\columnwidth]{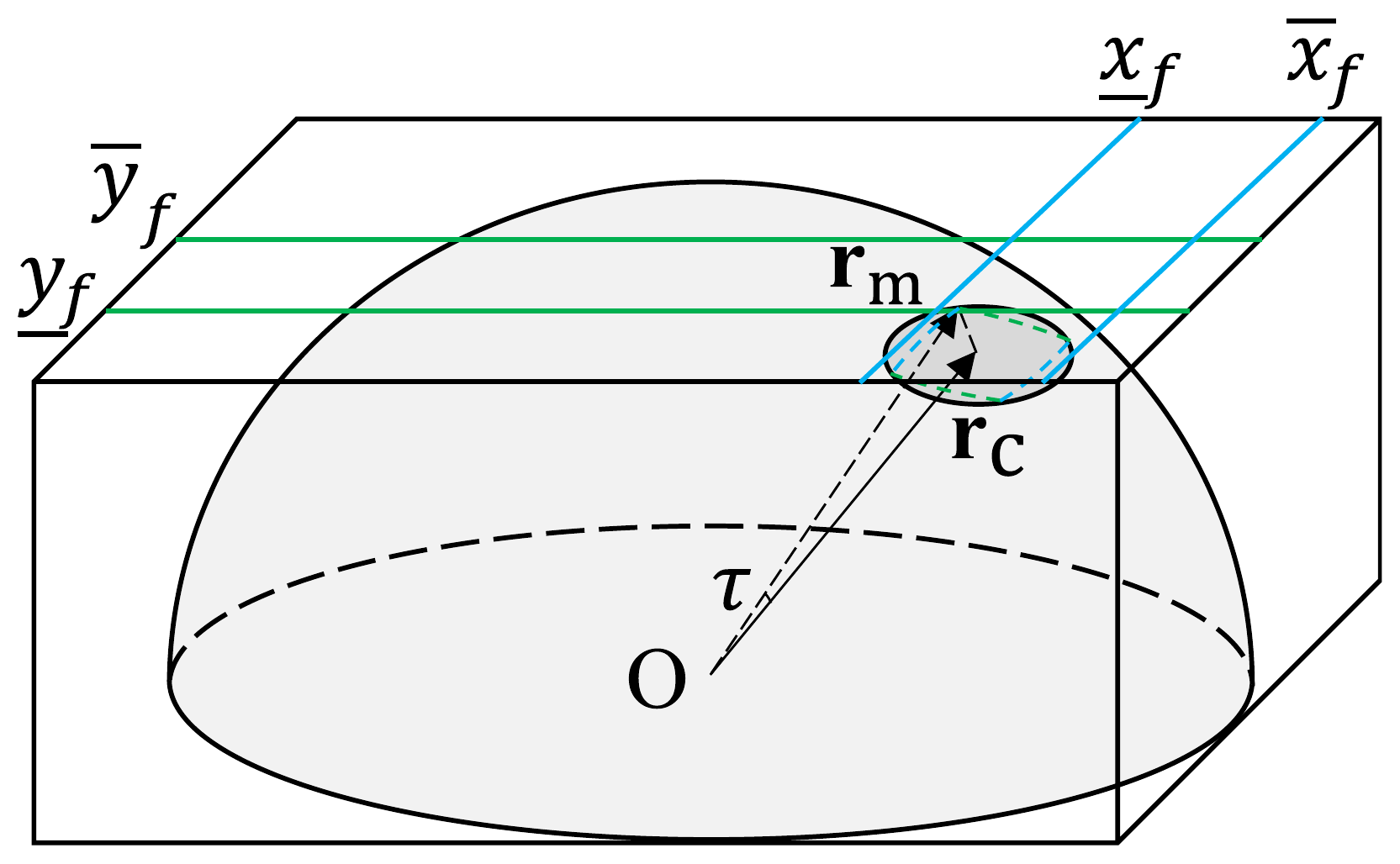}}
  \subfloat[]{\includegraphics[width=0.5\columnwidth]{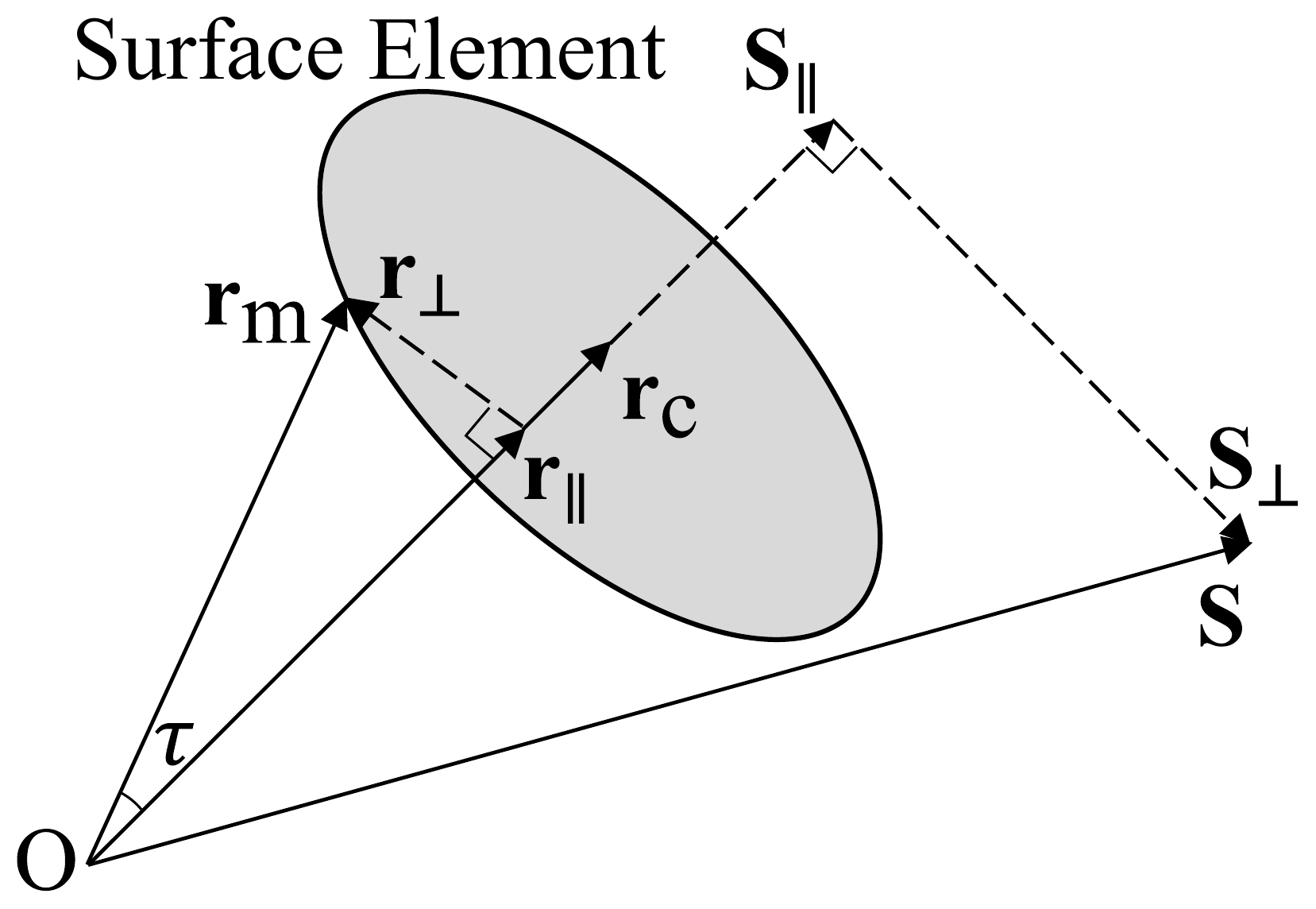}}
  \caption{Rotation axis parameterization for BnB search.
    (a) Cube mapping with surface element bounding circle.
    (b) Decomposition of projection onto rotation axis within the surface element.}
  \label{fig_cube}
\end{figure}

Using hemisphere discretization via cube mapping, we perform 2D BnB search for rotation axis.
The rotation axis search employs best-first BnB with space bisection, guided by the upper bound $\overline{W}_1$.

\subsection{Projection onto Rotation Axis}

The rotation axis $\mathbf{r}_\mathrm{c}$ for a surface element is represented by its center $\left( (\underline{x}_f + \overline{x}_f) / 2, (\underline{y}_f + \overline{y}_f) / 2 \right)^\top$.
The farthest possible axis $\mathbf{r}_\mathrm{m}$ from $\mathbf{r}_\mathrm{c}$ lies at the corner closest to the cube face center.
The translation distance $d$ for correspondence $\mathbf{S}$ projected onto axis $\mathbf{r}$ is:
\begin{equation}
  \begin{aligned}
    d & = \mathbf{r}\cdot\mathbf{S}                                                                 \\
      & = (\mathbf{r}_\parallel + \mathbf{r}_\perp) \cdot (\mathbf{S}_\parallel + \mathbf{S}_\perp) \\
      & = \mathbf{r}_\parallel \cdot \mathbf{S}_\parallel + \mathbf{r}_\perp \cdot \mathbf{S}_\perp
  \end{aligned}
  \label{eq_d}
\end{equation}
where $\mathbf{r}$ is any axis within the surface element,
$\mathbf{r}_\parallel$ and $\mathbf{r}_\perp$ are components parallel and perpendicular to $\mathbf{r}_\mathrm{c}$,
and $\mathbf{S}_\parallel$, $\mathbf{S}_\perp$ are corresponding components of $\mathbf{S}$ in Fig.~\ref{fig_cube}(b).
The norms of the parallel and perpendicular components satisfy:
\begin{equation}
  \begin{aligned}
    \cos\tau & \le \|\mathbf{r}_\parallel\| \le 1    \\
    0        & \le \|\mathbf{r}_\perp\| \le \sin\tau
  \end{aligned}
  \label{eq_tau}
\end{equation}
where $\tau$ is the maximum angle between $\mathbf{r}_\mathrm{m}$ and $\mathbf{r}_\mathrm{c}$.
For each correspondence $\mathbf{S}$, the projection boundaries on $\mathbf{r}_\mathrm{c}$ are:
\begin{equation}
  \begin{aligned}
    \underline{d}_l & = \mathbf{r}_\mathrm{c} \cdot \mathbf{S} - \xi_1 , \\
    \overline{d}_l  & = \mathbf{r}_\mathrm{c} \cdot \mathbf{S} + \xi_1 ,
  \end{aligned}
  \label{eq_d_l}
\end{equation}

Combining~\eqref{eq_d} and~\eqref{eq_tau} and considering the two cases where $\mathbf{S}$ crosses or does not cross the surface element,
we derive tighter boundaries than TR-DE~\cite{chenDeterministic2022}:
\begin{equation}
  \begin{aligned}
    \underline{d}_u & = d_\parallel - d_\perp - \xi_1,                                                  \\
    \overline{d}_u  & =
    \begin{cases}
      d_\parallel + d_\perp + \xi_1, & \|\mathbf{S}\| \cos\tau > \mathbf{r}_\mathrm{c} \cdot \mathbf{S}   \\
      \|\mathbf{S}\| + \xi_1,        & \|\mathbf{S}\| \cos\tau \le \mathbf{r}_\mathrm{c} \cdot \mathbf{S}
    \end{cases} \\
                    & s.t. \quad \mathbf{r}_\mathrm{c} \cdot \mathbf{S} \ge 0
  \end{aligned}
  \label{eq_d_ub}
\end{equation}
where $d_\parallel = \mathbf{r}_\mathrm{c} \cdot \mathbf{S} \cos\tau$ and $d_\perp = \left\| \mathbf{S}_\perp \right\| \sin\tau$.

When $\mathbf{r}_\mathrm{c} \cdot \mathbf{S} < 0$, the boundaries in~\eqref{eq_d_ub} are inverted:
\begin{equation}
  \begin{aligned}
    \overline{d}_u  & = d_\parallel + d_\perp + \xi_1,                                                   \\
    \underline{d}_u & =
    \begin{cases}
      d_\parallel - d_\perp - \xi_1, & \|\mathbf{S}\| \cos\tau > -\mathbf{r}_\mathrm{c} \cdot \mathbf{S}   \\
      -\|\mathbf{S}\| - \xi_1,       & \|\mathbf{S}\| \cos\tau \le -\mathbf{r}_\mathrm{c} \cdot \mathbf{S}
    \end{cases} \\
                    & s.t. \quad \mathbf{r}_\mathrm{c} \cdot \mathbf{S} < 0
  \end{aligned}
  \label{eq_d_ubn}
\end{equation}

\subsection{Gradually Converged Interval Stabbing}

The boundaries of translation distance form $N$ intervals $\left\{ [\underline{d}_{li}, \overline{d}_{li}] \right\}_{i=1}^N$ and $\left\{ [\underline{d}_{ui}, \overline{d}_{ui}] \right\}_{i=1}^N$ respectively.
The maximum accumulated inlier weights correspond to the peak interval overlap, as shown in Fig.~\ref{fig_intstab}(a) and (b).
It can be estimated by interval stabbing algorithm~\cite{parrabustosGuaranteed2018} modified by weights:
1) Sort all interval boundaries by $d$, and assign the $i^{th}$ weight of left interval boundary to $+w_i$,
right boundary to $-w_i$ for integrating from left to right, as shown in Fig.~\ref{fig_intstab}(c);
2) compute cumulative weights along $d$-axis, and identify the global maximum in Fig.~\ref{fig_intstab}(d).

The upper and lower weights' bounds of the current branch in~\eqref{eq_stage1} are estimated using two sets of intervals:
\begin{equation}
  \begin{aligned}
    \overline{W}_1  & = \mathrm{IntervalStabbing} \left( \left\{ [\underline{d}_{ui}, \overline{d}_{ui}] \right\}_{i=1}^N \right) , \\
    \underline{W}_1 & = \mathrm{IntervalStabbing} \left( \left\{ [\underline{d}_{li}, \overline{d}_{li}] \right\}_{i=1}^N \right) ,
  \end{aligned}
  \label{eq_intstab}
\end{equation}
where $\mathrm{IntervalStabbing}(\cdot)$ returns RMQ of the integrated weights.
The inlier indices contributing to the peak are not stored during BnB search, but will be reselected based on the optimal rotation axis after search completion.

To accelerate BnB search and prefilter outliers, we implement a gradually converged strategy per branch node:
1) Compute threshold $W_\rho = \rho \overline{W}_1 + (1 - \rho)\underline{W}_1^{\ast}$ where $0 \le \rho < 1$,
$\overline{W}_1$ is the current upper bound, and $\underline{W}_1^{\ast}$ the global best lower bound;
2) Determine the left interval boundary $\underline{\hat{d}}_u$ and right boundary $\overline{\hat{d}}_u$ where accumulated weights exceed $W_\rho$, as shown in Fig.~\ref{fig_intstab}(d);
3) The possible translation distance range $[\underline{\hat{d}}_u, \overline{\hat{d}}_u]$ of current branch is inherited to the subbranches,
and prefilter the points by $\underline{\hat{d}}_u \le \mathbf{r}_\mathrm{c} \cdot \mathbf{S} \le \overline{\hat{d}}_u$.

\begin{figure}
  \centering
  \hfill
  \subfloat[]{\includegraphics[width=0.35\columnwidth]{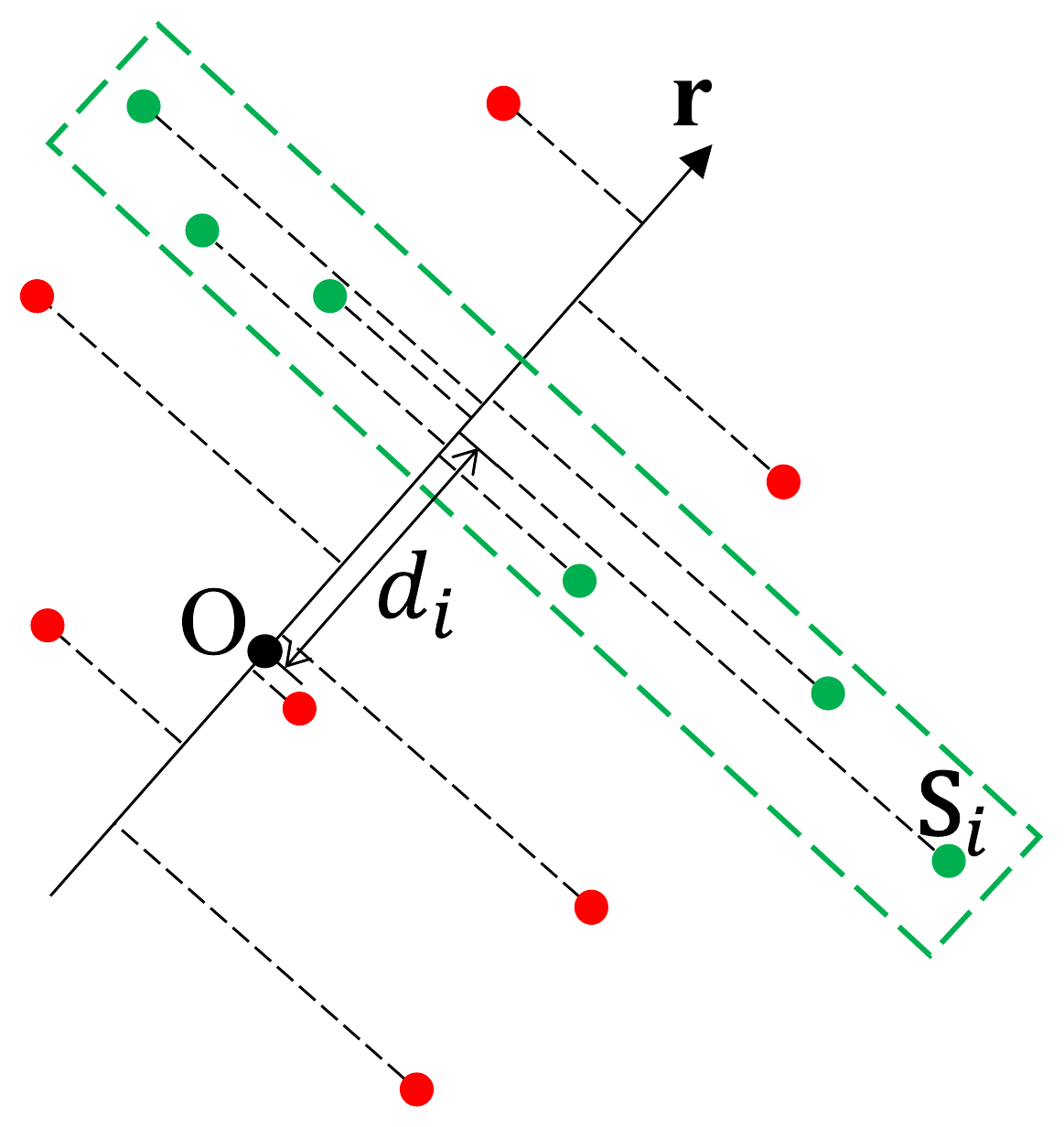}}
  \hfill
  \subfloat[]{\includegraphics[width=0.5\columnwidth]{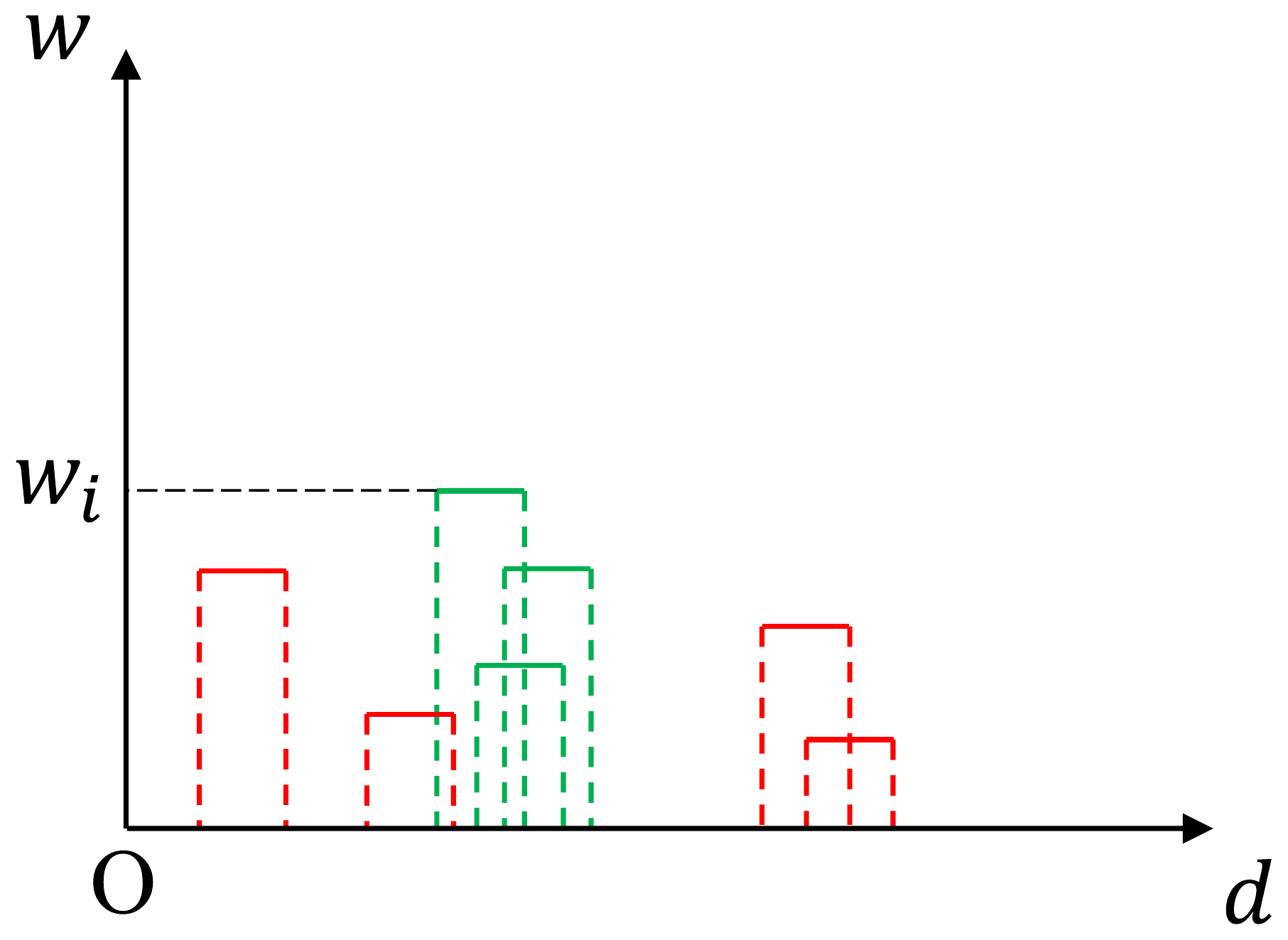}}
  \\
  \subfloat[]{\includegraphics[width=0.5\columnwidth]{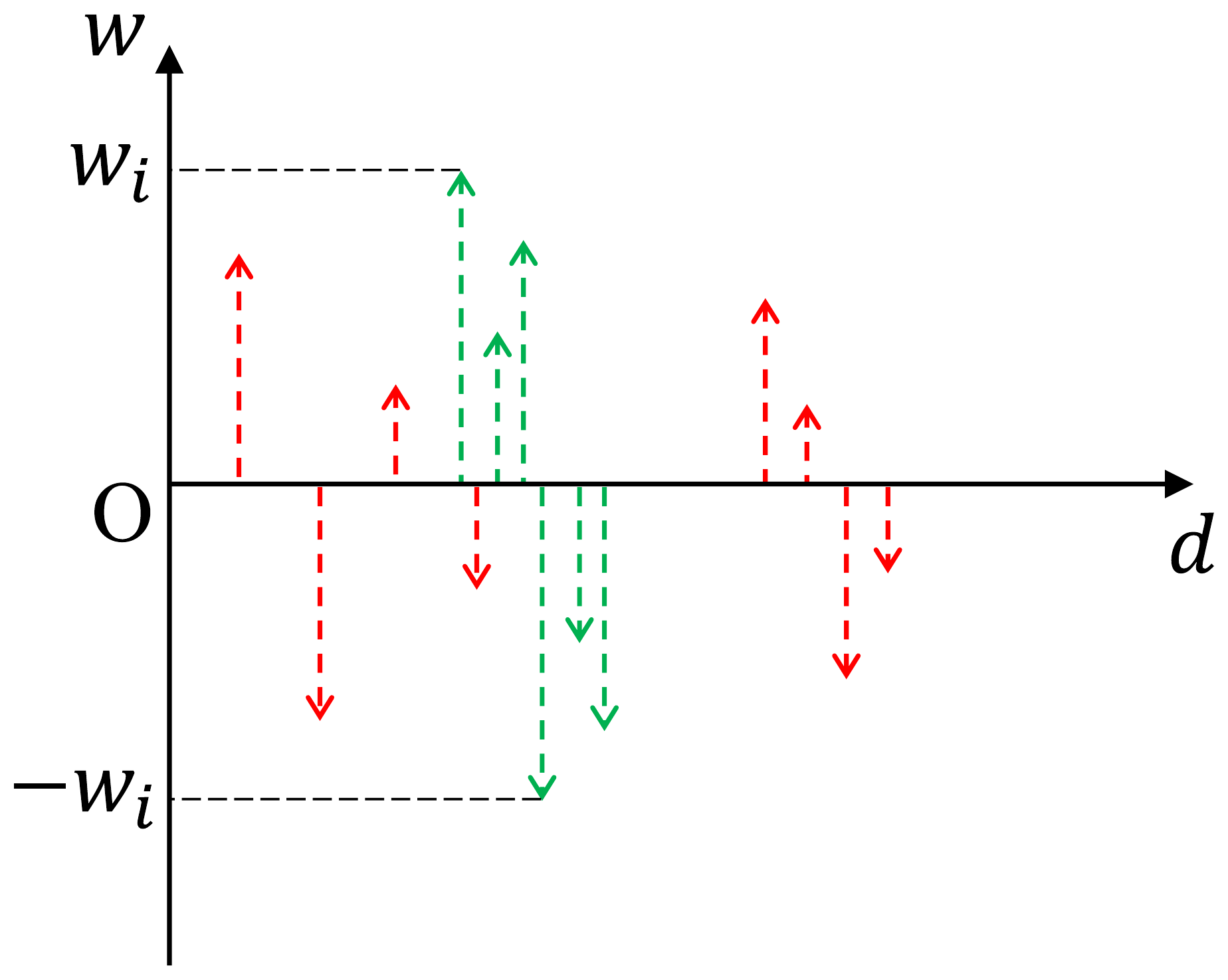}}
  \subfloat[]{\includegraphics[width=0.5\columnwidth]{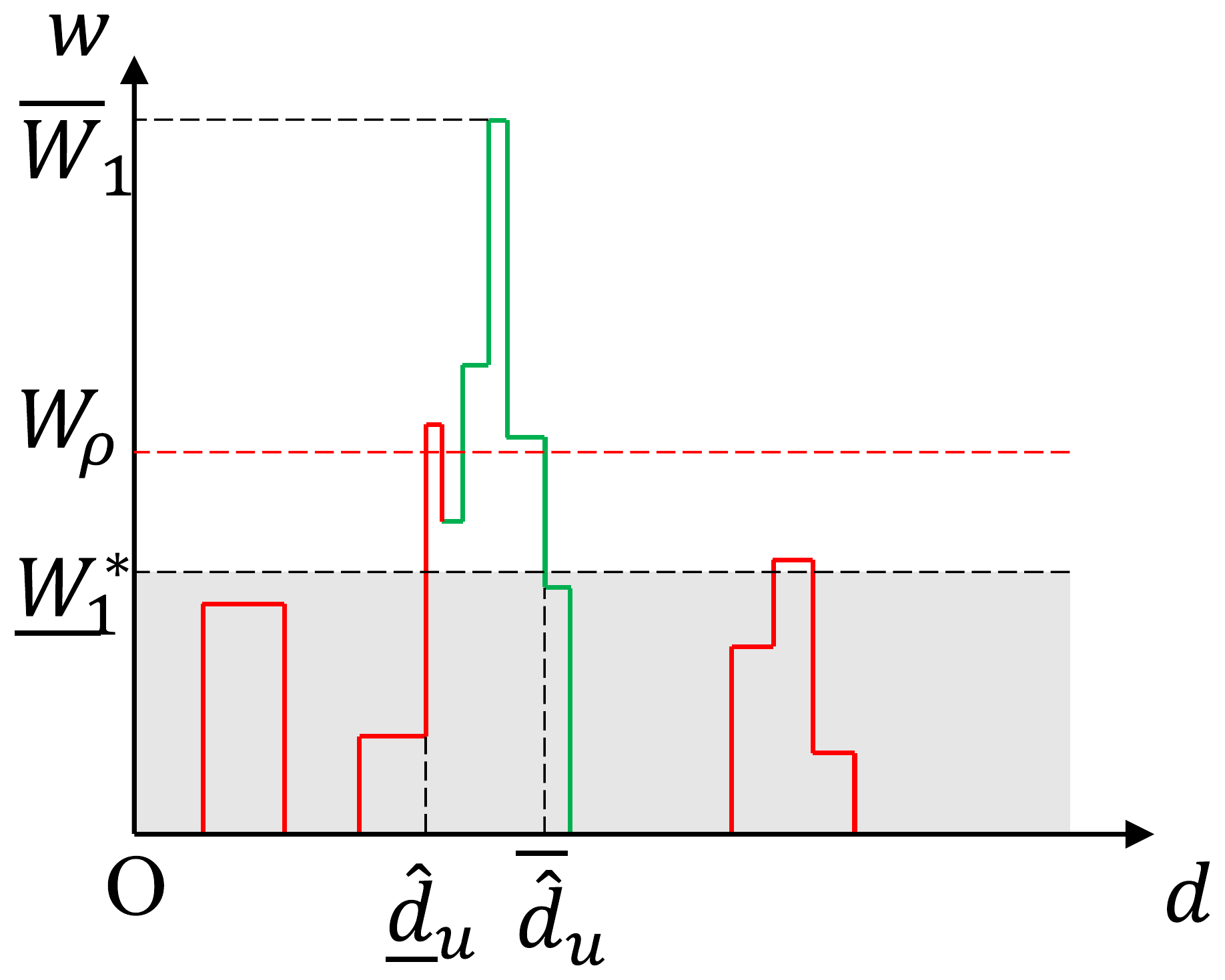}}
  \caption{1D RMQ solved by interval stabbing.
    (a) Difference vectors of correspondences projected onto rotation axis.
    (b) Estimated intervals $[\underline{d}, \overline{d}]$ and weights.
    (c) Integrating discretized left interval boundary with $+w_i$ weight and right with $-w_i$ weight.
    (d) Accumulated weights and maximum overlapping.}
  \label{fig_intstab}
\end{figure}

In the BnB search, the branch will be pruned in the following cases:
1) Current upper bound $\overline{W}_1$ is less than the best lower bound $\underline{W}_{1}^{\ast}$;
2) The current branch width $\overline{x}_f - \underline{x}_f$ or $\overline{y}_f - \underline{y}_f$ is less than the minimum branch width $\epsilon$;
3) The current width of upper lower bounds $\overline{W}_1 - \underline{W}_{1}$ is less than a bound threshold $\epsilon_b$.

\subsection{Post Top-k Plane Fitting}

Specifically, while the BnB search result for the rotation axis on the hemisphere yields the optimal solution for stage I,
this axis may not correspond to the global optimum for the rigid transformation.
We initialize the search by partitioning each cube face into four branches defined by corner coordinates $\left( 0, 0 \right)^\top $ and $\left( \pm 1, \pm 1 \right)^\top $.
BnB search is then executed independently for each branch.
The top-$k$ candidate rotation axes with highest $\underline{W}_1$ are selected as initial candidates for stage II.

These rotation axes are refined via plane fitting using filtered inliers.
The $j^{th}$ fitting plane of inliers $\{\mathbf{S}_i\}_{i=1}^{N_j}$ is estimated through eigenvalue decomposition of the symmetric covariance matrix $\mathbf{\Gamma}_S$:
\begin{subequations}
  \begin{align}
    \mathbf{S}_c      & = \frac{1}{N_j} \sum_{i=1}^{N_j} \mathbf{S}_i , \label{eq_planeA}               \\
    \mathbf{\Gamma}_S & = \frac{1}{N_j} \sum_{i=1}^{N_j}
    \left( \mathbf{S}_i - \mathbf{S}_c \right) \left( \mathbf{S}_i - \mathbf{S}_c \right)^\top
    = \mathbf{VDV}^\top , \label{eq_planeB}                                                             \\
    \mathbf{r}_j      & = \mathbf{v}_3 , \quad d_j = \mathbf{r}_j \cdot \mathbf{S}_c, \label{eq_planeC}
  \end{align}
\end{subequations}
where $\mathbf{S}_c$ is the centroid of inliers,
$\mathbf{V}=[\mathbf{v}_1, \mathbf{v}_2, \mathbf{v}_3]$ contains orthonormal eigenvectors,
and $\mathbf{D}=\mathrm{diag}(\lambda_1, \lambda_2, \lambda_3)$ with $\lambda_1 \ge \lambda_2 \ge \lambda_3$.
The rotation axis $\mathbf{r}_j$ corresponds to the minimal-variance direction $\mathbf{v}_3$, and $d_j$ is the projected centroid distance.

\section{Stage II: Rotation Angle Search}
\label{sec:stage2}

For each top-$k$ candidate rotation axis from stage I, we project the filtered correspondences onto the plane perpendicular to $\mathbf{r}$.
The problem in stage II then becomes a 2D rigid registration with 3 DoF (1 DoF for the rotation angle and 2 DoF for the translation vector).
We efficiently solve this by reducing the search to 1D rotation angle:

\begin{enumerate}
  \item{The 2D rigid transformation is a special case where the axial translation is zero in Chasles' theorem, which is equivalent to a rotation about a fixed point.}
  \item{The global optimum of 2D rigid transformation can be found in at most $O(\frac{1}{\epsilon} \cdot N \log N)$ time.}
\end{enumerate}

Time complexity analysis is detailed in Section~\ref{sec:time}.
As defined previously, the objective function of rotation about a point in stage II is
\begin{equation}
  \begin{aligned}
    \hat{\theta}, \hat{\mathbf{c}} & = \arg\max_{\theta, \mathbf{c}} W_2 \\
                                   & = \arg\max_{\theta, \mathbf{c}}
    \sum_{i=1}^N w_i \cdot \mathbb{I} \left(
    \left\| (\mathbf{q}_i - \mathbf{c}) - \mathbf{R}_\theta (\mathbf{p}_i - \mathbf{c}) \right\| \le \xi_2
    \right),
  \end{aligned}
  \label{eq_stage2}
\end{equation}
where $\mathbf{p}_i$ and $\mathbf{q}_i$ denote the 2D projections of $\mathbf{P}_i$ and $\mathbf{Q}_i$ onto the plane perpendicular to the rotation axis,
$\mathbf{R}_\theta$ represents the 2D rotation matrix parameterized by angle $\theta$,
$\xi_2^2 = 2\sigma^2 \chi_{2, 0.95}^2$ defines the noise threshold based on the distribution of distances between two independent 2D points under Gaussian noise ($\chi_{2, 0.95} \approx 2.448$),
and $\mathbf{c}$ is the rotation center.

Directly BnB searching over the 3-DoF parameter space in~\eqref{eq_stage2} is computationally inefficient.
To address this issue, we leverage geometric properties of 2D rigid transformations:
We reduce the search to the 1-DoF rotation angle $\theta$, while efficiently estimating the corresponding optimal rotation center $\mathbf{c}$ for each candidate angle using a sweep line algorithm accelerated by a segment tree.

\subsection{Rotation Center Determined by Angle}

\begin{figure}
  \centering
  \subfloat[]{\includegraphics[width=0.54\columnwidth]{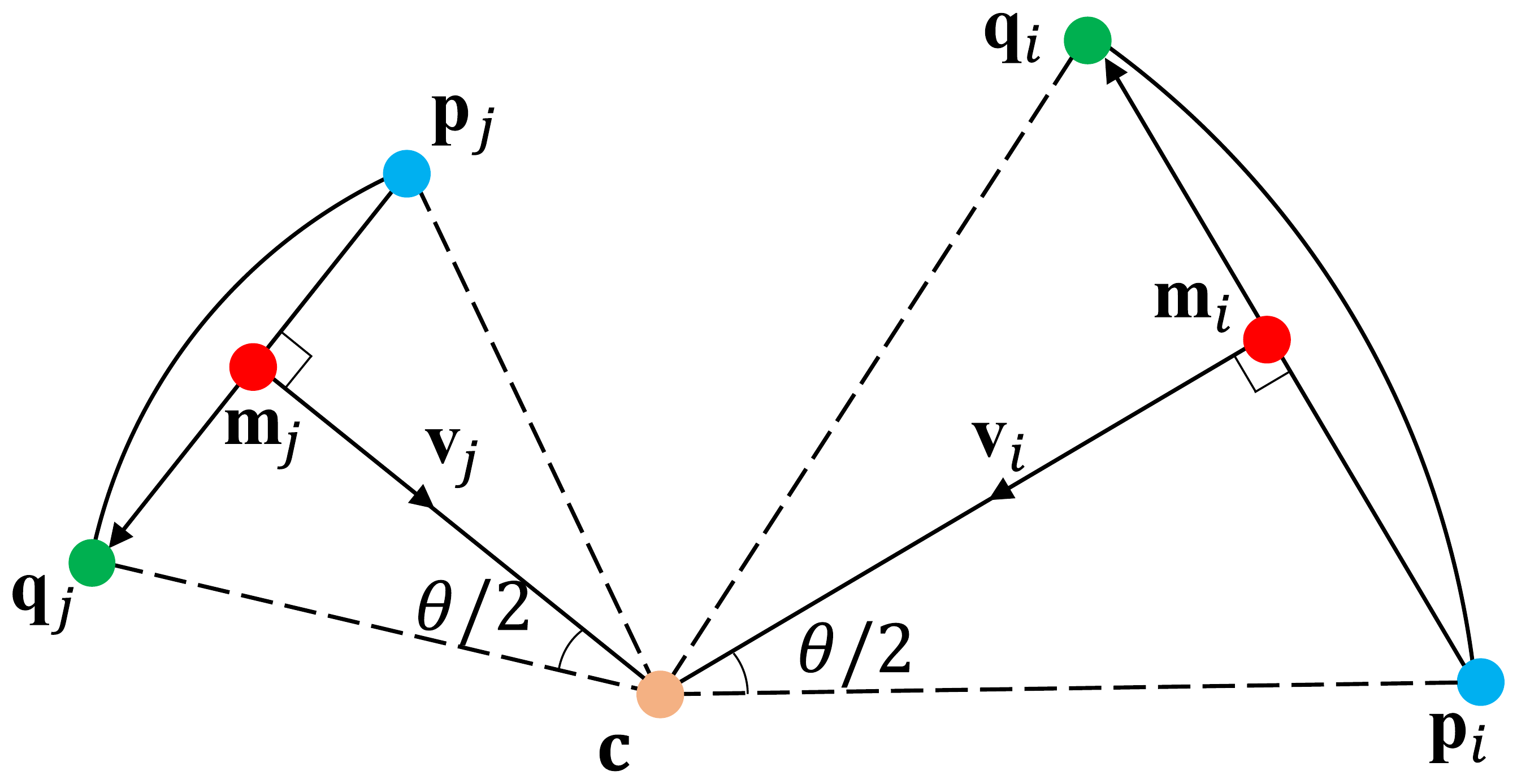}}\hfill
  \subfloat[]{\includegraphics[width=0.42\columnwidth]{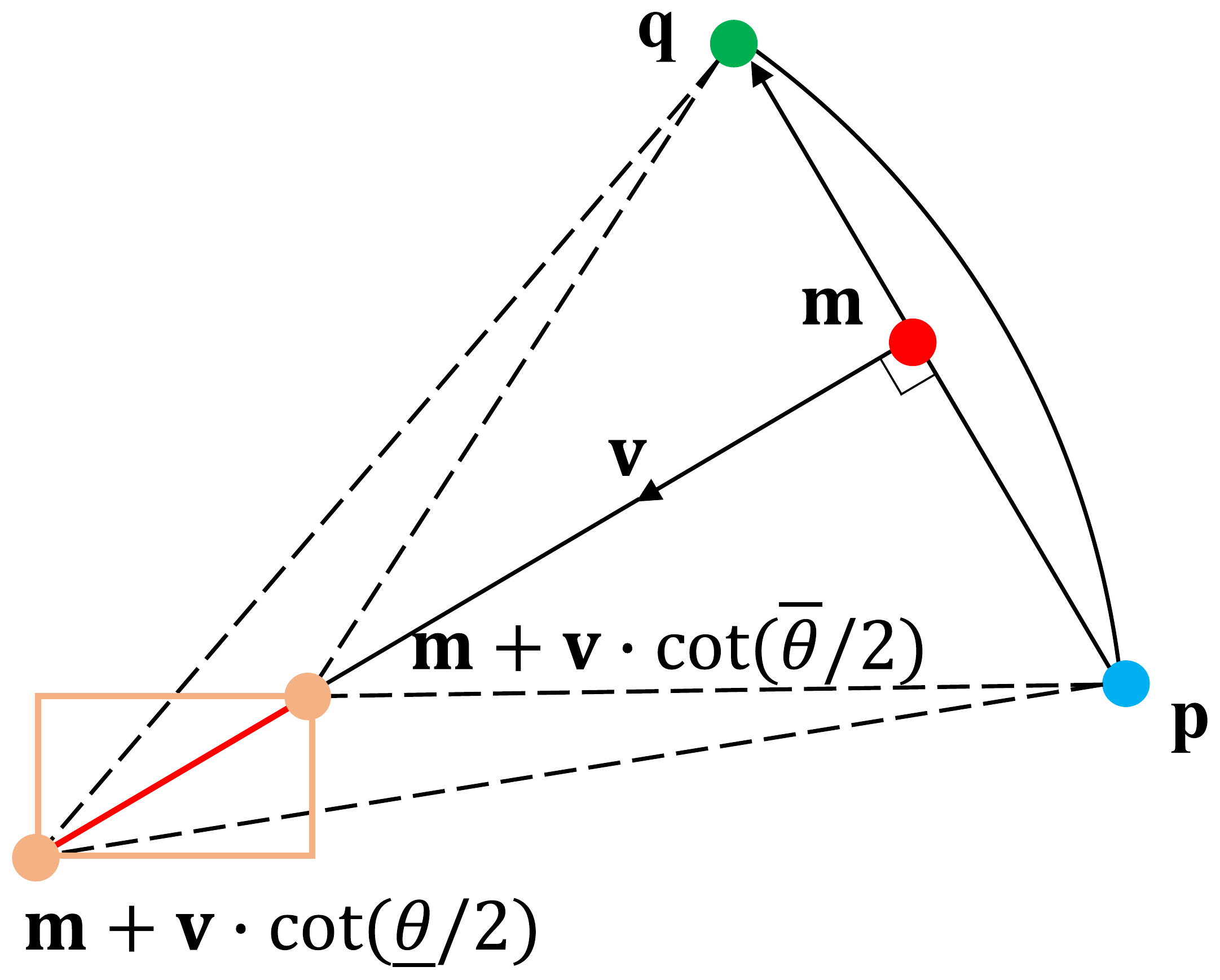}}
  \caption{Rigid transformation on the 2D plane.
    (a) Chords and arcs for rotation of $\theta$ angle about $\mathbf{c}$.
    (b) Rotation center determined by $[\underline{\theta}, \overline{\theta}]$ and correspondence.}
  \label{fig_rot2d}
\end{figure}

Given rotation angle $\theta$ and center $\mathbf{c}$, an inlier $\mathbf{p}$ moves along the arc to $\mathbf{q}$, as shown in Fig.~\ref{fig_rot2d}(a).
The chord midpoint $\mathbf{m}$ and the perpendicular vector $\mathbf{v}$ toward $\mathbf{c}$ are:
\begin{equation}
  \begin{aligned}
    \mathbf{m} & = \frac{1}{2} (\mathbf{q} + \mathbf{p}) ,                            \\
    \mathbf{v} & = \frac{1}{2} \mathbf{R}_{\frac{\pi}{2}} (\mathbf{q} - \mathbf{p}) ,
  \end{aligned}
  \label{eq_vecs2d}
\end{equation}
where $\mathbf{R}_{\frac{\pi}{2}}$ is the 2D rotation matrix of $90^\circ$.
The rotation center $\mathbf{c}$ for given angle $\theta$ and chord $\mathbf{p} \rightarrow \mathbf{q}$ is:
\begin{equation}
  \begin{aligned}
    \mathbf{c}(\theta) & = (x_\mathbf{c}(\theta), y_\mathbf{c}(\theta))^\top            \\
                       & = \mathbf{m} + \mathbf{v} \cot \left( \frac{\theta}{2} \right) \\
                       & = \frac{1}{2}
    \left[
      \begin{matrix}
        x_q + x_p - (y_q - y_p)\cot \left( \frac{\theta}{2} \right) \\
        y_q + y_p + (x_q - x_p)\cot \left( \frac{\theta}{2} \right)
      \end{matrix}
      \right] ,
  \end{aligned}
  \label{eq_rotcenter}
\end{equation}
where $\mathbf{p} = (x_p, y_p)^\top$ and $\mathbf{q} = (x_q, y_q)^\top$ are 2D coordinates.
Therefore, the range of rotation center $\mathbf{c}$ is determined by the range of angle $[\underline{\theta}, \overline{\theta}]$ and~\eqref{eq_rotcenter}, as shown in Fig.~\ref{fig_rot2d}(b).
Given the prior Gaussian noise $\sigma$ in each dimension of correspondences, the covariance matrix of $\mathbf{c}$ is derived as:
\begin{equation}
  \mathbf{\Gamma}_\mathbf{c} = \frac{1}{2} \csc^2 \left( \frac{\theta}{2} \right) \sigma^2 \mathbf{I},
  \label{eq_centerCov}
\end{equation}
where $\mathbf{I}$ is the identity matrix.
As a result that the noises of $\mathbf{c}$ in $x$ and $y$ directions are independent, and the prior noise threshold of the rotation center is
\begin{equation}
  \xi_\mathbf{c}^2(\theta) = \frac{1}{2} \csc^2 \left( \frac{\theta}{2} \right) \sigma^2 \chi_{2,0.95}^2.
  \label{eq_centerxi}
\end{equation}

The noise threshold constrains the possible rotation center $\mathbf{c}$ to a circular region centered at $\mathbf{c}(\theta)$ with radius $\xi_\mathbf{c}(\theta)$.
This geometric constraint is equivalent to the inlier condition in~\eqref{eq_stage2}, formalized by:
\begin{lemma}
  \label{lem_center}
  The rotation center $\mathbf{c}$ within the bounding area determined by $\mathbf{p}, \mathbf{q}, \theta$ and $\sigma$ satisfies the inequality in the indicator function of~\eqref{eq_stage2}.
\end{lemma}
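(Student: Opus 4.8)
The plan is to show that, for a fixed angle $\theta\neq 0$, the per-correspondence residual $\|(\mathbf{q}-\mathbf{c})-\mathbf{R}_\theta(\mathbf{p}-\mathbf{c})\|$ in the indicator of \eqref{eq_stage2} is an exact rescaling of the distance $\|\mathbf{c}-\mathbf{c}(\theta)\|$ between the candidate center $\mathbf{c}$ and the ideal center $\mathbf{c}(\theta)$ of \eqref{eq_rotcenter}, and then to check that the rescaling constant is calibrated so that the residual threshold $\xi_2$ matches exactly the center threshold $\xi_{\mathbf{c}}(\theta)$ of \eqref{eq_centerxi}. This will in fact prove the stronger statement that the bounding area (the disk of radius $\xi_{\mathbf{c}}(\theta)$ about $\mathbf{c}(\theta)$) is \emph{equal} to the set of centers satisfying the inlier inequality, of which the lemma is the forward direction.

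\noindent\textbf{Step 1 (rewrite the residual).} First I would expand $(\mathbf{q}-\mathbf{c})-\mathbf{R}_\theta(\mathbf{p}-\mathbf{c})=(\mathbf{q}-\mathbf{R}_\theta\mathbf{p})-(\mathbf{I}-\mathbf{R}_\theta)\mathbf{c}$. For $\theta\neq 0$ the $2\times 2$ matrix $\mathbf{I}-\mathbf{R}_\theta=\begin{pmatrix}1-\cos\theta & \sin\theta\\ -\sin\theta & 1-\cos\theta\end{pmatrix}$ is invertible, so setting $\mathbf{c}^\star=(\mathbf{I}-\mathbf{R}_\theta)^{-1}(\mathbf{q}-\mathbf{R}_\theta\mathbf{p})$ the residual becomes $(\mathbf{I}-\mathbf{R}_\theta)(\mathbf{c}^\star-\mathbf{c})$. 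A short computation using $\sin\theta/(1-\cos\theta)=\cot(\theta/2)$ shows $\mathbf{c}^\star$ coincides with the closed form $\mathbf{c}(\theta)=\mathbf{m}+\mathbf{v}\cot(\theta/2)$ of \eqref{eq_rotcenter}. \textbf{Step 2 (isometry up to scale).} Next, using the half-angle identities $1-\cos\theta=2\sin^2(\theta/2)$ and $\sin\theta=2\sin(\theta/2)\cos(\theta/2)$, I would factor $\mathbf{I}-\mathbf{R}_\theta=2\sin(\tfrac{\theta}{2})\,\mathbf{O}_\theta$ with $\mathbf{O}_\theta=\begin{pmatrix}\sin(\theta/2) & \cos(\theta/2)\\ -\cos(\theta/2) & \sin(\theta/2)\end{pmatrix}$ orthogonal; since $\mathbf{O}_\theta$ preserves norms, $\|(\mathbf{q}-\mathbf{c})-\mathbf{R}_\theta(\mathbf{p}-\mathbf{c})\|=2\,|\sin(\tfrac{\theta}{2})|\cdot\|\mathbf{c}-\mathbf{c}(\theta)\|$. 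Hence the inlier condition $\le\xi_2$ is equivalent to $\|\mathbf{c}-\mathbf{c}(\theta)\|\le\tfrac12\,|\csc(\tfrac{\theta}{2})|\,\xi_2$.

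\noindent\textbf{Step 3 (threshold calibration) and conclusion.} Finally I would substitute $\xi_2^2=2\sigma^2\chi_{2,0.95}^2$ into $\big(\tfrac12\csc(\tfrac{\theta}{2})\xi_2\big)^2=\tfrac12\csc^2(\tfrac{\theta}{2})\sigma^2\chi_{2,0.95}^2=\xi_{\mathbf{c}}^2(\theta)$ by \eqref{eq_centerxi} (consistently with the center covariance \eqref{eq_centerCov}). Therefore the disk $\{\mathbf{c}:\|\mathbf{c}-\mathbf{c}(\theta)\|\le\xi_{\mathbf{c}}(\theta)\}$ — the bounding area determined by $\mathbf{p},\mathbf{q},\theta,\sigma$ — is precisely the set of centers for which the indicator in \eqref{eq_stage2} evaluates to $1$, so any $\mathbf{c}$ inside it satisfies that inequality, which is the claim. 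The only delicate points are in Steps 1–2: verifying that the geometric closed form \eqref{eq_rotcenter} equals $(\mathbf{I}-\mathbf{R}_\theta)^{-1}(\mathbf{q}-\mathbf{R}_\theta\mathbf{p})$ and recognizing the scalar-times-rotation structure of $\mathbf{I}-\mathbf{R}_\theta$; everything downstream is bookkeeping of the chi-square constants, and the degenerate case $\theta=0$ (a pure translation, with no finite rotation center) is already excluded by Chasles' screw decomposition in Section~\ref{sec:problem}.
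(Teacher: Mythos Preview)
Your proposal is correct and follows essentially the same route as the paper: both arguments reduce the residual to $(\mathbf{I}-\mathbf{R}_\theta)\bigl(\mathbf{c}(\theta)-\mathbf{c}\bigr)$, extract the scalar factor $2|\sin(\theta/2)|$ from $\mathbf{I}-\mathbf{R}_\theta$ via an orthogonal factor (the paper does this by premultiplying with $\mathbf{R}_{\theta/2}^T$ to obtain $\mathbf{R}_{\theta/2}^T-\mathbf{R}_{\theta/2}$, you do it by the direct half-angle factorization), and then match the thresholds $\xi_2$ and $\xi_{\mathbf{c}}(\theta)$. Your version additionally makes the equivalence (not just the forward implication) explicit, which is a mild bonus.
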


\begin{proof}
  By converting the noise threshold $\xi_\mathbf{c}(\theta)$ in x and y dimensions to polar coordinates, the bounding area of $\mathbf{c}$ is rewritten as
  \begin{equation}
    \begin{aligned}
      \mathbf{c} & = \mathbf{c}(\theta) + \xi_\mathbf{c}(\theta)\mathbf{u}, \\
      s.t. \quad & \left\| \mathbf{u} \right\| \le 1 ,                      \\
    \end{aligned}
    \label{eq_centerbound}
  \end{equation}
  where $\mathbf{u}$ is an arbitrary 2D vector within the unit circle.
  The residual vector in~\eqref{eq_stage2} is
  \begin{equation}
    \mathbf{e} = (\mathbf{q} - \mathbf{c})  - \mathbf{R}_\theta (\mathbf{p} - \mathbf{c}) .
    \label{eq_res}
  \end{equation}

  The rotation center $\mathbf{c}(\theta)$ derived by geometric constraint in~\eqref{eq_rotcenter} follows that $(\mathbf{q} - \mathbf{c}(\theta))  - \mathbf{R}_\theta (\mathbf{p} - \mathbf{c}(\theta)) = \mathbf{0}$, and the norm of residual vector is
  \begin{subequations}
    \begin{align}
      \left\|\mathbf{e}\right\|^2 & = \left\| \xi_\mathbf{c}(\theta)
      \left( \mathbf{I} - \mathbf{R}_\theta \right)
      \mathbf{u} \right\|^2 \label{eq_resnorm_a}                                         \\
                                  & = \left\| \xi_\mathbf{c}(\theta)
      \left( \mathbf{R}_{\frac{\theta}{2}}^\top - \mathbf{R}_{\frac{\theta}{2}} \right)
      \mathbf{u} \right\|^2 \label{eq_resnorm_b}                                         \\
                                  & = \left\| \xi_\mathbf{c}(\theta)
      \left[
        \begin{matrix}
          0                                      & 2\sin \left( \frac{\theta}{2} \right) \\
          -2\sin \left( \frac{\theta}{2} \right) & 0
        \end{matrix}
        \right]
      \mathbf{u}\right\|^2 \label{eq_resnorm_c}                                          \\
                                  & = 2\sigma^2 \chi_{2, 0.95}^2 \left\|
      \left[
        \begin{matrix}
          0  & 1 \\
          -1 & 0
        \end{matrix}
        \right]
      \mathbf{u} \right\|^2 \label{eq_resnorm_d}                                         \\
                                  & = 2\sigma^2 \chi_{2, 0.95}^2
      \left\| \mathbf{R}_{\frac{\pi}{2}}^\top \mathbf{u} \right\|^2 \label{eq_resnorm_e} \\
                                  & \le 2\sigma^2 \chi_{2, 0.95}^2, \label{eq_resnorm_f}
    \end{align}
  \end{subequations}
  which is equivalent to the inequality with threshold $\xi_2$ in~\eqref{eq_stage2}.
  Therefore, we can solve the problem of rotation center $\mathbf{c}$ with maximum overlapping instead of~\eqref{eq_stage2}.
\end{proof}

For computational convenience in the sweep line algorithm in subsequent sections, we approximate the bounding circle as a square of equivalent area.
The half-edge length $\hat{\xi}_\mathbf{c}(\theta)$ satisfies:
\begin{equation}
  \hat{\xi}_\mathbf{c}^2(\theta) = \frac{\pi}{4} \xi_\mathbf{c}^2(\theta).
  \label{eq_centersqr}
\end{equation}

Combining~\eqref{eq_rotcenter} and~\eqref{eq_centerxi}, the bounding box corners for angle interval $[\underline{\theta}, \overline{\theta}]$ and correspondence $\{\mathbf{p}, \mathbf{q}\}$ are:
\begin{subequations} \label{eq_boxub}
  \begin{align}
    \underline{x}_{\mathbf{c}u} & = \min \left( x_\mathbf{c}(\underline{\theta}) - \hat{\xi}_\mathbf{c}(\underline{\theta}),
    x_\mathbf{c}(\overline{\theta}) - \hat{\xi}_\mathbf{c}(\overline{\theta}) \right) , \label{eq_boxub_a}                   \\
    \overline{x}_{\mathbf{c}u}  & = \max \left( x_\mathbf{c}(\underline{\theta}) + \hat{\xi}_\mathbf{c}(\underline{\theta}),
    x_\mathbf{c}(\overline{\theta}) + \hat{\xi}_\mathbf{c}(\overline{\theta}) \right) , \label{eq_boxub_b}                   \\
    \underline{y}_{\mathbf{c}u} & = \min \left( y_\mathbf{c}(\underline{\theta}) - \hat{\xi}_\mathbf{c}(\underline{\theta}),
    y_\mathbf{c}(\overline{\theta}) - \hat{\xi}_\mathbf{c}(\overline{\theta}) \right) , \label{eq_boxub_c}                   \\
    \overline{y}_{\mathbf{c}u}  & = \max \left( y_\mathbf{c}(\underline{\theta}) + \hat{\xi}_\mathbf{c}(\underline{\theta}),
    y_\mathbf{c}(\overline{\theta}) + \hat{\xi}_\mathbf{c}(\overline{\theta}) \right) . \label{eq_boxub_d}
  \end{align}
\end{subequations}

Similar to stage I, we compute a bounding box centered at $\theta_c = (\underline{\theta} + \overline{\theta}) / 2$:
\begin{subequations} \label{eq_boxlb}
  \begin{align}
    \underline{x}_{\mathbf{c}l} & = x_\mathbf{c}(\theta_c) - \hat{\xi}_\mathbf{c}(\theta_c) , \label{eq_boxlb_a} \\
    \overline{x}_{\mathbf{c}l}  & = x_\mathbf{c}(\theta_c) + \hat{\xi}_\mathbf{c}(\theta_c) , \label{eq_boxlb_b} \\
    \underline{y}_{\mathbf{c}l} & = y_\mathbf{c}(\theta_c) - \hat{\xi}_\mathbf{c}(\theta_c) , \label{eq_boxlb_c} \\
    \overline{y}_{\mathbf{c}l}  & = y_\mathbf{c}(\theta_c) + \hat{\xi}_\mathbf{c}(\theta_c) . \label{eq_boxlb_d}
  \end{align}
\end{subequations}

For each angle interval $[\underline{\theta}, \overline{\theta}]$, the rectangles with $(\underline{x}_{\mathbf{c}u}, \underline{y}_{\mathbf{c}u}, \overline{x}_{\mathbf{c}u}, \overline{y}_{\mathbf{c}u})$ are used to estimate the upper bound $\overline{W}_2$,
and the rectangles with $(\underline{x}_{\mathbf{c}l}, \underline{y}_{\mathbf{c}l}, \overline{x}_{\mathbf{c}l}, \overline{y}_{\mathbf{c}l})$ are used to estimate the lower bound $\underline{W}_2$.

\subsection{Maximum Overlapping via Sweep Line Algorithm}

At each rotation angle interval branch $[\underline{\theta}, \overline{\theta}]$, these axis-aligned rectangles form geometric intersections of Klee's rectangle problems.
We compute maximum weighted overlap using sweep line algorithm~\cite{bentleyAlgorithms1979} with segment tree, as shown in Fig.~\ref{fig_sweepline}.

\subsubsection{Segments discretization}

The rectangles in~\eqref{eq_boxub} and~\eqref{eq_boxlb} are discretized into vertical segments of left and right edges $(\underline{x}, \underline{y}, \overline{y}, w)$ and $(\overline{x}, \underline{y}, \overline{y}, -w)$, respectively, where $w$ is the rectangle weight.
The endpoints of segments are sorted by $y$-coordinate, and we discretize them by sorted indices from bottom to top:
\begin{equation}
  \begin{aligned}
    \mathbf{s}_l & = \left(\underline{x}, y_{l}, y_{r}, w\right) , \\
    \mathbf{s}_r & = \left(\overline{x}, y_{l}, y_{r}, -w\right) , \\
    s.t. \quad 1 & \le y_{l} < y_{r} \le 2N,
  \end{aligned}
  \label{eq_seg}
\end{equation}
where $\mathbf{s}_l$ and $\mathbf{s}_r$ are the left and right edges of rectangles, $y_l$ and $y_r$ are sorted integer indices of the $y$-coordinate.
These segments are then sorted by $x$-coordinate for traversing from left to right in sweep line algorithm.

\begin{algorithm}[h]
  \caption{Segment tree example for RMQ.}\label{alg:segtree}
  \textbf{Require}: Node $\mathbf{n}$ in the segment tree, segment endpoints $(y_l, y_r)$, and weight $w$
  \begin{algorithmic}[1]
    \STATE {\textsc{BUILD}}$(\mathbf{n}, y_l, y_r)$
    \STATE \hspace{0.5cm}$\mathbf{n}.y_l \leftarrow y_l$, $\mathbf{n}.y_r \leftarrow y_r$
    \STATE \hspace{0.5cm}$\mathbf{n}\mathrm{.val} \leftarrow 0$, $\mathbf{n}\mathrm{.lazy} \leftarrow 0$
    \STATE \hspace{0.5cm}\textbf{if} $y_l = y_r$ \textbf{then}
    \STATE \hspace{1.0cm}\textbf{return}
    \STATE \hspace{0.5cm}\textbf{end if}
    \STATE \hspace{0.5cm}$\mathrm{mid} \leftarrow \left\lfloor (y_l + y_r)/2 \right\rfloor$
    \STATE \hspace{0.5cm}{\textsc{BUILD}}$(\mathbf{n}\mathrm{.left}, y_l, \mathrm{mid})$
    \STATE \hspace{0.5cm}{\textsc{BUILD}}$(\mathbf{n}\mathrm{.right}, \mathrm{mid} + 1, y_r)$
    \STATE
    \STATE {\textsc{PUSHDOWN}}$(\mathbf{n})$
    \STATE \hspace{0.5cm}$ \mathbf{n}\mathrm{.left.val} \leftarrow \mathbf{n}\mathrm{.left.val} + \mathbf{n}\mathrm{.lazy}  $
    \STATE \hspace{0.5cm}$ \mathbf{n}\mathrm{.left.lazy} \leftarrow \mathbf{n}\mathrm{.left.lazy} + \mathbf{n}\mathrm{.lazy}  $
    \STATE \hspace{0.5cm}$ \mathbf{n}\mathrm{.right.val} \leftarrow \mathbf{n}\mathrm{.right.val} + \mathbf{n}\mathrm{.lazy}  $
    \STATE \hspace{0.5cm}$ \mathbf{n}\mathrm{.right.lazy} \leftarrow \mathbf{n}\mathrm{.right.lazy} + \mathbf{n}\mathrm{.lazy}  $
    \STATE \hspace{0.5cm}$ \mathbf{n}\mathrm{.lazy} \leftarrow 0  $
    \STATE
    \STATE {\textsc{PUSHUP}}$(\mathbf{n})$
    \STATE \hspace{0.5cm}$ \mathbf{n}\mathrm{.val} \leftarrow \max(\mathbf{n}\mathrm{.left.val} , \mathbf{n}\mathrm{.right.val})  $
    \STATE
    \STATE {\textsc{UPDATE}}$(\mathbf{n}, y_l, y_r, w)$
    \STATE \hspace{0.5cm}\textbf{if} $\mathbf{n}.y_l \ge y_l$ \textbf{and} $\mathbf{n}.y_r \le y_r$ \textbf{then}
    \STATE \hspace{1.0cm} $ \mathbf{n}\mathrm{.val} \leftarrow \mathbf{n}\mathrm{.val} + w $
    \STATE \hspace{1.0cm} $ \mathbf{n}\mathrm{.lazy} \leftarrow \mathbf{n}\mathrm{.lazy} + w $
    \STATE \hspace{1.0cm} \textbf{return}
    \STATE \hspace{0.5cm}\textsc{PUSHDOWN}$(\mathbf{n})$
    \STATE \hspace{0.5cm}$\mathrm{mid} \leftarrow \left\lfloor (\mathbf{n}.y_l + \mathbf{n}.y_r) / 2 \right\rfloor $
    \STATE \hspace{0.5cm}\textbf{if} $(y_l \le \mathrm{mid})$ \textbf{then}
    \STATE \hspace{1.0cm}\textsc{UPDATE}$(\mathbf{n}\mathrm{.left}, y_l, y_r, w)$
    \STATE \hspace{0.5cm}\textbf{end if}
    \STATE \hspace{0.5cm}\textbf{if} $(y_r > \mathrm{mid})$ \textbf{then}
    \STATE \hspace{1.0cm}\textsc{UPDATE}$(\mathbf{n}\mathrm{.right}, y_l, y_r, w)$
    \STATE \hspace{0.5cm}\textbf{end if}
    \STATE \hspace{0.5cm}\textsc{PUSHUP}$(\mathbf{n})$
  \end{algorithmic}
\end{algorithm}

\subsubsection{Segment tree with lazy propagation}

We present an example implementation of a segment tree spanning integer indices $[1, 2N]$ that maintains the maximum overlapping of weighted rectangles, as shown in Algorithm~\ref{alg:segtree}.
Each node of stores the sorted indices of segment endpoints $[y_l, y_r]$ in~\eqref{eq_seg}, current maximum weighted overlap ``$\mathrm{val}$'', and lazy tag ``$\mathrm{lazy}$''.
During sweep line traversal, lazy propagation accelerates overlap updates by deferring child node modifications until necessary.
This reduces update complexity to $O(\log N)$ per segment, as shown in Fig.~\ref{fig_sweepline}(b).

\begin{figure}
  \centering
  \subfloat[]{\includegraphics[width=0.5\columnwidth]{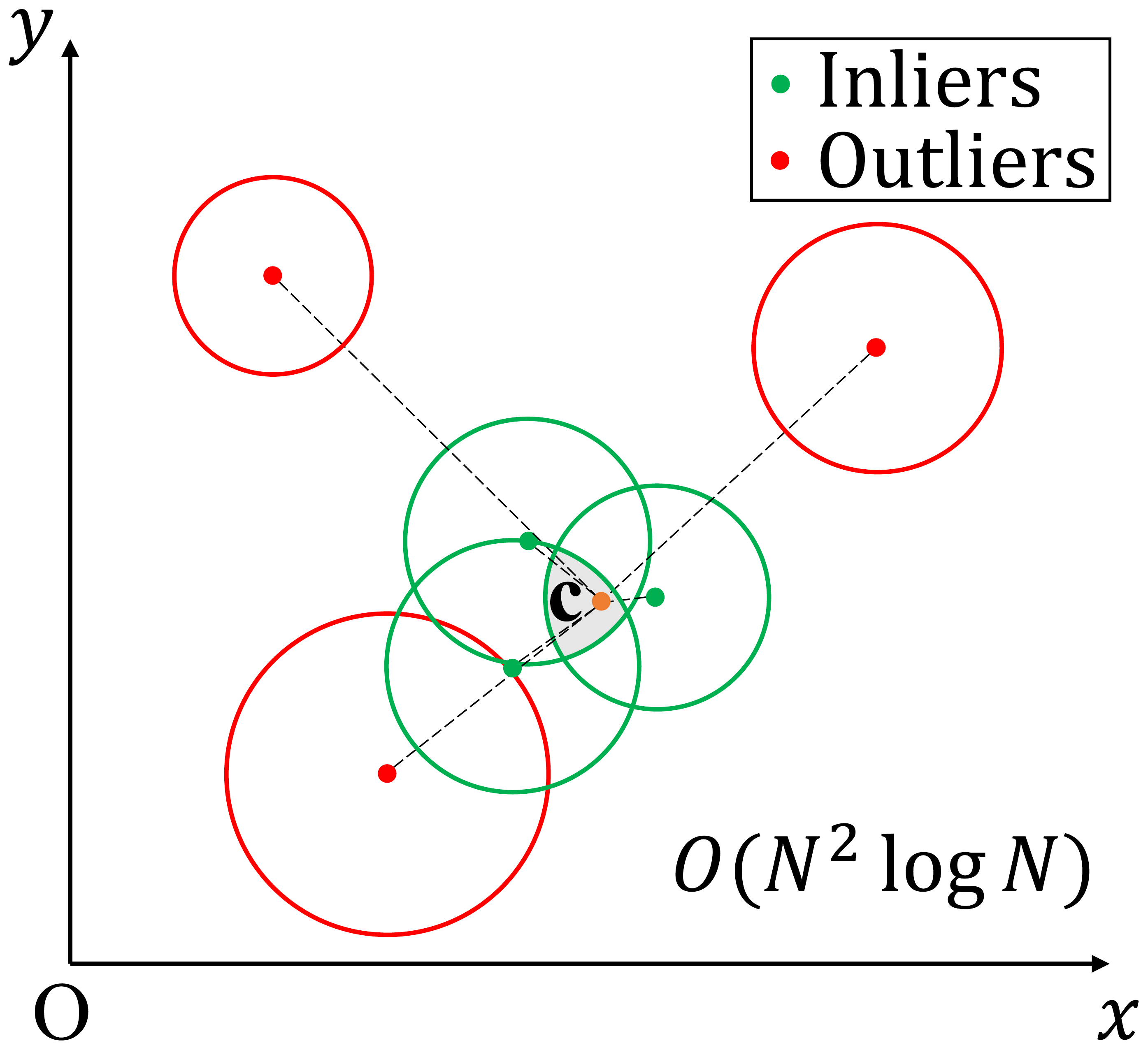}}
  \subfloat[]{\includegraphics[width=0.5\columnwidth]{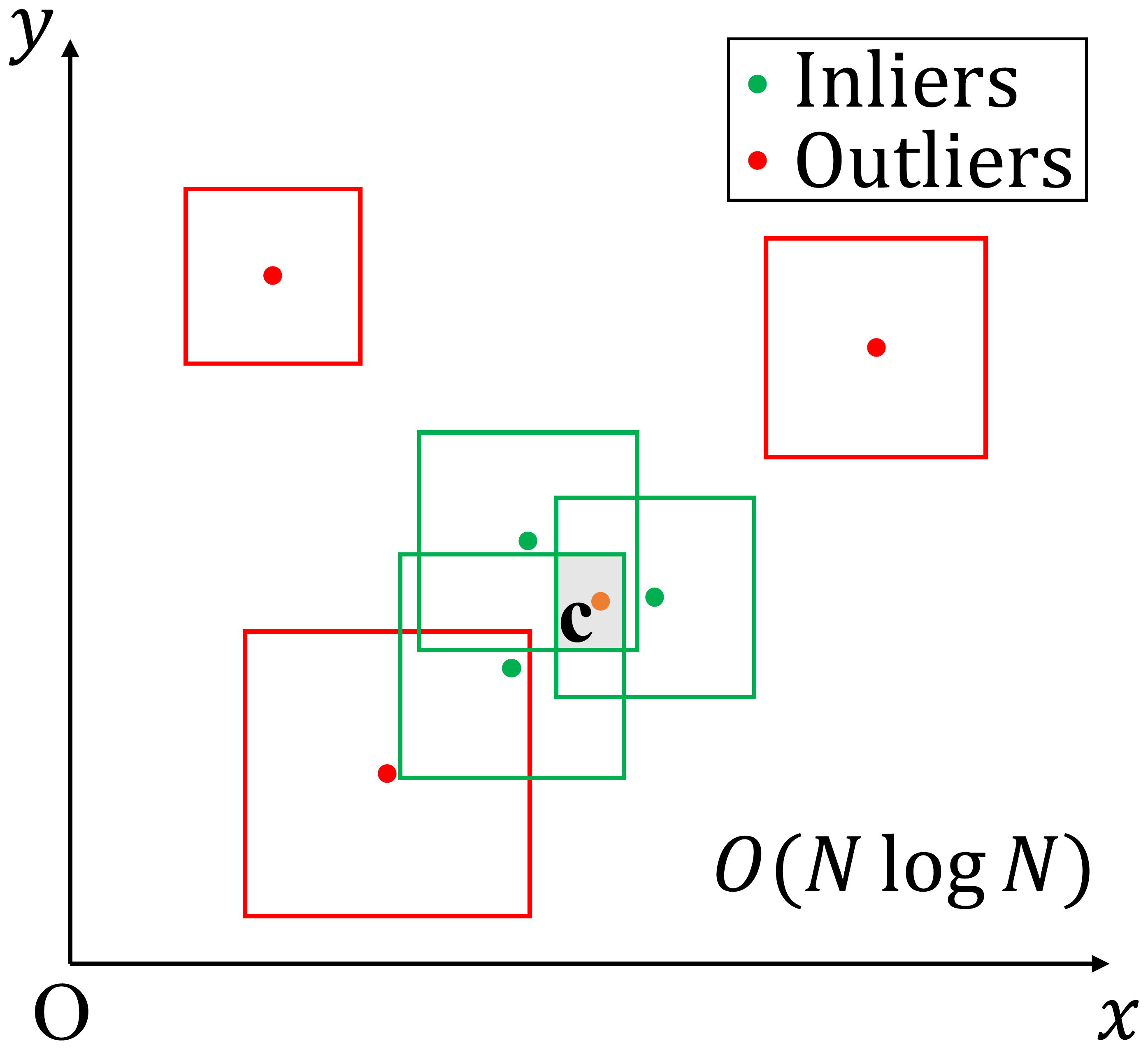}}
  \\
  \subfloat[]{\includegraphics[width=0.5\columnwidth]{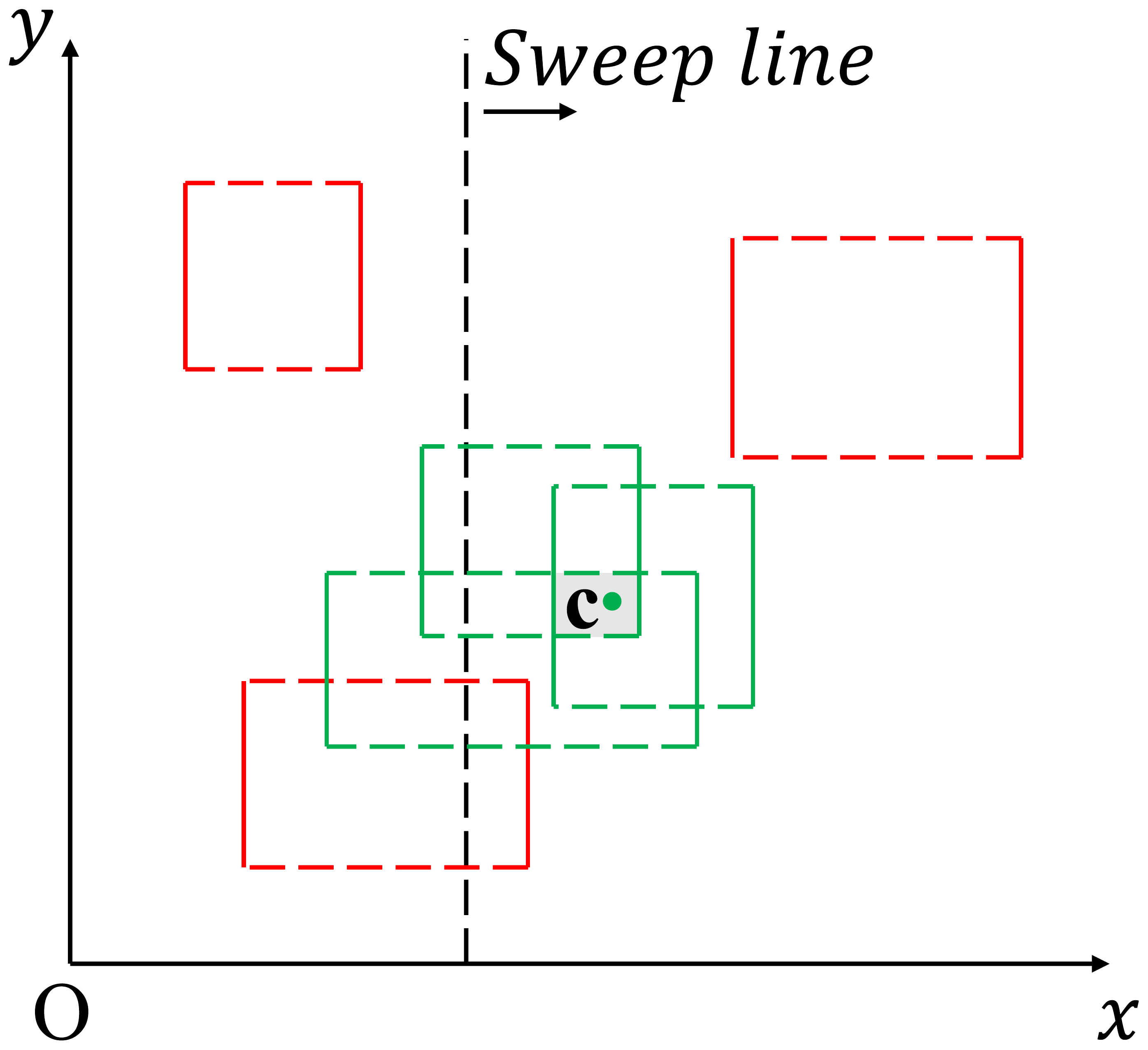}}
  \subfloat[]{\includegraphics[width=0.5\columnwidth]{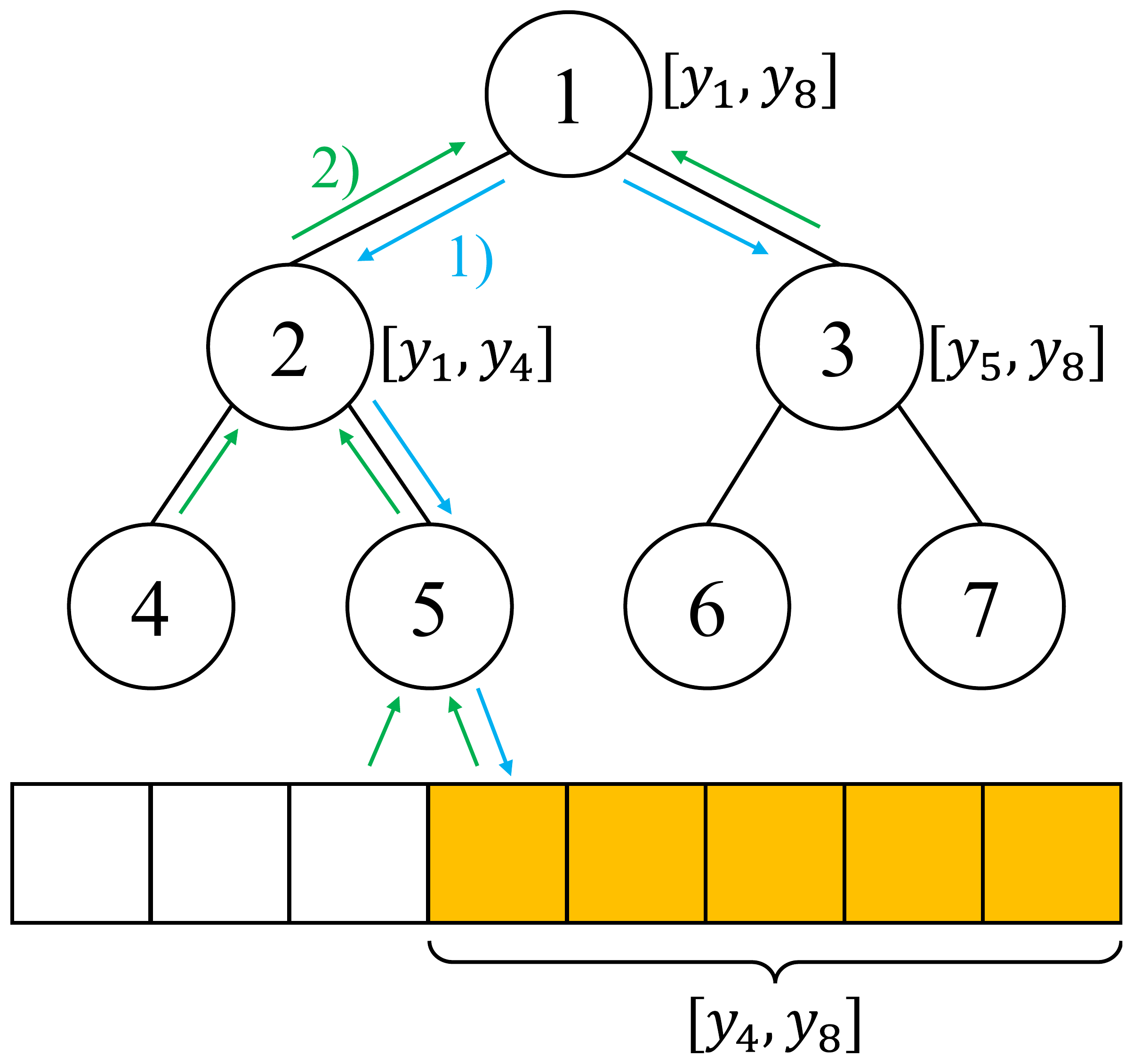}}
  \caption{Rotation center search using sweep line algorithm with segment tree.
    (a) 2D RMQ with circular bounding circles.
    (b) 2D RMQ with axis-aligned rectangles can be solved in $O(N \log N)$ time.
    (c) Sweep line from left to right for traversing the rectangles.
    (d) Add a new segment to the tree and update the maximum overlapping within $O(\log N)$ time
    (e.g. a segment $[y_4, y_8]$ is added and only the $1 \rightarrow 2 \rightarrow 5 \rightarrow y_4$ and $1 \rightarrow 3$ paths are updated).}
  \label{fig_sweepline}
\end{figure}

The root node of the segment tree maintains the maximum weighted overlap at the current sweep line position.
Querying this root node provides the maximum swept overlap in $O(1)$ time per iteration.
The complete sweep line procedure for maximum overlap of rectangles is detailed in Algorithm~\ref{alg:sweepline}.

For the current branch in~\eqref{eq_stage2}, we estimate the upper and lower weights' bounds via the two rectangle sets:
\begin{equation}
  \begin{aligned}
    \overline{W}_2  & = \mathrm{SweepLine} \left( \left\{
    \left( \underline{x}_{\mathbf{c}u},
    \underline{y}_{\mathbf{c}u},
    \overline{x}_{\mathbf{c}u},
    \overline{y}_{\mathbf{c}u} \right)
    \right\}_{i=1}^N \right) ,                            \\
    \underline{W}_2 & = \mathrm{SweepLine} \left( \left\{
    \left( \underline{x}_{\mathbf{c}l},
    \underline{y}_{\mathbf{c}l},
    \overline{x}_{\mathbf{c}l},
    \overline{y}_{\mathbf{c}l} \right)
    \right\}_{i=1}^N \right),
  \end{aligned}
  \label{eq_sweep}
\end{equation}
where $\mathrm{SweepLine}(\cdot)$ computes the maximum rectangle overlap in Algorithm~\ref{alg:sweepline}.
The pruning strategy of BnB search follows stage I in Sec.~\ref{sec:stage1}.
The searching dimension in stage II is compressed to 1D by the proposed RMQ equivalency, as shown in Fig.~\ref{fig_anglebnb}.

\begin{figure}
  \centering
  \includegraphics[width=\columnwidth]{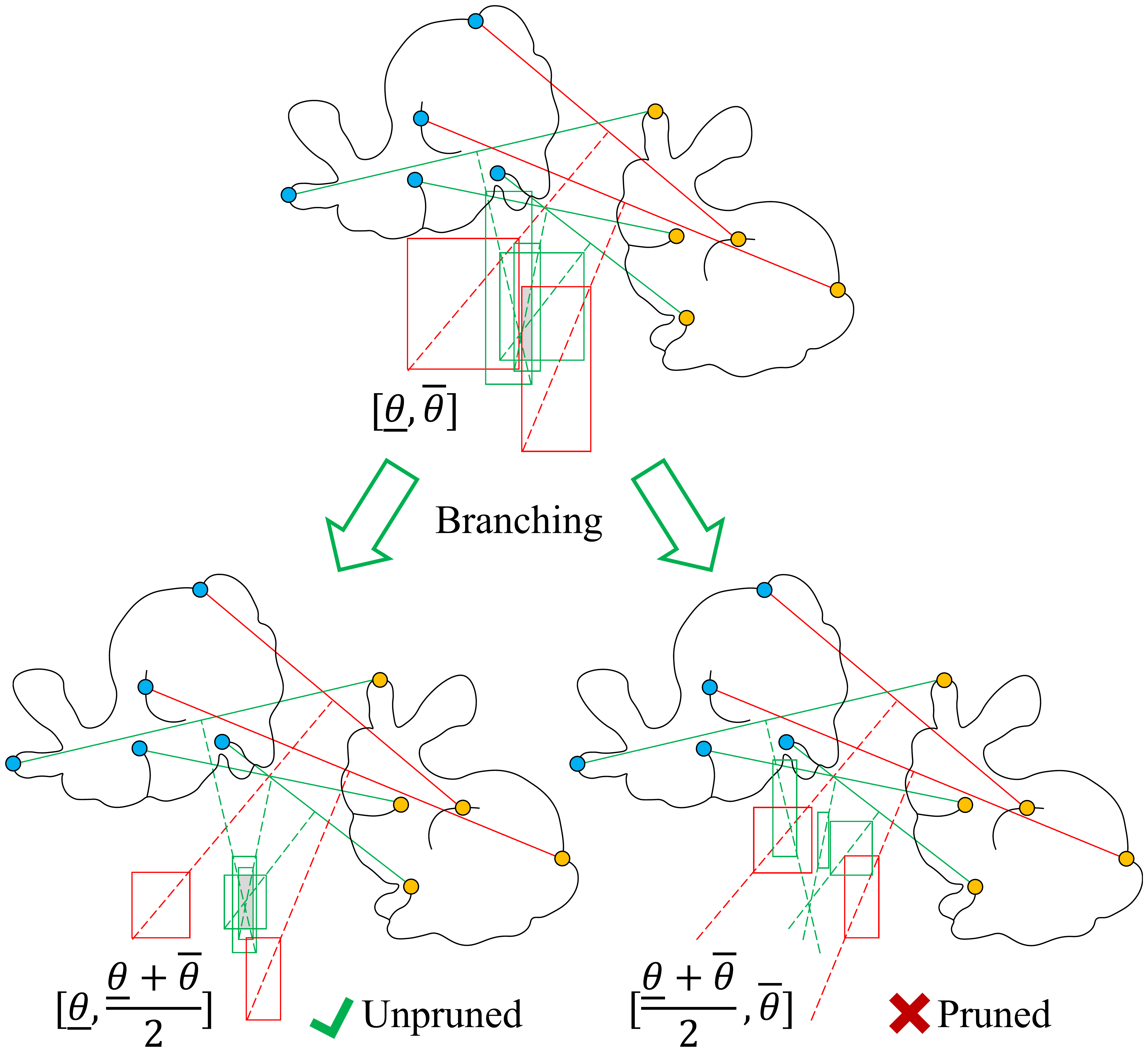}
  \caption{Branching strategy for the 1D rotation angle interval $[\underline{\theta}, \overline{\theta}]$.
    Green lines denote inlier correspondences, while red lines indicate outliers.
    After branching of the current node, sub-branches with low upper bounds are pruned.}
  \label{fig_anglebnb}
\end{figure}

\begin{algorithm}[h]
  \caption{Solving 2D RMQ by sweep line algorithm.}\label{alg:sweepline}
  \textbf{Input}: Bounding boxes $\{ (\underline{x}_i, \underline{y}_i, \overline{x}_i, \overline{y}_i) \}_{i=1}^N $, weights of correspondences $\{w_i\}_{i=1}^N$    \\
  \textbf{Output}: Maximum overlapping of weighted rectangles $W_2$
  \begin{algorithmic}[1]
    \STATE Sort $\{\underline{y}_i, \overline{y}_i\}_{i=1}^N$ to indices $\{y_{li}, y_{ri}\}_{i=1}^N$
    \STATE Initialize segments $\left\{\mathbf{s}_i\right\}_{i=1}^{2N}$
    \FOR{$i \leftarrow 1$ \textbf{to} $N$}
    \STATE $\mathbf{s}_{2i-1} \leftarrow (\underline{x}_i, y_{li}, y_{ri}, w_i)$
    \STATE $\mathbf{s}_{2i} \leftarrow (\overline{x}_i, y_{li}, y_{ri}, -w_i)$
    \ENDFOR
    \STATE Sort $\left\{\mathbf{s}_i\right\}_{i=1}^{2N}$ by $x$-coordinate
    \STATE Initialize segment tree $\mathbf{T}$, $W_2 \leftarrow 0$
    \STATE BUILD($\mathbf{T.root}, 1, 2N$)
    \FOR{$i \leftarrow 1$ \textbf{to} $2N$}
    \STATE \textsc{UPDATE}$(\mathbf{T.root}, \mathbf{s}_i.y_l, \mathbf{s}_i.y_r, \mathbf{s}_i.w)$
    \STATE \textbf{if} {$\mathbf{T.root}\mathrm{.val} > W_2$} \textbf{then}
    \STATE \hspace{0.5cm} $W_2 \leftarrow \mathbf{T.root}\mathrm{.val}$
    \STATE \textbf{end if}
    \ENDFOR
  \end{algorithmic}
\end{algorithm}

\subsection{Degeneration Induced by Translation-only Problem}

When rotation angle $\theta \in [-\epsilon/2, \epsilon/2]$, the rotation center $\mathbf{C}$ in~\eqref{eq_chasles_params_d} and noise threshold $\xi_\mathbf{c}(\theta)$ diverge, requiring specialized handling.

Under approximation of small rotation angle that $\mathbf{R} \approx \mathbf{I}$, all rotation axes in stage I are correct.
The points are filtered by difference vectors $\mathbf{q}_i - \mathbf{p}_i$ on the 2D plane perpendicular to the rotation axes,
where inliers should have similar difference vectors, as shown in Fig.~\ref{fig_rottrans}.
Due to the minimum branch width $\epsilon$, the arc can be approximated as a straight line perpendicular to $\mathbf{p}_i$.
Therefore, the corners of the bounding boxes are approximated using $\xi_2$ from~\eqref{eq_stage2} and~\eqref{eq_centersqr}.

\begin{figure}
  \centering
  \includegraphics[width=0.6\columnwidth]{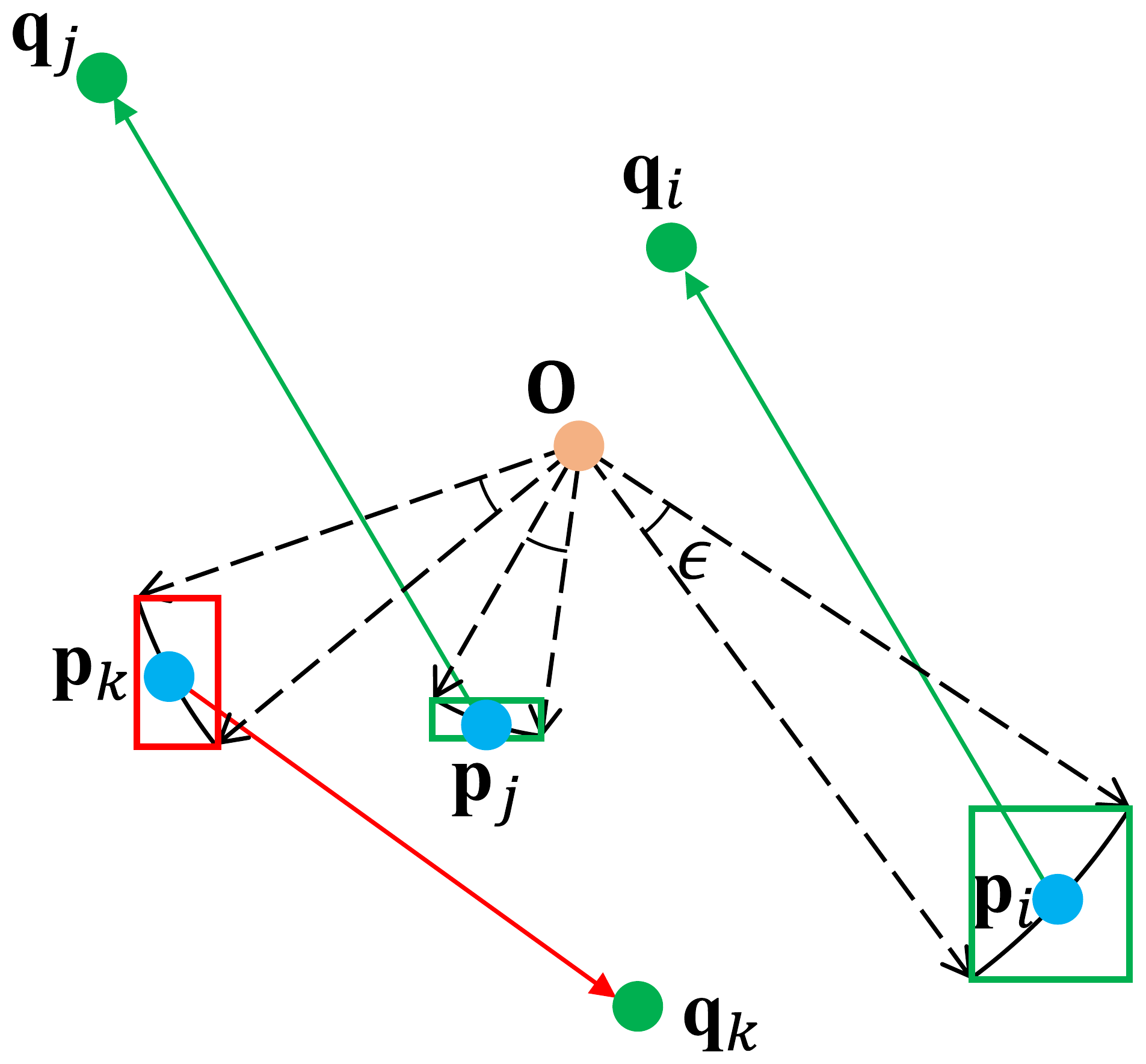}
  \caption{Translation-only problem with zero rotation angle and the bounding boxes.}
  \label{fig_rottrans}
\end{figure}

Before rotation search, Algorithm~\ref{alg:sweepline} solves this case as well, yielding maximum overlap as initial lower bound $\underline{W}_2^*$ for stage II BnB.

\subsection{Post-Refinement with IRLS}

After the two-stage BnB search, we refine the rigid transformation using weighted SVD on inliers\cite{choyDeep2020} via iterative reweighted least squares (IRLS).
The weights $\hat{w}$ are assigned using Tukey's biweight kernel\cite{demenezesreview2021}:
\begin{equation}
  \hat{w} =
  \begin{cases}
    \left( 1 - \dfrac{e_r^2}{c^2} \right)^2, & e_r < c    \\
    0,                                       & e_r \ge c,
  \end{cases}
  \label{eq_tukey}
\end{equation}
where $e_r$ is the residual error, $c = 4.685 \times \mathrm{MAD}$, and $\mathrm{MAD}$ is the median absolute deviation of inlier residual errors.

Adopting the scale-annealing estimator\cite{liPoint2021, shiRANSAC2024}, we filter inliers by $|e_r| \le \mu_i$ per iteration.
The threshold updates with annealing factor $\lambda = 0.5$ in the IRLS iteration as:
\begin{equation}
  \mu_{i+1} = (1 - \lambda) \mathrm{MAD} + \lambda \mu_i .
  \label{eq_anneal}
\end{equation}

The initial threshold $\mu_1 = \xi$ corresponds to~\eqref{eq_problem}, with maximum number of iterations set to 5.

\subsection{Subproblem of 4-DoF Registration}
\label{sec:4dof}

According to Fig.~\ref{fig_framework}, 4-DoF registration with a known rotation axis can be viewed as a constrained subproblem of the original 6-DoF rigid registration.
Under this setting, the proposed framework admits two alternative decomposition strategies, depending on the order in which the remaining DoF are resolved:

\begin{enumerate}
  \item \emph{1+3 DoF decomposition:}
        The translation along the known rotation axis is firstly estimated via interval stabbing to filter out outliers, similar in spirit to the approach of Li et al.~\cite{liTransformation2024}.
        Given the estimated translation, the remaining 2D rigid transformation is then solved using the proposed method.
  \item \emph{3+1 DoF decomposition:}
        The 2D rigid transformation on the plane perpendicular to the rotation axis is firstly estimated, together with outlier filtering on the projected plane.
        The remaining 1D translation along the rotation axis is subsequently recovered via interval stabbing.
\end{enumerate}

The two decomposition strategies exhibit different robustness characteristics:
(1) The 1+3 DoF decomposition relies on the consistency of 1D projected correspondences and is therefore sensitive to outlier distributions along the axis;
(2) The 3+1 DoF decomposition firstly constrains 3 DoF, leveraging stronger geometric structure in the projected plane, resulting in more informative residuals and a reduced ambiguity for the subsequent 1D translation estimation.
This ordering is expected to provide improved robustness under high outlier ratios or noisy correspondences, which is further experimentally examined in Sec.~\ref{sec:4dof_exp}.

\section{Space and Time Complexity}
\label{sec:time}

In stage I, BnB searches the 2D hemisphere from the best case with $O(\log{\frac{1}{\epsilon}})$ to the worst case with $O(\frac{1}{\epsilon^2})$ time, and the worst space complexity of nodes is also $O(\frac{1}{\epsilon^2})$.
In each branch node of rotation axis, the 1D interval stabbing of correspondences projected onto the axis is sorted in $O(N \log N)$ time,
and the maximum inliers of sorted points are filtered in $O(N)$ time.
The top-$k$ plane fitting consumes $O(kN)$ time.
Note that $k \ll \frac{1}{\epsilon}, N$, the time complexity of stage I is from $O(\log{\frac{1}{\epsilon}} \cdot N \log N)$ to $O(\frac{1}{\epsilon^2} N \log N)$.

In stage II with the $k$ rotation axes, BnB searches the 1D rotation angle from the best case with $O(k\log{\frac{1}{\epsilon}})$ to the worst case with $O(\frac{k}{\epsilon})$.
The segments are sorted in $O(N \log N)$ time, and maintained by a segment tree containing $O(N)$ nodes.
The sweep line algorithm traverses $O(N)$ segments by $x$, and the maximum overlapping of rectangles is updated in $O(\log N)$ time, leading to $O(N \log N)$ time complexity.
Therefore, the time complexity of stage II is from $O(k\log{\frac{1}{\epsilon}} \cdot N \log N)$ to $O(\frac{k}{\epsilon} N \log N)$.

In summary, the total time complexity with given number of correspondences $N$ and minimum branch width $\epsilon$ is from $O(k\log{\frac{1}{\epsilon}} \cdot N \log N)$ to $O(\frac{1}{\epsilon^2} N \log N)$, and the space complexity is $O(N + \frac{1}{\epsilon^2})$ in the worst case.

\section{Experimental Results}

\subsubsection{Datasets and Metrics}
We use real and synthetic datasets to compare our method with other baselines.
Specifically, the indoor 3DMatch and 3DLoMatch datasets~\cite{zeng3DMatch2017}, and outdoor KITTI dataset~\cite{geigerAre2012}, are employed to evaluate the real-data performance of the baseline methods.
Additionally, synthetic datasets generated from models of the Stanford 3D Scanning Repository~\cite{curlessvolumetric1996} are used in further experiments with arbitrary number of correspondences and outlier ratio.

Following~\cite{hartleyRotation2013}, the rotation error (RE) between the estimated rotation matrix $\hat{\mathbf{R}}$ and the ground truth $\mathbf{R}$ is computed using chordal distance.
The translation error (TE) is defined as the Euclidean distance between $\hat{\mathbf{t}}$ and $\mathbf{t}$:
\begin{equation}
  \begin{aligned}
    \mathrm{RE} & = \arccos \left(\frac{1}{2} \left(\mathrm{tr} \left( \hat{\mathbf{R}}^\top \mathbf{R} \right) - 1 \right) \right) \\
                & = 2 \arcsin \left(\frac{1}{2\sqrt{2}} \left\| \hat{\mathbf{R}} - \mathbf{R} \right\|_\mathrm{F} \right) ,         \\
    \mathrm{TE} & = \left\| \hat{\mathbf{t}} - \mathbf{t} \right\|_2 ,
  \end{aligned}
  \label{eq_RETE}
\end{equation}
where $\left\| \cdot \right\|_\mathrm{F}$ is Frobenius norm, and $\left\| \cdot \right\|_2$ is Euclidean distance.
A registration is successful if both RE and TE are below a certain threshold.
The ratio of successful registrations in all pairs is defined as the registration recall (RR).
In our experiments, we report the average RE and TE of successful pairs.

\subsubsection{Implementation details}

Our method is implemented in C++ using the Eigen library \cite{eigenweb} and accelerated by multi-threading.
The feature matching is performed using the Point Cloud Library (PCL)~\cite{rusu3D2011} and the nanoflann library~\cite{blancoNanoflann2014}.
The minimum branch width $\epsilon$ is set to 0.05 and the bound threshold $\epsilon_b$ is set to $10^{-3}$ for both our method and TR-DE~\cite{chenDeterministic2022}, while the convergence ratio $\rho$ is fixed at 0.25 for our method.
For all evaluated methods, we set the inlier distance threshold $\xi$ as twice the downsampling voxel size.
All experiments were executed on a laptop with an Intel Core i9-14900HX CPU and 16 GB RAM.

We compare our method with the following baselines on real and synthetic datasets:
1) RANSAC~\cite{fischlerRandom1981} and its variant TCF~\cite{shiRANSAC2024}, where RANSAC-1M and RANSAC-4M use a maximum of $10^6$ and $4 \times 10^6$ iterations respectively;
2) graph-based methods: TEASER++~\cite{yangTEASER2021}, SC2-PCR++~\cite{chenSC2PCR2023}, and MAC~\cite{zhang3D2023};
3) BnB-based methods: TEAR~\cite{huangScalable2024}, HERE~\cite{huangEfficient2024}, and TR-DE~\cite{chenDeterministic2022}.
Since TR-DE\cite{chenDeterministic2022} also adopts a two-stage BnB search but is closed-source, we re-implement it for fair comparisons.
The RANSAC implementation uses Open3D~\cite{zhouOpen3D2018} in Python with multi-threading acceleration, and convergence parameters match the official FCGF~\cite{choyFully2019} configuration.
As SC2-PCR++~\cite{chenSC2PCR2023} is GPU-optimized whereas others are CPU-based, we adapt its source code for CPU execution following \cite{huangEfficient2024}.
For multithreaded methods, the thread count is fixed at 12.

Moreover, implementations of RANSAC~\cite{fischlerRandom1981} in Open3D~\cite{zhouOpen3D2018}, TEASER++~\cite{yangTEASER2021}, SC2-PCR++~\cite{chenSC2PCR2023}, and MAC~\cite{zhang3D2023} provide their own feature matching pipelines.
We adopt the default parameter settings recommended by the respective authors to reproduce the reported results.
TEASER++~\cite{yangTEASER2021} optionally employs a cross-checker (CC) in feature space to reject ambiguous correspondences.
We observe that enabling this filter leads to a lower RR compared to configurations without CC.
Accordingly, we report the results of TEASER++~\cite{yangTEASER2021} both without and with the CC, denoted as TEASER++ and TEASER++(w/ CC), respectively.
For the remaining methods, feature correspondences are generated using the same preprocessing procedure described in Section~\ref{sec:preprocess} to ensure a fair comparison.

For the 4-DoF pose estimation subproblem under gravity priors, we evaluate both the proposed 1+3 and 3+1 decomposition strategies.
The comparisons include representative 4-DoF registration methods, namely BnB and FMP+BnB~\cite{caiPractical2019},  Quatro~\cite{limQuatro2024}, Li et al.~\cite{liTransformation2024}, and the correspondence-free 3D-BBS~\cite{aoki3DBBS2024}.
All 4-DoF methods are evaluated using the same ground-truth rotation axis.

\subsection{Indoor 3DMatch and 3DLoMatch Evaluation}

For the indoor 3DMatch and 3DLoMatch datasets, we evaluate the methods using feature descriptors extracted by:
traditional FPFH~\cite{rusuFast2009}, learning-based FCGF~\cite{choyFully2019}, and Predator~\cite{huangPREDATOR2021}.
Considering the distribution of different feature extraction methods in feature space, the preprocessing distance factor $d_\mathrm{f}$ is set to 0.01 for FPFH~\cite{rusuFast2009}, 0.07 for FCGF~\cite{choyFully2019} and 0.05 for Predator~\cite{huangPREDATOR2021},
and the number of kNN searched features $k_\mathrm{f}$ is set to 40 for FPFH~\cite{rusuFast2009}, 60 for FCGF~\cite{choyFully2019} and 30 for Predator~\cite{huangPREDATOR2021}, respectively.
The top-$k$ parameter defaults to 12 for our method.
The registration is successful if RE $<$ 15$^\circ$ and TE $<$ 30~cm.

\begin{figure*}[!htbp]
  \centering
  \footnotesize
  \begin{tabular}{cccccc}
    \normalsize Input pairs                   &
    \normalsize TCF\cite{shiRANSAC2024}       &
    \normalsize MAC\cite{zhang3D2023}         &
    \normalsize HERE\cite{huangEfficient2024} &
    \normalsize GMOR (Ours)                   &
    \normalsize Ground truth                                                            \\
    \makecell{
    \includegraphics[width=0.14\textwidth]{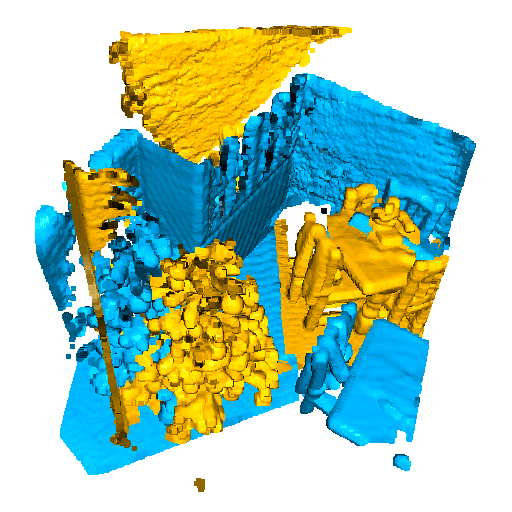}        \\
    Source: \#7536                                                                      \\
      Target: \#6456
    }                                         &
    \makecell{
    \includegraphics[width=0.14\textwidth]{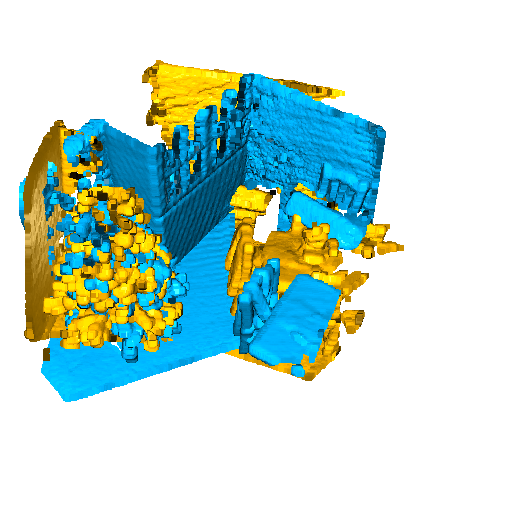}    \\
    \textcolor{red}{RE = 37.92$^{\circ}$}                                               \\
    \textcolor{red}{TE = 101.11cm}                                                      \\
    }                                         &
    \makecell{
    \includegraphics[width=0.14\textwidth]{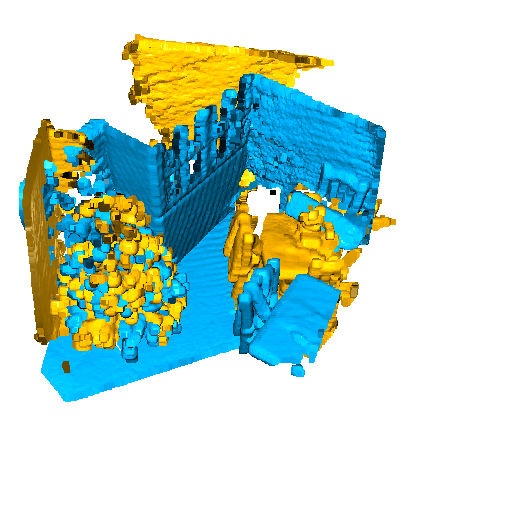}    \\
    \textcolor{red}{RE = 40.82$^{\circ}$}                                               \\
    \textcolor{red}{TE = 109.32cm}                                                      \\
    }                                         &
    \makecell{
    \includegraphics[width=0.14\textwidth]{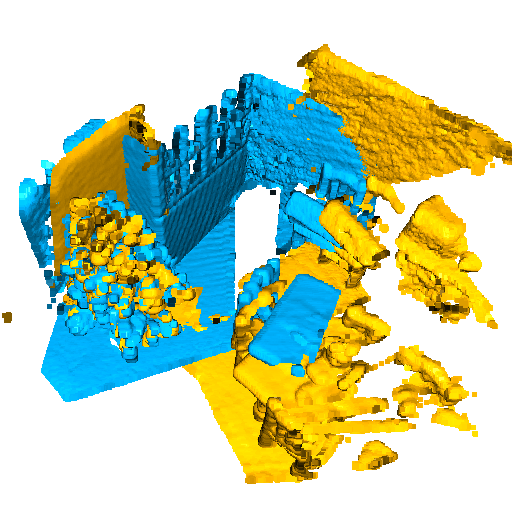}   \\
    RE = 10.25$^{\circ}$                                                                \\
    TE = 18.44cm                                                                        \\
    }                                         &
    \makecell{
    \includegraphics[width=0.14\textwidth]{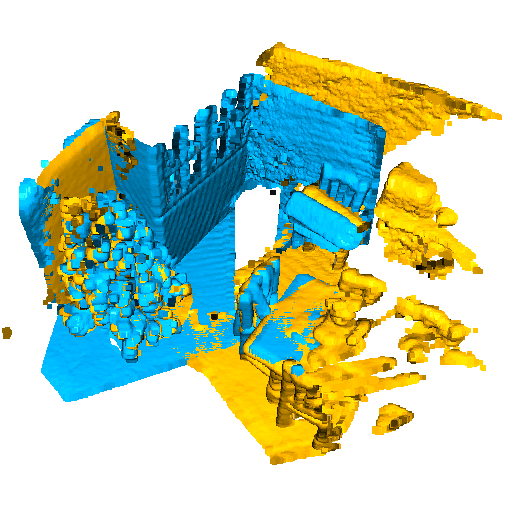}   \\
    RE = 1.22$^{\circ}$                                                                 \\
    TE = 5.04cm                                                                         \\
    }                                         &
    \makecell{
    \includegraphics[width=0.14\textwidth]{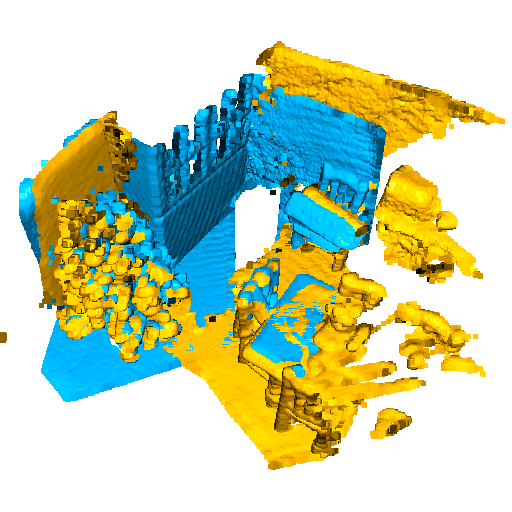}     \\
    \\
    \\
    }                                                                                   \\
    \makecell{
    \includegraphics[width=0.14\textwidth]{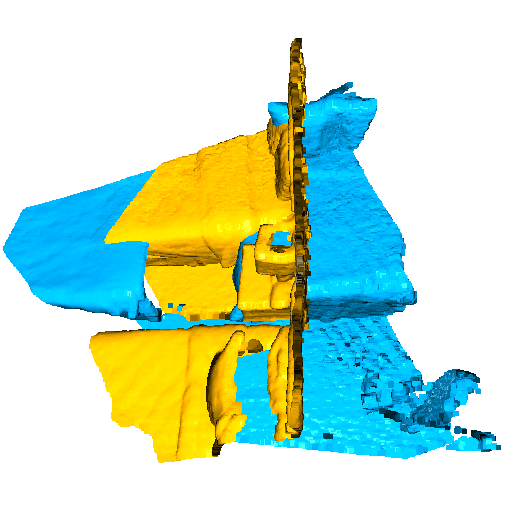}      \\
    Source: \#5024                                                                      \\
      Target: \#6095
    }                                         &
    \makecell{
    \includegraphics[width=0.14\textwidth]{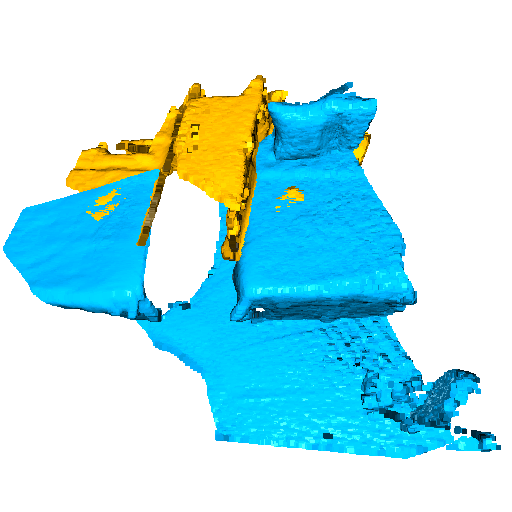}  \\
    \textcolor{red}{RE = 178.49$^{\circ}$}                                              \\
    \textcolor{red}{TE = 264.63cm}                                                      \\
    }                                         &
    \makecell{
    \includegraphics[width=0.14\textwidth]{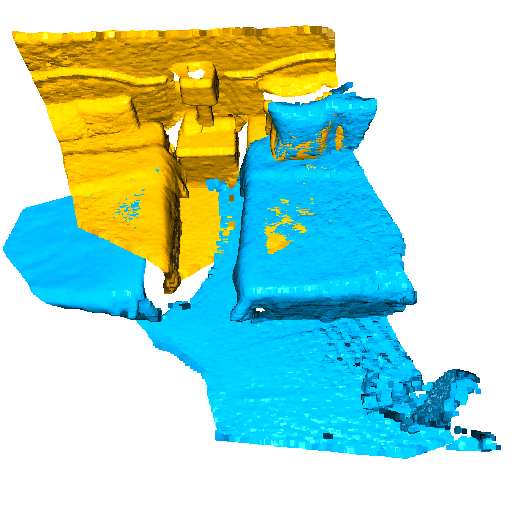}  \\
    RE = 2.67$^{\circ}$                                                                 \\
    TE = 15.43cm                                                                        \\
    }                                         &
    \makecell{
    \includegraphics[width=0.14\textwidth]{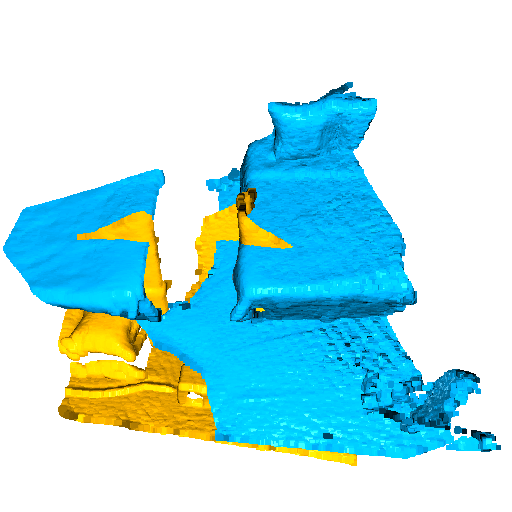} \\
    \textcolor{red}{RE = 173.72$^{\circ}$}                                              \\
    \textcolor{red}{TE = 185.32cm}                                                      \\
    }                                         &
    \makecell{
    \includegraphics[width=0.14\textwidth]{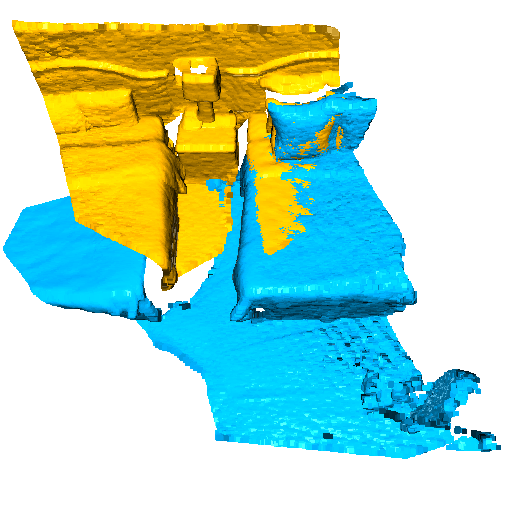} \\
    RE = 3.80$^{\circ}$                                                                 \\
    TE = 17.38cm                                                                        \\
    }                                         &
    \makecell{
    \includegraphics[width=0.14\textwidth]{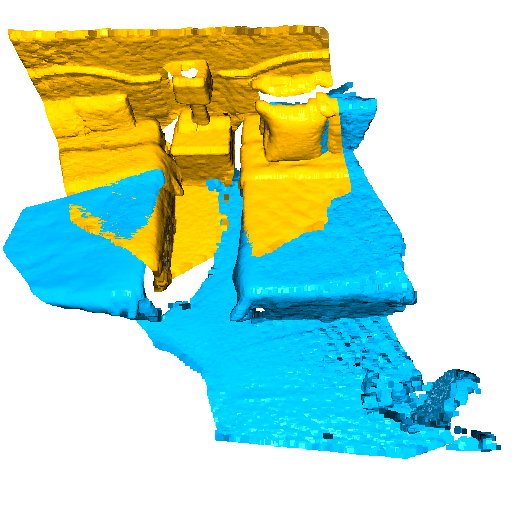}   \\
    \\
    \\
    }                                                                                   \\
    \makecell{
    \includegraphics[width=0.14\textwidth]{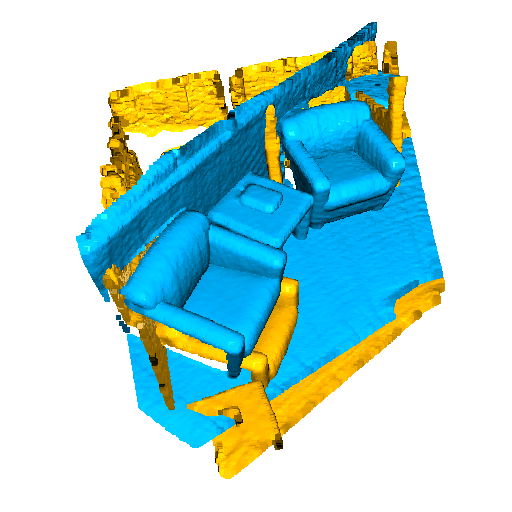}       \\
    Source: \#9249                                                                      \\
      Target: \#5303
    }                                         &
    \makecell{
    \includegraphics[width=0.14\textwidth]{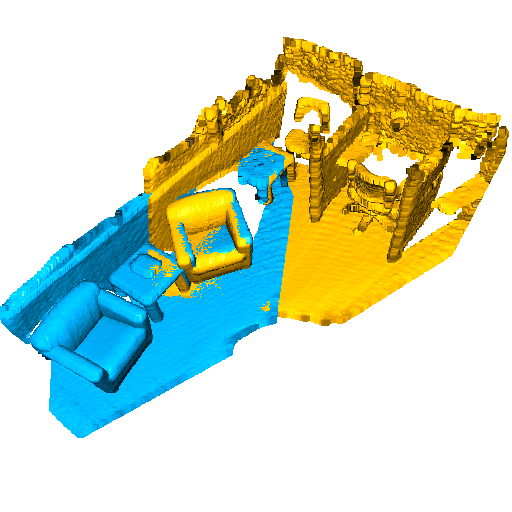}   \\
    RE = 1.17$^{\circ}$                                                                 \\
    TE = 4.90cm                                                                         \\
    }                                         &
    \makecell{
    \includegraphics[width=0.14\textwidth]{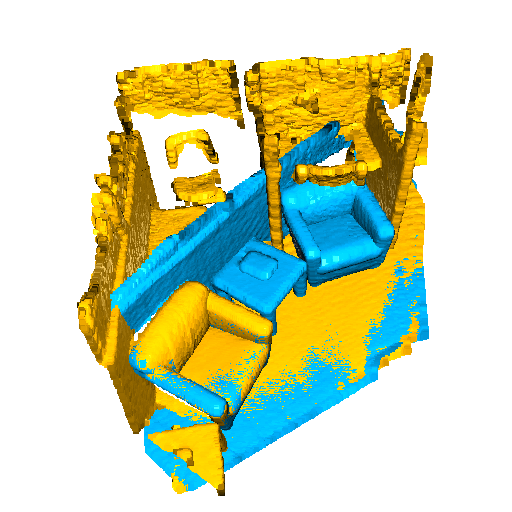}   \\
    \textcolor{red}{RE = 44.36$^{\circ}$}                                               \\
    \textcolor{red}{TE = 78.45cm}                                                       \\
    }                                         &
    \makecell{
    \includegraphics[width=0.14\textwidth]{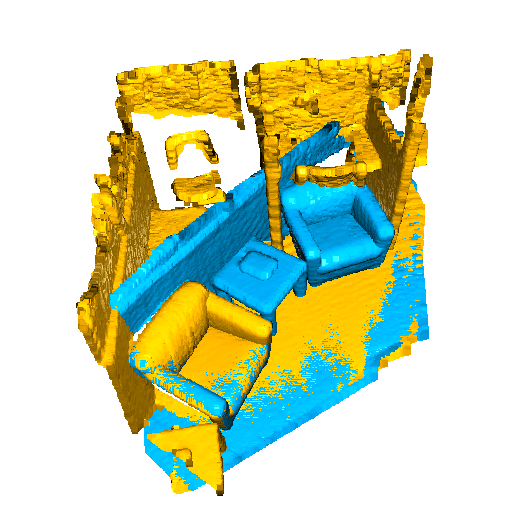}  \\
    \textcolor{red}{RE = 44.28$^{\circ}$}                                               \\
    \textcolor{red}{TE = 78.61cm}                                                       \\
    }                                         &
    \makecell{
    \includegraphics[width=0.14\textwidth]{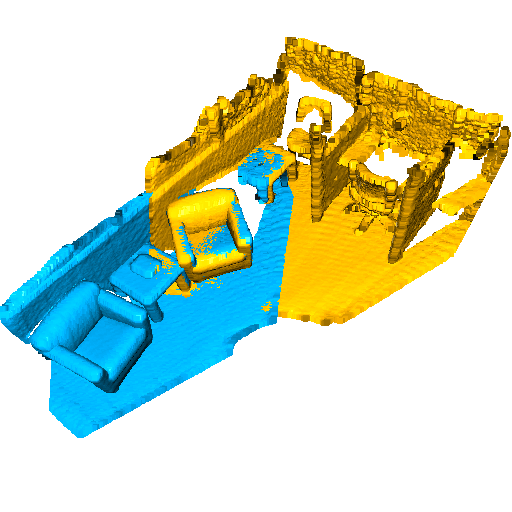}  \\
    RE = 1.60$^{\circ}$                                                                 \\
    TE = 5.12cm                                                                         \\
    }                                         &
    \makecell{
    \includegraphics[width=0.14\textwidth]{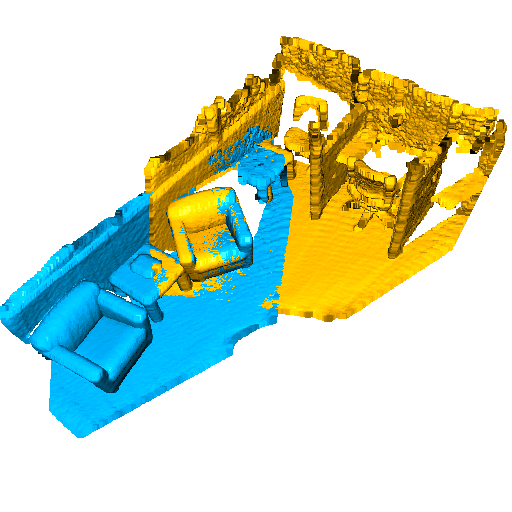}    \\
    \\
    \\
    }                                                                                   \\
    \makecell{
    \includegraphics[width=0.14\textwidth]{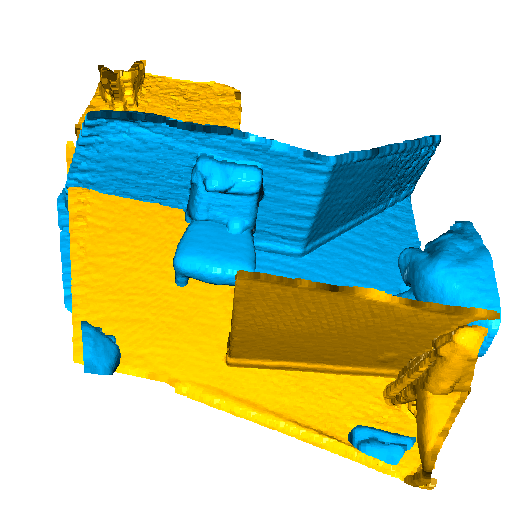}        \\
    Source: \#3643                                                                      \\
      Target: \#5163
    }                                         &
    \makecell{
    \includegraphics[width=0.14\textwidth]{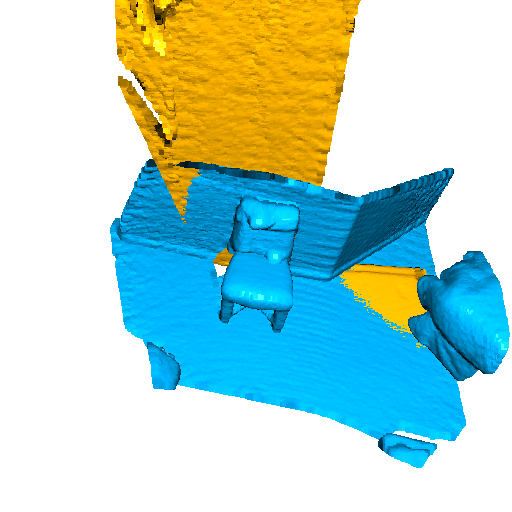}    \\
    \textcolor{red}{RE = 95.18$^{\circ}$}                                               \\
    \textcolor{red}{TE = 293.69cm}                                                      \\
    }                                         &
    \makecell{
    \includegraphics[width=0.14\textwidth]{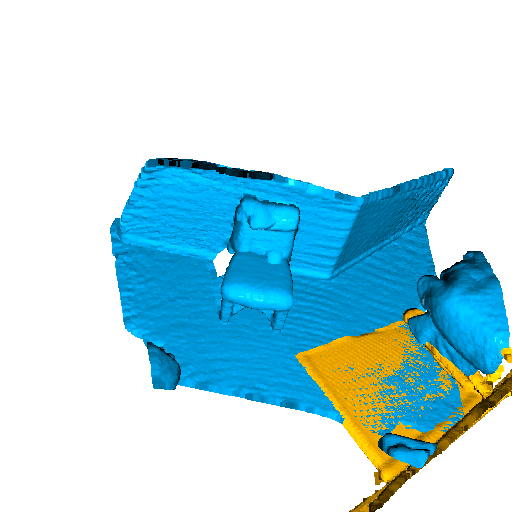}    \\
    \textcolor{red}{RE = 97.17$^{\circ}$}                                               \\
    \textcolor{red}{TE = 152.01cm}                                                      \\
    }                                         &
    \makecell{
    \includegraphics[width=0.14\textwidth]{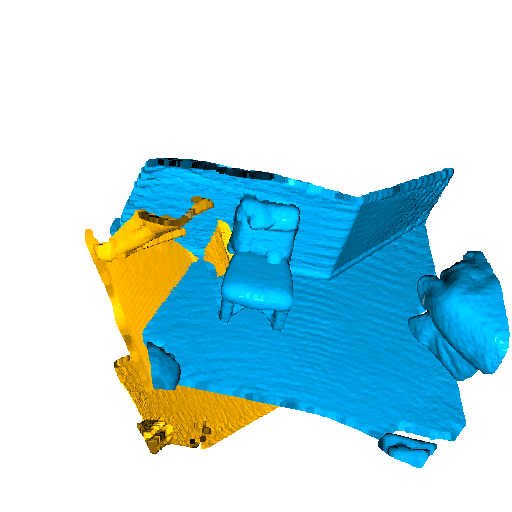}   \\
    \textcolor{red}{RE = 133.85$^{\circ}$}                                              \\
    \textcolor{red}{TE = 149.60cm}                                                      \\
    }                                         &
    \makecell{
    \includegraphics[width=0.14\textwidth]{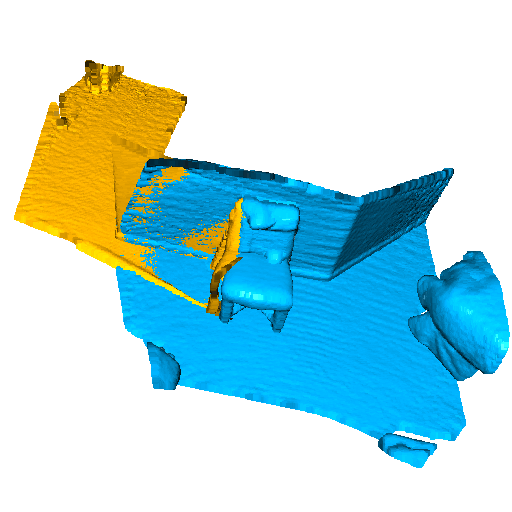}   \\
    RE = 4.84$^{\circ}$                                                                 \\
    TE = 5.38cm                                                                         \\
    }                                         &
    \makecell{
    \includegraphics[width=0.14\textwidth]{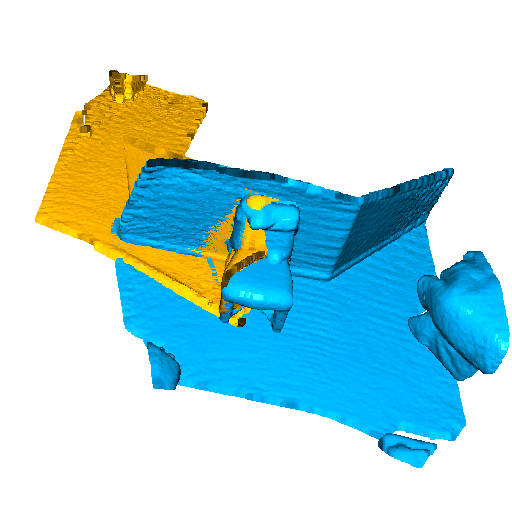}     \\
    \\
    \\
    }
  \end{tabular}
  \caption{Visualization of RANSAC-based TCF~\cite{shiRANSAC2024}, graph-based MAC~\cite{zhang3D2023}, BnB-based HERE~\cite{huangEfficient2024}, and our method on the indoor 3DMatch/3DLoMatch datasets using FPFH descriptor~\cite{rusuFast2009}. ``\#'' is the number of points, and \textcolor{red}{red} text represents the failure case.}
  \label{fig_3dmatch}
\end{figure*}

\subsubsection{3DMatch Dataset}

We downsample the point clouds by 5~cm voxel size following~\cite{choyFully2019} and evaluate the methods on 3DMatch dataset with FPFH~\cite{rusuFast2009} and FCGF~\cite{choyFully2019} descriptors.
Table~\ref{tab:3DMatch} shows the results.

With FPFH~\cite{rusuFast2009} descriptors, our method outperforms all baselines by over 2\% in terms of RR.
Both MAC~\cite{zhang3D2023} and SC2-PCR++~\cite{chenSC2PCR2023} achieve suboptimal RR, but require 7$\times$ and 60$\times$ more computation time than our method, respectively.
The re-implemented TR-DE~\cite{chenDeterministic2022} is the fastest and more efficient than the original paper, but it is less accurate than recent SOTA methods except for TEASER++~\cite{yangTEASER2021} and TEAR~\cite{huangScalable2024}.

\begin{table}[!htbp]
  \caption{Quantitive Results on 3DMatch Dataset.\label{tab:3DMatch}}
  \centering
  \begin{tabular}{c|ccc|c}
    \hline
                                               & \multicolumn{3}{c|}{FPFH (traditional)}    &                                                                          \\
                                               & RR(\%)$\uparrow$                           & RE($^{\circ}$)$\downarrow$ & TE(cm)$\downarrow$ & Time(s)$\downarrow$    \\
    \hline
    RANSAC-1M\cite{fischlerRandom1981}         & 62.66                                      & 3.72                       & 10.74              & 0.87                   \\
    RANSAC-4M\cite{fischlerRandom1981}         & 71.97                                      & 3.73                       & 10.95              & 1.94                   \\
    TCF\cite{shiRANSAC2024}                    & 86.51                                      & 2.66                       & 7.88               & 9.46                   \\
    TEASER++\cite{yangTEASER2021}              & 80.53                                      & 3.38                       & 10.87              & 12.15                  \\
    TEASER++(w/ CC)\cite{yangTEASER2021}       & 75.05                                      & 3.56                       & 10.86              & \textcolor{gray}{0.04} \\
    SC2-PCR++\cite{chenSC2PCR2023}             & \underline{87.31}                          & \textbf{2.12}              & \textbf{6.68}      & 48.33                  \\
    MAC\cite{zhang3D2023}                      & \underline{87.31}                          & 2.37                       & 7.36               & 5.09                   \\
    TEAR\cite{huangScalable2024}               & 61.44                                      & 2.80                       & 8.06               & 0.72                   \\
    HERE\cite{huangEfficient2024}              & 85.21                                      & 2.32                       & \underline{7.05}   & 0.73                   \\
    TR-DE$^{\ast}$\cite{chenDeterministic2022} & 83.49                                      & 2.55                       & 7.75               & \textbf{0.38}          \\
    GMOR (Ours)                                & \textbf{89.46}                             & \underline{2.29}           & 7.13               & \underline{0.62}       \\
    \hline
                                               & \multicolumn{3}{c|}{FCGF (learning-based)} &                                                                          \\
                                               & RR(\%)$\uparrow$                           & RE($^{\circ}$)$\downarrow$ & TE(cm)$\downarrow$ & Time(s)$\downarrow$    \\
    \hline
    RANSAC-1M\cite{fischlerRandom1981}         & 84.53                                      & 3.40                       & 10.63              & \underline{0.30}       \\
    RANSAC-4M\cite{fischlerRandom1981}         & 84.78                                      & 3.42                       & 10.82              & 0.52                   \\
    TCF\cite{shiRANSAC2024}                    & 89.59                                      & 2.25                       & 6.98               & 33.25                  \\
    TEASER++\cite{yangTEASER2021}              & \multicolumn{3}{c|}{Out-of-Memory}         &                                                                          \\
    TEASER++(w/ CC)\cite{yangTEASER2021}       & 85.03                                      & 2.99                       & 9.51               & \textcolor{gray}{0.08} \\
    SC2-PCR++\cite{chenSC2PCR2023}             & \textbf{94.15}                             & 2.04                       & 6.50               & 51.30                  \\
    MAC\cite{zhang3D2023}                      & 92.42                                      & 2.05                       & 6.41               & 19.96                  \\
    TEAR\cite{huangScalable2024}               & 83.36                                      & 2.31                       & 6.83               & 0.95                   \\
    HERE\cite{huangEfficient2024}              & 90.63                                      & \underline{2.02}           & \underline{6.32}   & 0.71                   \\
    TR-DE$^{\ast}$\cite{chenDeterministic2022} & 91.44                                      & 2.15                       & 6.84               & \textbf{0.25}          \\
    GMOR (Ours)                                & \underline{93.22}                          & \textbf{1.90}              & \textbf{6.16}      & 0.50                   \\
    \hline
    \multicolumn{5}{c}{$^{\ast}$TR-DE runs with the reproduced code of original work\cite{chenDeterministic2022}.}
  \end{tabular}
\end{table}

Due to the large number of correspondences generated by FCGF~\cite{choyFully2019}, TEASER++~\cite{yangTEASER2021} requires over 2 hours and runs out-of-memory to compute the maximum clique in some cases.
SC2-PCR++~\cite{chenSC2PCR2023} benefits from its one-to-many feature matching, which increases the likelihood of resampling correct correspondences.
This approach reduces FCGF's~\cite{choyFully2019} generalization error influence but incurs higher computational costs.
Notably, SC2-PCR++~\cite{chenSC2PCR2023} achieves higher performance with learning-based descriptors than with traditional descriptors.
This performance difference occurs because SC2-PCR++~\cite{chenSC2PCR2023} constructs feature graphs using normalized features, losing information from unnormalized FPFH~\cite{rusuFast2009} features during consensus set selection.
Our method achieves suboptimal RR and the lowest RE and TE, with significantly reduced computation time compared to SC2-PCR++~\cite{chenSC2PCR2023} on identical hardware.

\subsubsection{3DLoMatch Dataset}

The 3DLoMatch dataset is a challenging benchmark for registration with low overlap firstly proposed by Predator~\cite{huangPREDATOR2021}.
We evaluate the compared methods using FPFH~\cite{rusuFast2009} and Predator~\cite{huangPREDATOR2021} descriptors.
The pre-trained weights provided by Predator~\cite{huangPREDATOR2021} employ 2.5~cm voxel size downsampling, differing from the 5~cm voxel size used for other descriptors.
Following~\cite{chenSC2PCR2023}, we downsample each point cloud to 5000 points using the saliency score of Predator~\cite{huangPREDATOR2021}.

\begin{table}[!htbp]
  \caption{Quantitive Results on 3DLoMatch Dataset.\label{tab:3DLoMatch}}
  \centering
  \begin{tabular}{c|ccc|c}
    \hline
                                         & \multicolumn{3}{c|}{FPFH (traditional)}        &                                                                          \\
                                         & RR(\%)$\uparrow$                               & RE($^{\circ}$)$\downarrow$ & TE(cm)$\downarrow$ & Time(s)$\downarrow$    \\
    \hline
    RANSAC-1M\cite{fischlerRandom1981}   & 10.67                                          & 5.52                       & 12.50              & \underline{0.65}       \\
    RANSAC-4M\cite{fischlerRandom1981}   & 15.61                                          & 5.50                       & 13.59              & 2.38                   \\
    TCF\cite{shiRANSAC2024}              & 43.96                                          & 4.28                       & 12.63              & 3.76                   \\
    TEASER++\cite{yangTEASER2021}        & 33.86                                          & 4.97                       & 14.79              & 1.86                   \\
    TEASER++(w/ CC)\cite{yangTEASER2021} & 28.80                                          & 5.41                       & 14.82              & \textcolor{gray}{0.01} \\
    SC2-PCR++\cite{chenSC2PCR2023}       & 41.66                                          & \underline{3.85}           & \textbf{10.06}     & 31.19                  \\
    MAC\cite{zhang3D2023}                & \underline{49.13}                              & 4.00                       & 12.22              & 2.12                   \\
    TEAR\cite{huangScalable2024}         & 13.76                                          & 4.33                       & 12.18              & 1.75                   \\
    HERE\cite{huangEfficient2024}        & 38.97                                          & \textbf{3.84}              & \underline{11.52}  & 0.82                   \\
    TR-DE\cite{chenDeterministic2022}    & 43.12                                          & 4.09                       & 12.84              & \textbf{0.47}          \\
    GMOR (Ours)                          & \textbf{50.59}                                 & 3.89                       & 12.01              & \underline{0.65}       \\
    \hline
                                         & \multicolumn{3}{c|}{Predator (learning-based)} &                                                                          \\
                                         & RR(\%)$\uparrow$                               & RE($^{\circ}$)$\downarrow$ & TE(cm)$\downarrow$ & Time(s)$\downarrow$    \\
    \hline
    RANSAC-1M\cite{fischlerRandom1981}   & 65.36                                          & 3.58                       & 10.47              & \underline{0.46}       \\
    RANSAC-4M\cite{fischlerRandom1981}   & 66.59                                          & 3.53                       & 10.28              & 1.35                   \\
    TCF\cite{shiRANSAC2024}              & 66.37                                          & 3.25                       & 10.50              & 36.32                  \\
    TEASER++\cite{yangTEASER2021}        & \multicolumn{3}{c|}{Out-of-Memory}             &                                                                          \\
    TEASER++(w/ CC)\cite{yangTEASER2021} & 61.76                                          & 3.54                       & 10.57              & \textcolor{gray}{0.03} \\
    SC2-PCR++\cite{chenSC2PCR2023}       & \textbf{71.53}                                 & 3.49                       & \underline{9.78}   & 43.46                  \\
    MAC\cite{zhang3D2023}                & 69.17                                          & 3.42                       & 10.47              & 30.24                  \\
    TEAR\cite{huangScalable2024}         & 54.41                                          & 3.24                       & 10.09              & 1.27                   \\
    HERE\cite{huangEfficient2024}        & 64.18                                          & \textbf{2.98}              & \textbf{9.75}      & 1.17                   \\
    TR-DE\cite{chenDeterministic2022}    & 64.85                                          & 3.06                       & 10.30              & \textbf{0.36}          \\
    GMOR (Ours)                          & \underline{69.68}                              & \underline{3.02}           & 10.01              & 0.51                   \\
    \hline
  \end{tabular}
\end{table}

Table~\ref{tab:3DLoMatch} shows the results on the 3DLoMatch dataset.
With FPFH~\cite{rusuFast2009} descriptors, both MAC~\cite{zhang3D2023} and our method surpass other approaches significantly, resulting from their ability to find multiple candidate solutions under low overlap ratios.
Besides, maximal clique-based TEASER++~\cite{yangTEASER2021} and MAC~\cite{zhang3D2023} exhibit faster performance on 3DLoMatch than on 3DMatch.
Because the SC graph of correspondences grows sparser given 3DLoMatch's lower inlier ratios.
In contrast, the performance of SC2-PCR++~\cite{chenSC2PCR2023} underperforms on 3DLoMatch relative to 3DMatch though it is also graph-based method.
Its SOG sampling yields numerous invalid samples due to FPFH's~\cite{rusuFast2009} low inlier ratios.
Our method achieves the highest RR and the suboptimal time consumption.
It registers the input point cloud pairs within a stable time (0.62~s on 3DMatch and 0.65~s on 3DLoMatch dataset), regardless of the overlap ratio.
In addition, the visualization of the registration results using FPFH~\cite{rusuFast2009} descriptor is shown in Fig.~\ref{fig_3dmatch}.
Our method maintains accuracy even in challenging cases with repetitive geometries and low overlap.

The results using Predator~\cite{huangPREDATOR2021} are similar to those of FCGF~\cite{choyFully2019} descriptor.
Downsampling correspondences to 5000 points using Predator's~\cite{huangPREDATOR2021}, saliency scores significantly increases the inlier ratio compared to whole point clouds, as verified in~\cite{huangPREDATOR2021}.
The RR of our method is also the suboptimal with 0.51~s time consumption, comparable to FCGF's~\cite{choyFully2019} 0.50~s.
A more comprehensive ablation study is elaborated in the next section.

\subsubsection{Ablation Study on 3DMatch/3DLoMatch}

\begin{figure}[!htbp]
  \centering
  \subfloat[]{\includegraphics[width=0.49\columnwidth]{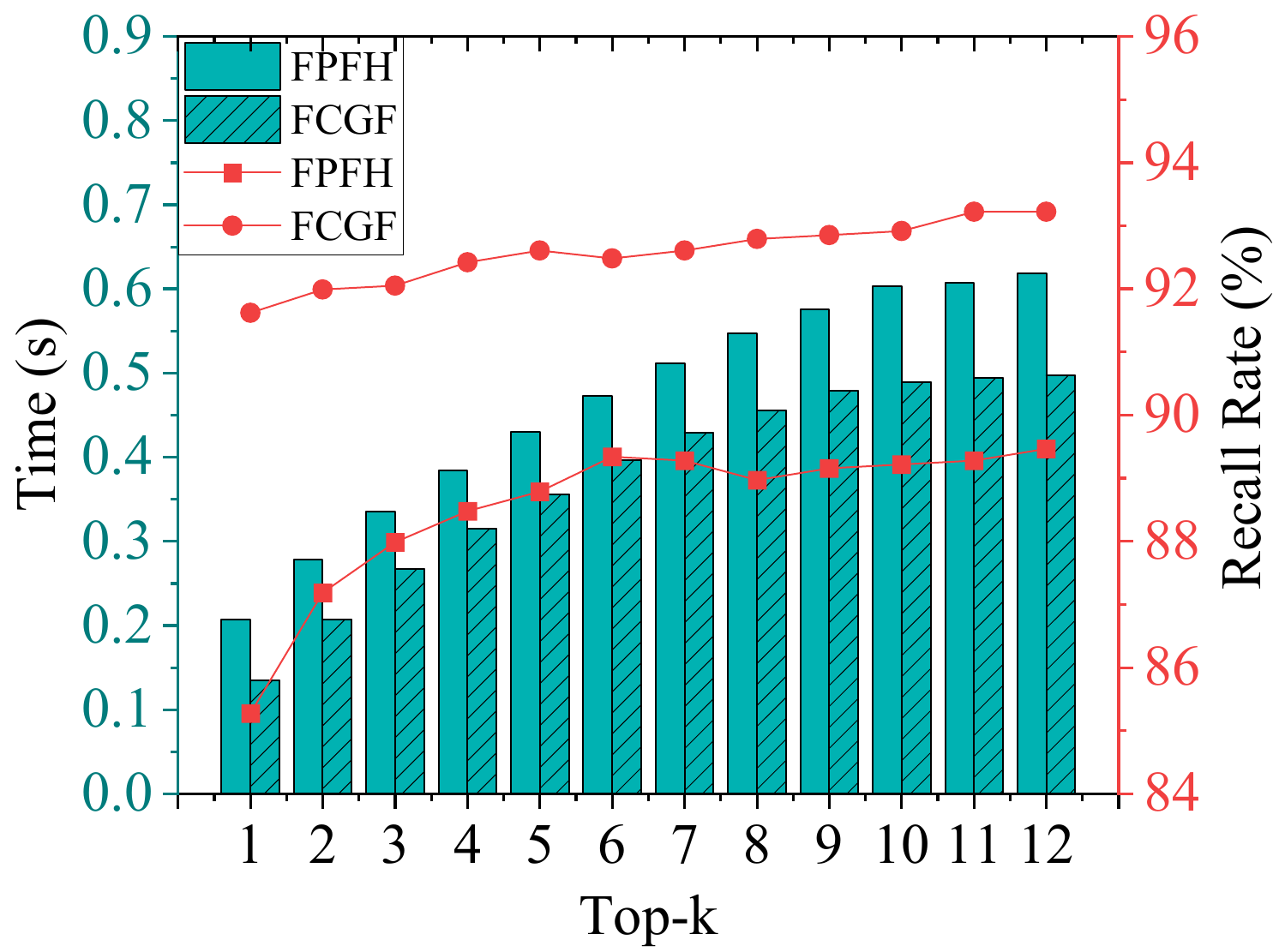}}
  \hfill
  \subfloat[]{\includegraphics[width=0.49\columnwidth]{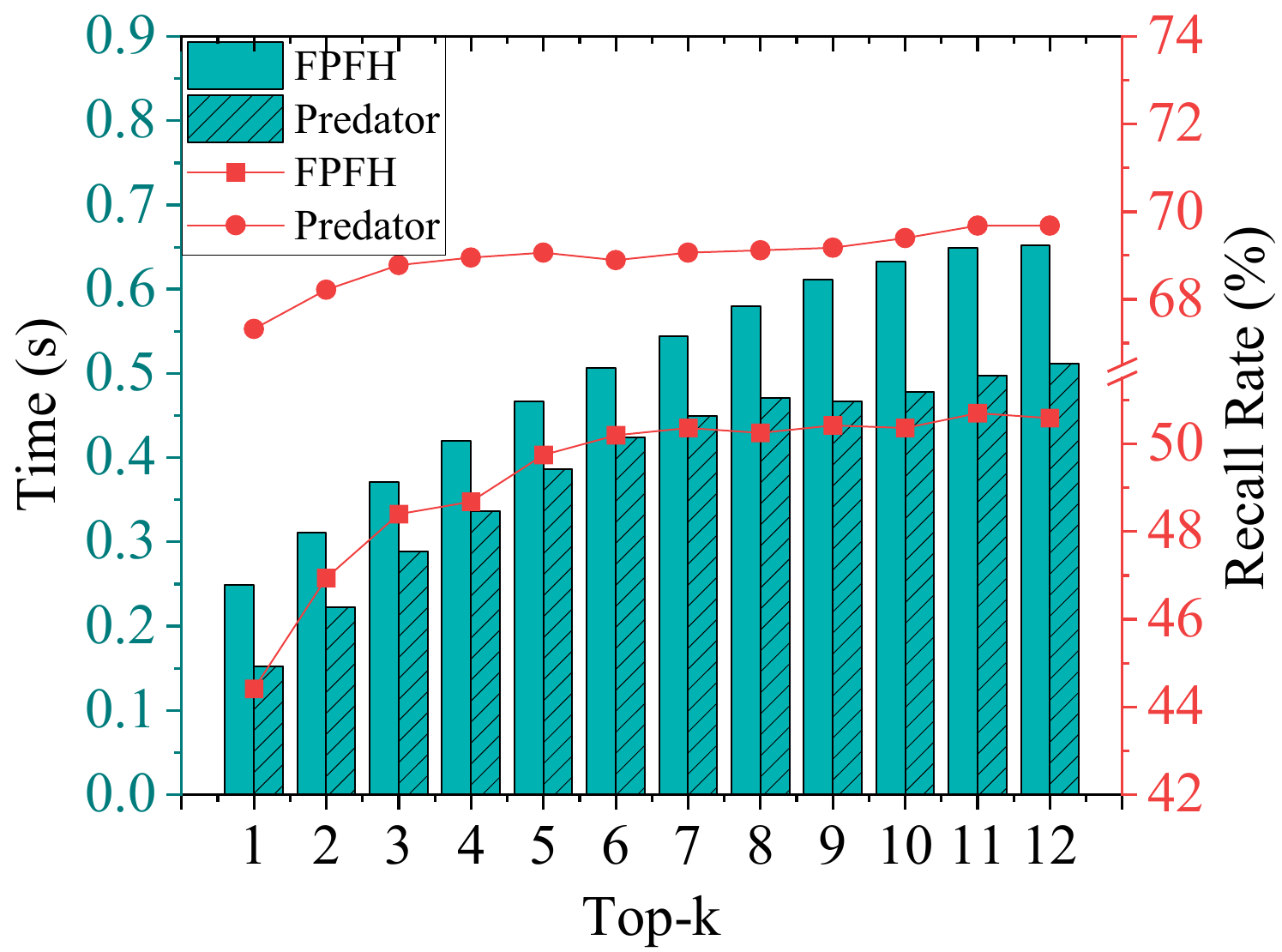}}
  \\
  \subfloat[]{\includegraphics[width=0.49\columnwidth]{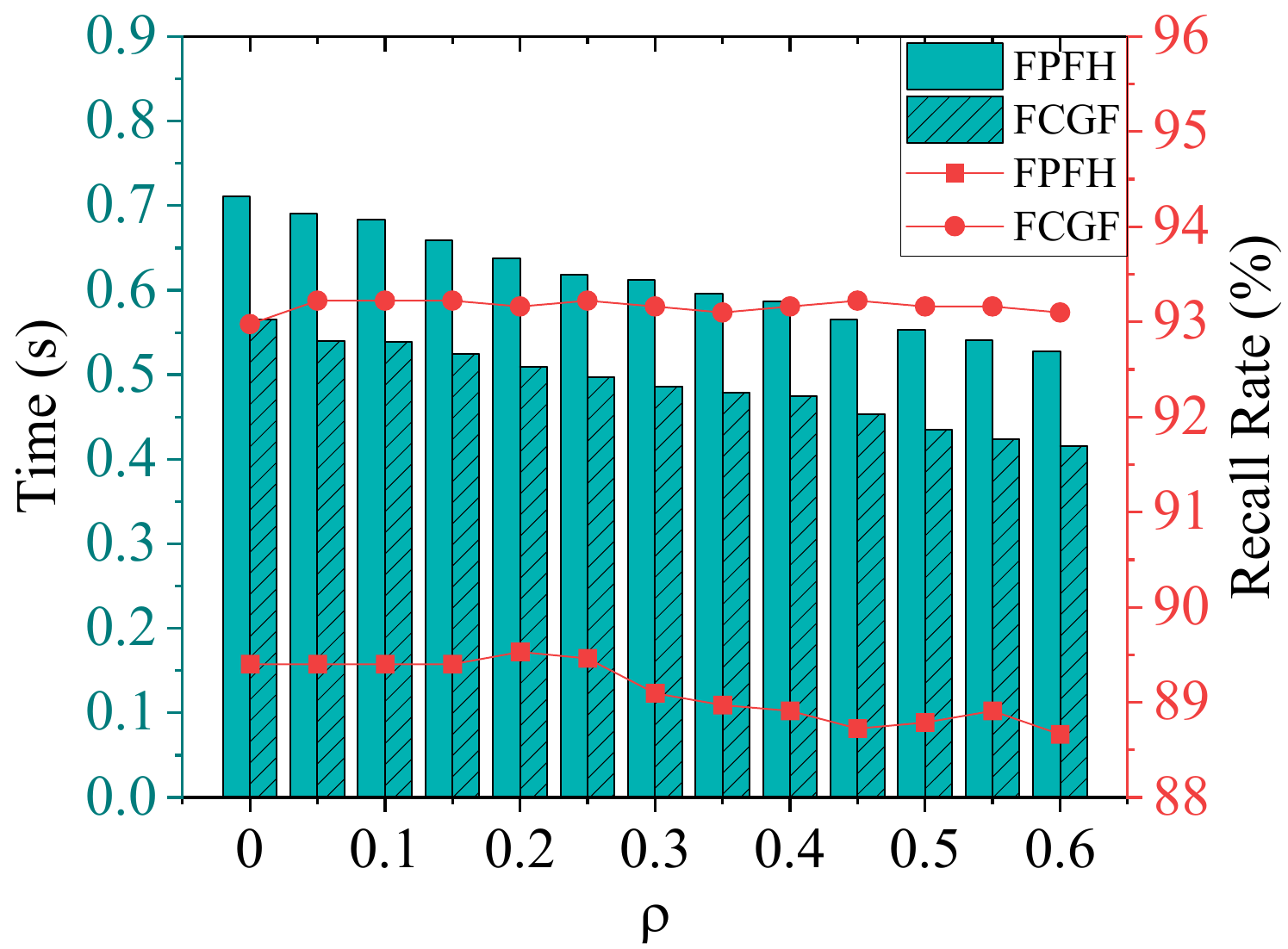}}
  \hfill
  \subfloat[]{\includegraphics[width=0.49\columnwidth]{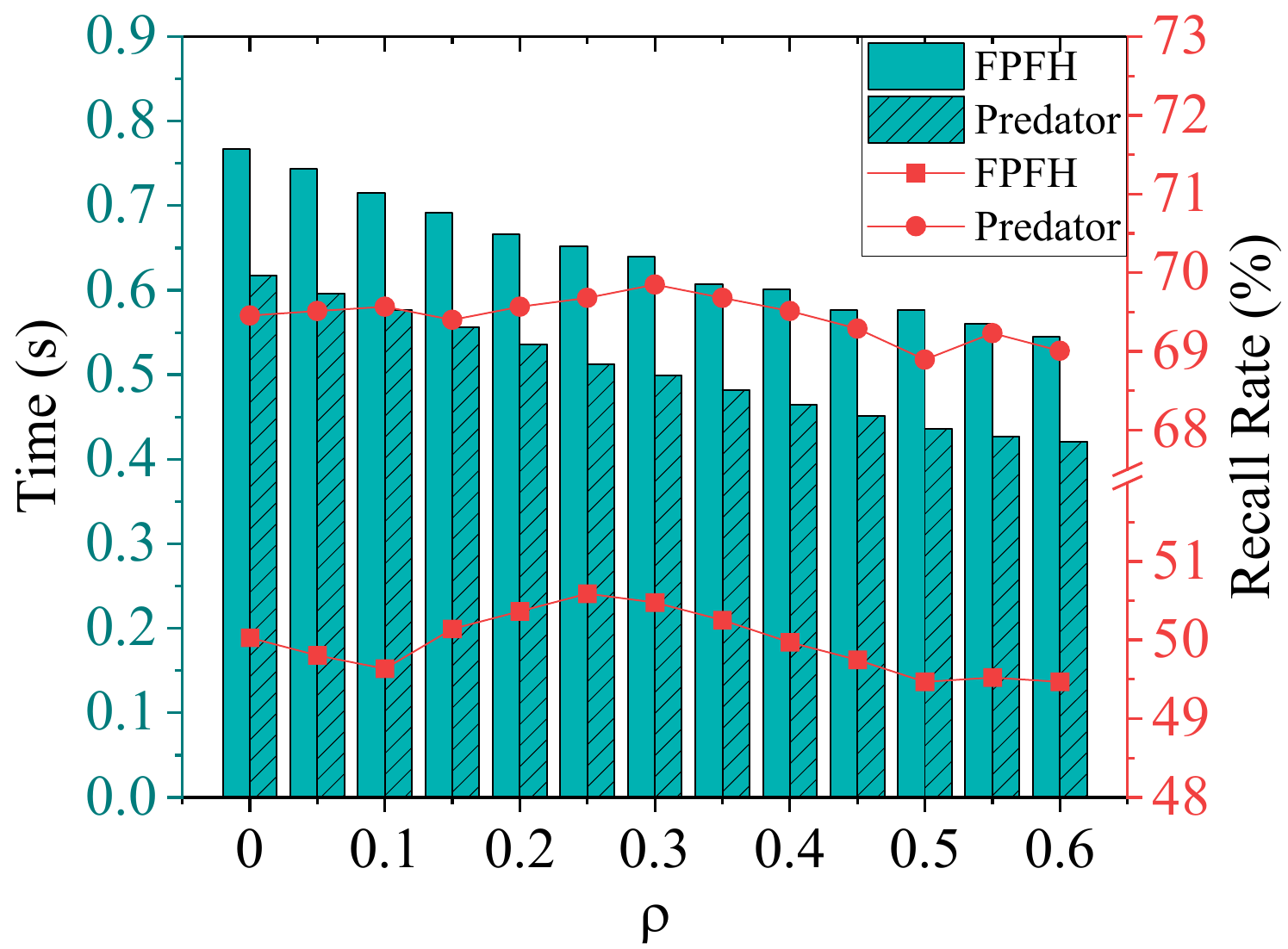}}
  \caption{Ablation study on 3DMatch and 3DLoMatch datasets. (a) 3DMatch with top-$k$ rotation axes; (b) 3DLoMatch with top-$k$ rotation axes; (c) 3DMatch with convergence ratio $\rho$; (d) 3DLoMatch with convergence ratio $\rho$.}
  \label{fig_topkrho}
\end{figure}

We conduct a longitudinal study on 3DMatch/3DLoMatch datasets to analyze top-$k$ rotation axes and convergence ratio $\rho$.
Parameter sweeps vary top-$k$ from 1 to 12 at fixed $\rho=0.25$ and $\rho$ from 0 to 0.6 at fixed $k=12$, as shown in Fig.~\ref{fig_topkrho}.

Both parameters significantly affect computation time.
Increasing $k$ yields sublinear time growth while improving RR.
FPFH stabilizes at $k>6$ on both datasets, FCGF improves until $k=11$, and Predator stabilizes at $k>4$.
This is because the selected k rotation axes between stage I and stage II mitigate the impact of local optima to boost the accuracy.
The RR peaks at specific $\rho$ values: 89.53\% RR with $\rho =$ 0.2 for FPFH~\cite{choyFully2019} on 3DMatch dataset; 50.59\% RR with $\rho =$ 0.25 for FPFH~\cite{choyFully2019} and 69.85\% RR with $\rho =$ 0.3 for Predator~\cite{huangPREDATOR2021} on 3DLoMatch dataset.
Especially for $\rho \in \left[0.05, 0.60 \right]$, FCGF~\cite{choyFully2019} maintains RR between 93.09\% and 93.22\%.
Although intuition suggests higher RR with smaller $\rho$, a larger $\rho$ pre-filters more potential outliers, improving both accuracy and efficiency.

\subsection{Outdoor KITTI Evaluation}

Following~\cite{choyFully2019}, we evaluate compared methods using 555 pairs from KITTI test dataset (sequences 8 to 10).
Traditional FPFH~\cite{rusuFast2009} and learning-based FCGF~\cite{choyFully2019} descriptor are employed.
Particularly, FCGF~\cite{choyFully2019} provides normalized and non-normalized 32-dimensional pre-trained features at 30~cm resolution, and we report the results of normalized version.
To ensure predictable execution time and manageable memory consumption, each point cloud is randomly downsampled to 8000 points for pairwise registration.
Under the Atlanta World assumption~\cite{straubManhattan2018} that is already applied to LiDAR point cloud~\cite{limSingle2022},
we constrain rotation axis search to the cube face with normal of z-axis, and set top-$k$ to 1.
The preprocessing distance factor $d_\mathrm{f}$ of feature matching is set to 0.05 and the number of kNN searched features $k_\mathrm{f}$ is set to 10 for both FPFH~\cite{rusuFast2009} and FCGF~\cite{choyFully2019} descriptors.
The registration is successful if RE $< 5^{\circ}$ and TE $< 2$~m.

\begin{table}[!htbp]
  \caption{Quantitive Results on KITTI Dataset.\label{tab:KITTI}}
  \centering
  \begin{tabular}{c|ccc|c}
    \hline
                                         & \multicolumn{3}{c|}{FPFH (traditional)}    &                                                                          \\
                                         & RR(\%)$\uparrow$                           & RE($^{\circ}$)$\downarrow$ & TE(cm)$\downarrow$ & Time(s)$\downarrow$    \\
    \hline
    RANSAC-1M\cite{fischlerRandom1981}   & 83.24                                      & 0.76                       & 24.26              & \underline{0.62}       \\
    RANSAC-4M\cite{fischlerRandom1981}   & 94.59                                      & 0.55                       & 20.41              & 2.46                   \\
    TCF\cite{shiRANSAC2024}              & 99.82                                      & 0.50                       & 11.10              & 5.19                   \\
    TEASER++\cite{yangTEASER2021}        & 99.64                                      & 0.37                       & 9.55               & 3.33                   \\
    TEASER++(w/ CC)\cite{yangTEASER2021} & 98.56                                      & 0.49                       & 11.35              & \textcolor{gray}{0.03} \\
    SC2-PCR++\cite{chenSC2PCR2023}       & \textbf{100.00}                            & \underline{0.32}           & \underline{7.19}   & 42.27                  \\
    MAC\cite{zhang3D2023}                & 99.28                                      & \underline{0.32}           & 8.16               & 6.43                   \\
    TEAR\cite{huangScalable2024}         & 94.23                                      & 0.40                       & 10.78              & 1.83                   \\
    HERE\cite{huangEfficient2024}        & 99.46                                      & 0.31                       & 7.69               & 1.12                   \\
    TR-DE\cite{chenDeterministic2022}    & 98.02                                      & 0.46                       & 11.20              & 1.61                   \\
    GMOR (Ours)                          & \textbf{100.00}                            & \textbf{0.24}              & \textbf{6.55}      & \textbf{0.19}          \\
    \hline
                                         & \multicolumn{3}{c|}{FCGF (learning-based)} &                                                                          \\
                                         & RR(\%)$\uparrow$                           & RE($^{\circ}$)$\downarrow$ & TE(cm)$\downarrow$ & Time(s)$\downarrow$    \\
    \hline
    RANSAC-1M\cite{fischlerRandom1981}   & 99.10                                      & 0.33                       & \underline{13.99}  & \underline{0.34}       \\
    RANSAC-4M\cite{fischlerRandom1981}   & 99.28                                      & 0.32                       & \textbf{13.83}     & 0.39                   \\
    TCF\cite{shiRANSAC2024}              & 99.28                                      & 0.31                       & 20.52              & 24.71                  \\
    TEASER++\cite{yangTEASER2021}        & \multicolumn{3}{c|}{Out-of-Memory}         &                                                                          \\
    TEASER++(w/ CC)\cite{yangTEASER2021} & 99.10                                      & 0.29                       & 22.08              & \textcolor{gray}{5.36} \\
    SC2-PCR++\cite{chenSC2PCR2023}       & \textbf{99.46}                             & 0.32                       & 20.61              & 205.06                 \\
    MAC\cite{zhang3D2023}                & 98.74                                      & 0.43                       & 17.16              & 5.43                   \\
    TEAR\cite{huangScalable2024}         & 96.04                                      & 0.36                       & 21.27              & 0.41                   \\
    HERE\cite{huangEfficient2024}        & 99.28                                      & \underline{0.27}           & 19.03              & 1.90                   \\
    TR-DE\cite{chenDeterministic2022}    & 99.10                                      & 0.34                       & 20.34              & 0.79                   \\
    GMOR (Ours)                          & \textbf{99.46}                             & \textbf{0.21}              & 16.14              & \textbf{0.12}          \\
    \hline
  \end{tabular}
\end{table}

\begin{figure*}[!htbp]
  \centering
  \footnotesize
  \begin{tabular}{cccccc}
    \normalsize Input pairs                   &
    \normalsize TCF\cite{shiRANSAC2024}       &
    \normalsize MAC\cite{zhang3D2023}         &
    \normalsize HERE\cite{huangEfficient2024} &
    \normalsize GMOR (Ours)                   &
    \normalsize Ground truth                                                  \\
    \makecell{
    \includegraphics[width=0.14\textwidth]{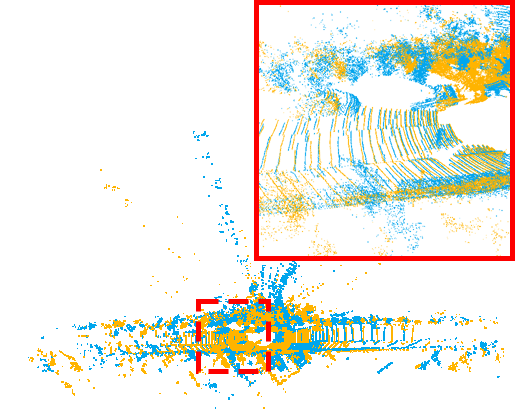}      \\
    Source: \#8000                                                            \\
      Target: \#8000
    }                                         &
    \makecell{
    \includegraphics[width=0.14\textwidth]{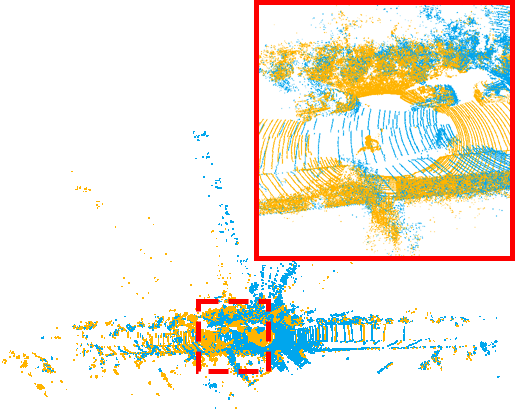}  \\
    RE = 0.14$^{\circ}$                                                       \\
    TE = 0.06m                                                                \\
    }                                         &
    \makecell{
    \includegraphics[width=0.14\textwidth]{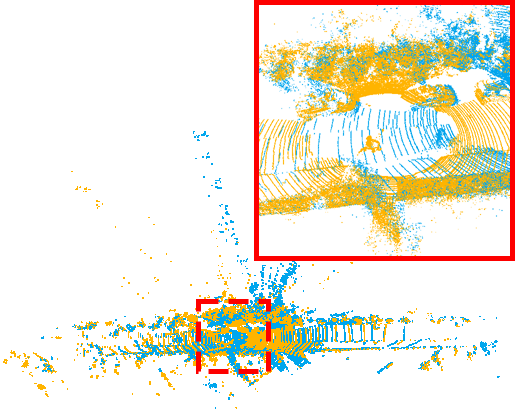}  \\
    RE = 0.68$^{\circ}$                                                       \\
    TE = 0.19m                                                                \\
    }                                         &
    \makecell{
    \includegraphics[width=0.14\textwidth]{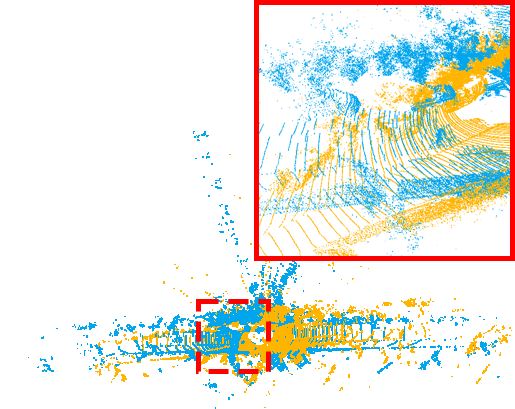} \\
    \textcolor{red}{RE = 8.05$^{\circ}$}                                      \\
    \textcolor{red}{ TE = 13.84m}                                             \\
    }                                         &
    \makecell{
    \includegraphics[width=0.14\textwidth]{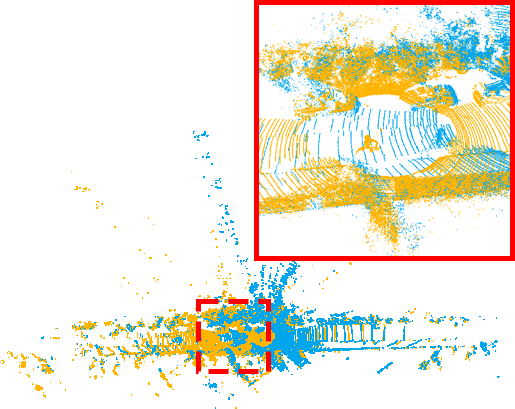} \\
    RE = 0.62$^{\circ}$                                                       \\
    TE = 0.13m                                                                \\
    }                                         &
    \makecell{
    \includegraphics[width=0.14\textwidth]{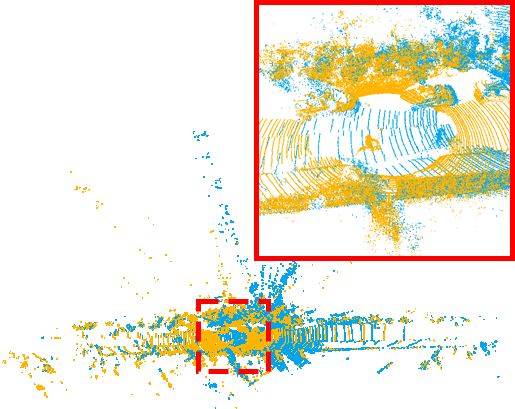}   \\
    \\
    \\
    }                                                                         \\
    \makecell{
    \includegraphics[width=0.14\textwidth]{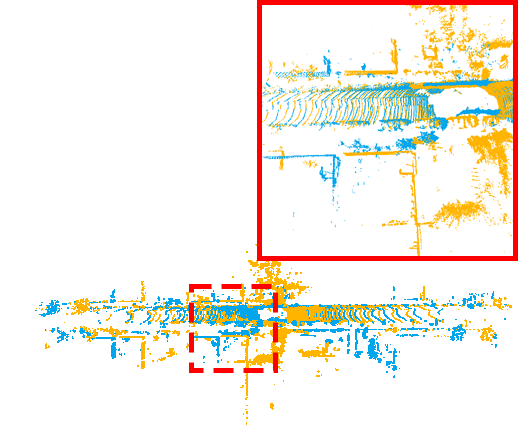}      \\
    Source: \#8000                                                            \\
      Target: \#8000
    }                                         &
    \makecell{
    \includegraphics[width=0.14\textwidth]{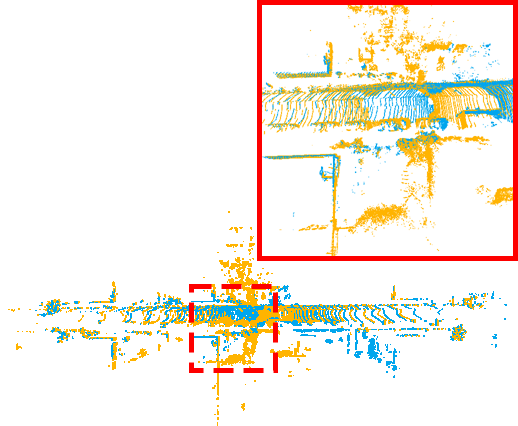}  \\
    RE = 1.15$^{\circ}$                                                       \\
    TE = 0.21m                                                                \\
    }                                         &
    \makecell{
    \includegraphics[width=0.14\textwidth]{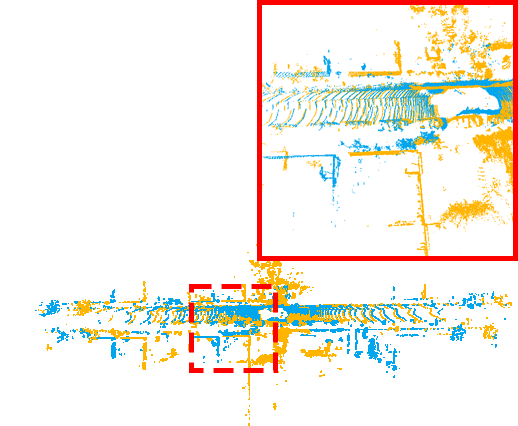}  \\
    RE = 2.31$^{\circ}$                                                       \\
    \textcolor{red}{TE = 10.41m}                                              \\
    }                                         &
    \makecell{
    \includegraphics[width=0.14\textwidth]{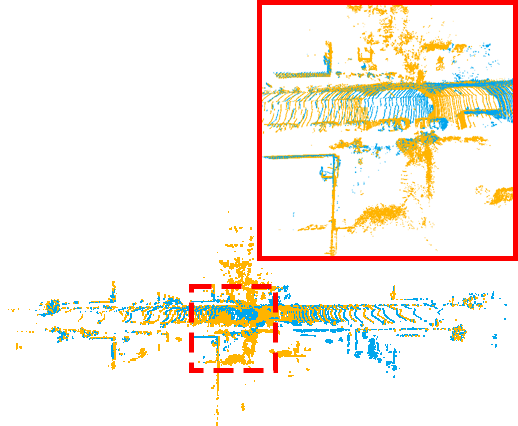} \\
    RE = 3.85$^{\circ}$                                                       \\
    TE = 0.24m                                                                \\
    }                                         &
    \makecell{
    \includegraphics[width=0.14\textwidth]{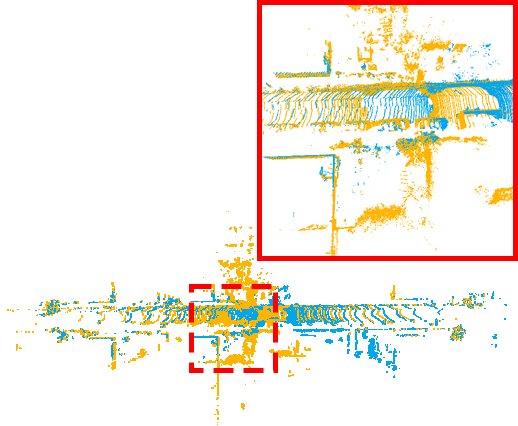} \\
    RE = 0.30$^{\circ}$                                                       \\
    TE = 0.07m                                                                \\
    }                                         &
    \makecell{
    \includegraphics[width=0.14\textwidth]{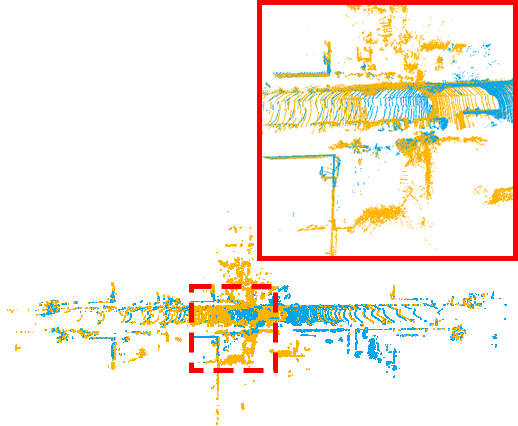}   \\
    \\
    \\
    }                                                                         \\
    \makecell{
    \includegraphics[width=0.14\textwidth]{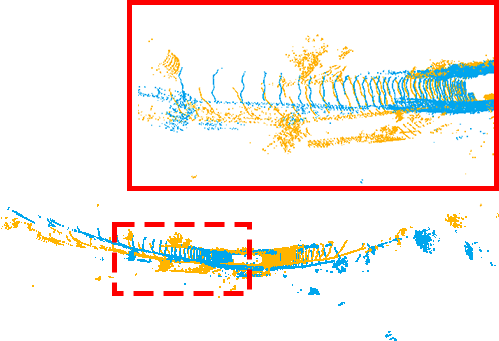}      \\
    Source: \#8000                                                            \\
      Target: \#8000
    }                                         &
    \makecell{
    \includegraphics[width=0.14\textwidth]{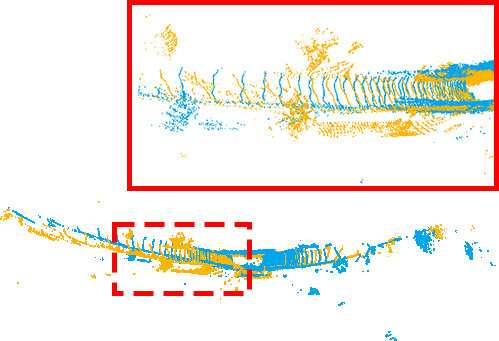}  \\
    \textcolor{red}{RE = 15.91$^{\circ}$}                                     \\
    \textcolor{red}{ TE = 10.66m}                                             \\
    }                                         &
    \makecell{
    \includegraphics[width=0.14\textwidth]{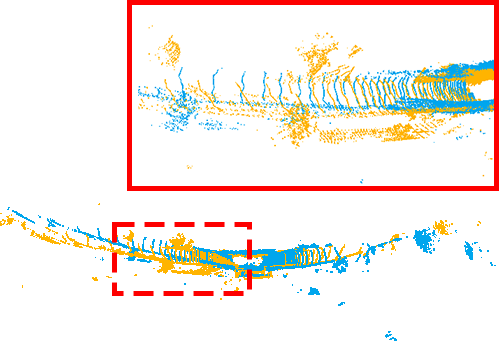}  \\
    \textcolor{red}{RE = 13.19$^{\circ}$}                                     \\
    \textcolor{red}{ TE = 10.80m}                                             \\
    }                                         &
    \makecell{
    \includegraphics[width=0.14\textwidth]{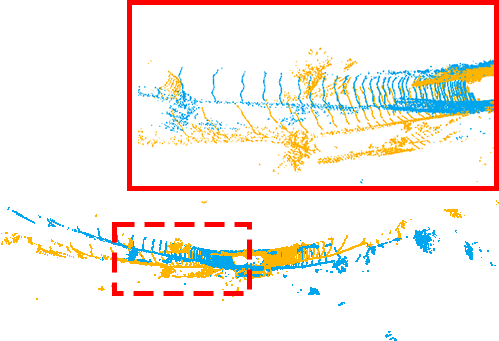} \\
    \textcolor{red}{RE = 9.54$^{\circ}$}                                      \\
    \textcolor{red}{ TE = 10.47m}                                             \\
    }                                         &
    \makecell{
    \includegraphics[width=0.14\textwidth]{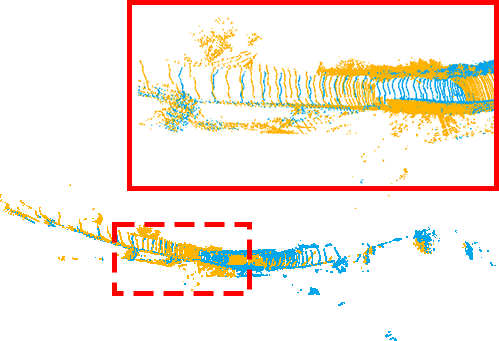} \\
    RE = 0.65$^{\circ}$                                                       \\
    TE = 0.77m                                                                \\
    }                                         &
    \makecell{
    \includegraphics[width=0.14\textwidth]{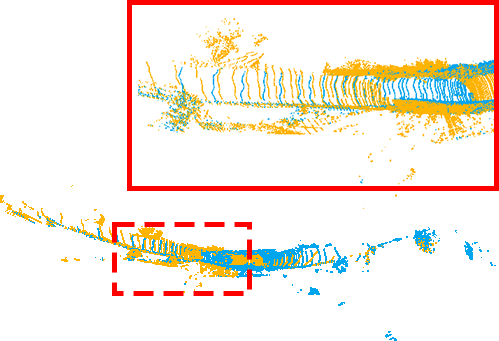}   \\
    \\
    \\
    }                                                                         \\
    \makecell{
    \includegraphics[width=0.14\textwidth]{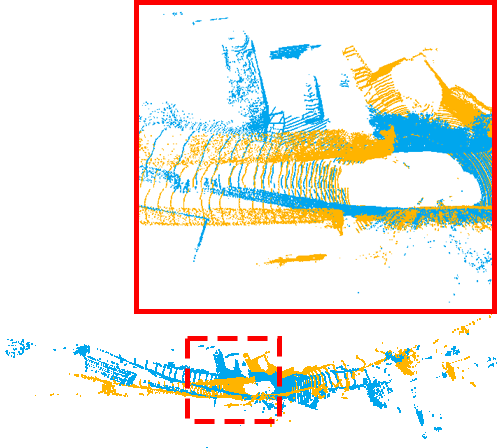}      \\
    Source: \#8000                                                            \\
      Target: \#8000
    }                                         &
    \makecell{
    \includegraphics[width=0.14\textwidth]{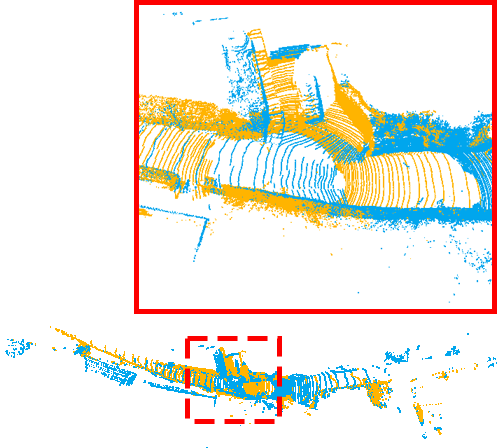}  \\
    RE = 0.80$^{\circ}$                                                       \\
    TE = 0.22m                                                                \\
    }                                         &
    \makecell{
    \includegraphics[width=0.14\textwidth]{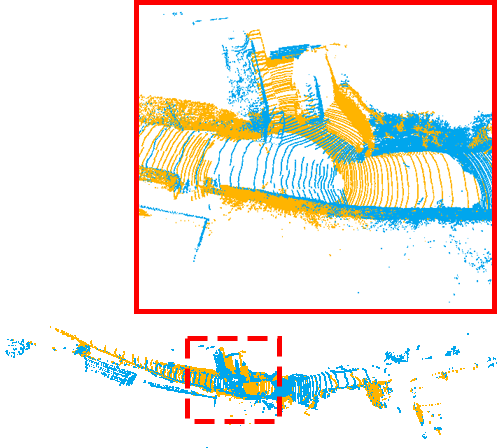}  \\
    RE = 1.24$^{\circ}$                                                       \\
    TE = 0.33m                                                                \\
    }                                         &
    \makecell{
    \includegraphics[width=0.14\textwidth]{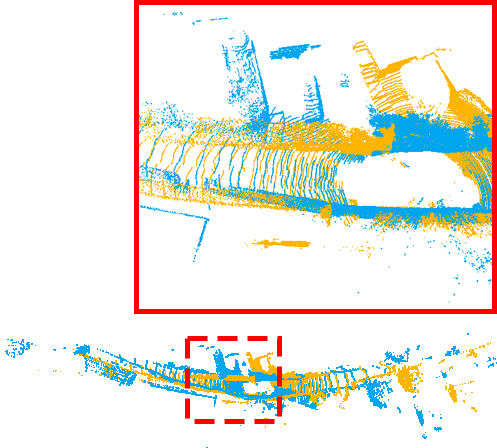} \\
    \textcolor{red}{RE = 9.04$^{\circ}$}                                      \\
    \textcolor{red}{ TE = 9.70m}                                              \\
    }                                         &
    \makecell{
    \includegraphics[width=0.14\textwidth]{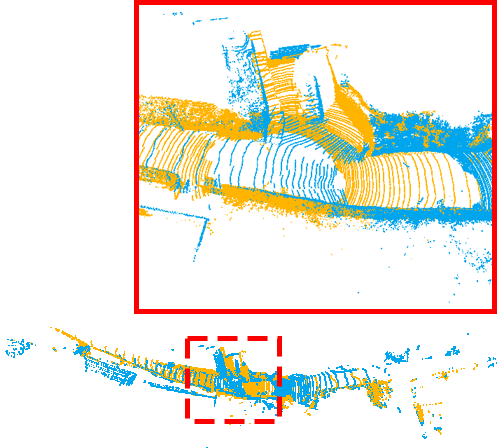} \\
    RE = 0.15$^{\circ}$                                                       \\
    TE = 0.19m                                                                \\
    }                                         &
    \makecell{
    \includegraphics[width=0.14\textwidth]{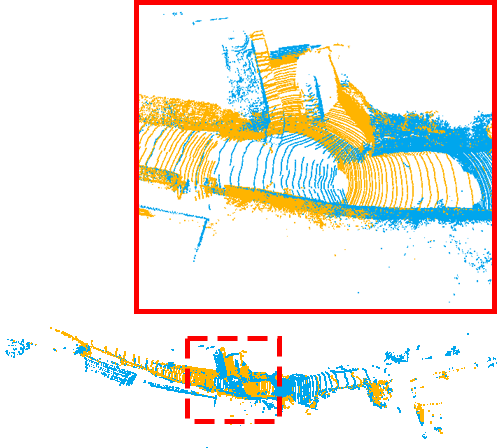}   \\
    \\
    \\
    }
  \end{tabular}
  \caption{Visualization of RANSAC-based TCF~\cite{shiRANSAC2024}, graph-based MAC~\cite{zhang3D2023}, BnB-based HERE~\cite{huangEfficient2024}, and our method on the outdoor KITTI dataset using FPFH descriptor~\cite{rusuFast2009}. ``\#'' is the number of downsampled points, and \textcolor{red}{red} text represents the failure case.}
  \label{fig_kitti}
\end{figure*}

Table~\ref{tab:KITTI} reports the evaluation results.
Our method achieves the highest RR and the fastest execution with both FPFH~\cite{rusuFast2009} and FCGF~\cite{rusuFast2009} descriptors on KITTI test dataset.
This efficiency advantage stems from the single top-$k$ rotation axis constraints and the restricted search range.
SC2-PCR++~\cite{chenSC2PCR2023} achieves competitive RR but incurs the highest time cost due to SOG resampling.
On account of the numerous candidate groups of FCGF~\cite{choyFully2019},
TCF~\cite{shiRANSAC2024} requires much more time than others to reach the 0.99 confidence.
Unexpectedly, RANSAC~\cite{fischlerRandom1981} achieves higher RR with FCGF than with FPFH, contrasting with others' lower RR using FCGF~\cite{choyFully2019}.
This occurs because the RANSAC~\cite{fischlerRandom1981} implemented in Open3D optimizes point cloud overlap, aligning with the triplet loss used in FCGF~\cite{choyFully2019}.
The visualization of registration results using FPFH descriptor~\cite{rusuFast2009} is shown in Fig.~\ref{fig_kitti},
demonstrating that our method achieves accurate registration even in extreme narrow-road LiDAR scenarios.

\subsection{4-DoF LiDAR Registration with Gravity Priors}
\label{sec:4dof_exp}

Under gravity priors, LiDAR registration reduces the registration to 4 DoF with a known rotation axis.
In this section, we evaluate the proposed method using both the 1+3 and 3+1 decomposition strategies, and compare it with representative 4-DoF registration methods on the KITTI dataset.
To ensure a fair comparison, the ground-truth rotation axis is provided to all methods.

\begin{table}[!htbp]
  \caption{4-DoF registration results on KITTI Dataset.\label{tab:KITTI_4DoF}}
  \centering
  \begin{tabular}{c|ccc|c}
    \hline
                                          & \multicolumn{3}{c|}{FPFH + rotation axis prior} &                                                                       \\
                                          & RR(\%)$\uparrow$                                & RE($^{\circ}$)$\downarrow$ & TE(cm)$\downarrow$ & Time(s)$\downarrow$ \\
    \hline
    Quatro~\cite{limQuatro2024}           & 99.64                                           & 0.13                       & 9.45               & 2.10                \\
    BnB~\cite{caiPractical2019}           & 98.92                                           & 0.19                       & 16.54              & 0.17                \\
    FMP+BnB~\cite{caiPractical2019}       & 99.64                                           & 0.27                       & 24.91              & \textbf{0.05}       \\
    Li et al.~\cite{liTransformation2024} & 94.59                                           & 0.24                       & 28.01              & 0.62                \\
    3D-BBS~\cite{aoki3DBBS2024}           & \underline{99.82}                               & 0.62                       & 101.06             & 8.38                \\
    GMOR (1+3)                            & 99.64                                           & \textbf{0.08}              & \underline{7.80}   & \textbf{0.05}       \\
    GMOR (3+1)                            & \textbf{100.00}                                 & \textbf{0.08}              & \textbf{7.18}      & 0.17                \\
    \hline
  \end{tabular}
\end{table}

\begin{figure}[!htbp]
  \centering
  \subfloat[]{\includegraphics[width=0.5\columnwidth]{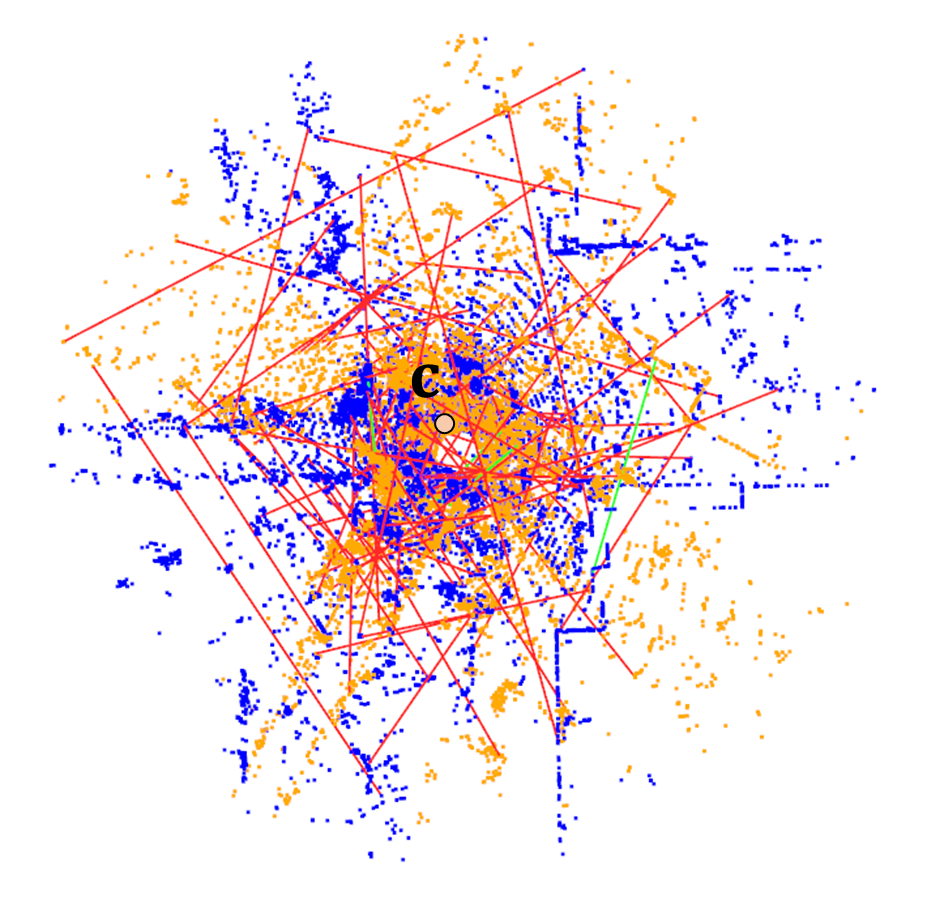}}
  \hfill
  \subfloat[]{\includegraphics[width=0.5\columnwidth]{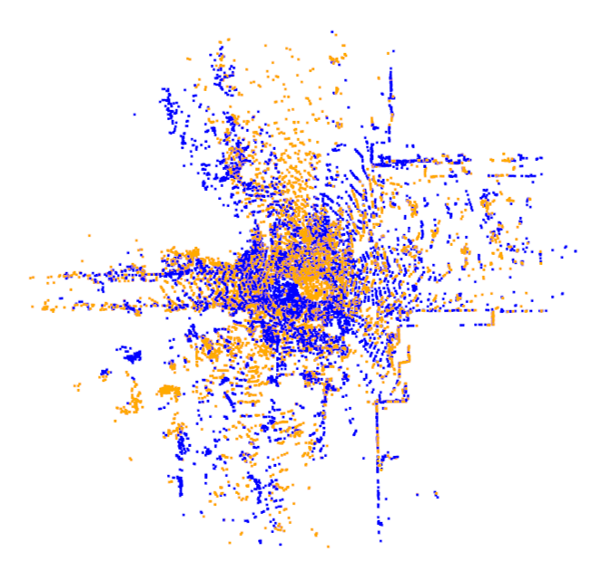}}
  \caption{Initial feature correspondences and registration result under the BEV view of LiDAR point clouds, where the source point cloud is shown in orange and the target in blue.
  (a) Initial feature correspondences, where inliers are shown as green lines and outliers as red lines; $\mathbf{c}$ denotes the ground-truth Chasles' rotation center.
  (b) Registration result obtained by our method.}
  \label{fig_kitti_corrs}
\end{figure}

As shown in Table~\ref{tab:KITTI_4DoF}, the proposed method achieves competitive performance under the 4-DoF setting.
Among the two decomposition strategies, the 3+1 variant consistently yields the most robust results, achieving the highest recall together with the lowest transformation error.
In comparison, the 1+3 decomposition firstly estimates the translation along the rotation axis, remaining efficient but exhibits slightly higher sensitivity to outliers due to the 1D geometric constraint.
As illustrated in Fig.~\ref{fig_kitti_corrs}, our method jointly enforces both rotation and translation on the 2D plane, enabling accurate inlier identification and yielding correct transformation.
These results empirically validate the analysis in Sec.~\ref{sec:4dof}, where resolving the 2D rigid transformation firstly imposes stronger geometric constraints and reduces ambiguity for the subsequent 1D translation estimation.

\begin{figure*}[!htbp]
  \centering
  \subfloat{}{
    \definecolor{color1}{HTML}{429B42}
    \definecolor{color2}{HTML}{7D5CA9}
    \definecolor{color3}{HTML}{FB7C0B}
    \definecolor{color4}{HTML}{706A0E}
    \definecolor{color5}{HTML}{0018BF}
    \definecolor{color6}{HTML}{A30156}
    \definecolor{color7}{HTML}{813D10}
    \definecolor{color8}{HTML}{454545}
    \begin{tikzpicture}[
        legend/.style={draw, thin, minimum size=7pt, inner sep=0pt},
        label/.style={font=\footnotesize}
      ]
      \foreach \name/\color/\x in {
          TCF\cite{shiRANSAC2024}/color1/0.05\textwidth,
          TEASER++\cite{yangTEASER2021}/color2/0.32\textwidth,
          SC2-PCR++\cite{chenSC2PCR2023}/color3/0.59\textwidth,
          MAC\cite{zhang3D2023}/color4/0.86\textwidth
        } {
          \node[legend, draw=\color, rectangle] at (\x,0.5) {\hspace*{2.0em}};
          \node[label, anchor=west] at (\x+0.02\textwidth,0.5) {\name};
        }
      \foreach \name/\color/\x in {
          TEAR\cite{huangScalable2024}/color5/0.05\textwidth,
          HERE\cite{huangEfficient2024}/color6/0.32\textwidth,
          TR-DE\cite{chenDeterministic2022}/color7/0.59\textwidth,
          GMOR (Ours)/color8/0.86\textwidth
        } {
          \node[legend, draw=\color, rectangle] at (\x,0) {\hspace*{2.0em}};
          \node[label, anchor=west] at (\x+0.02\textwidth,0) {\name};
        }
      \draw[thin, draw=black] (0.01\textwidth,0.7) rectangle (0.99\textwidth,-0.2);
    \end{tikzpicture}
  }
  \\
  \setcounter{subfigure}{0}
  \subfloat[]{\includegraphics[width=0.5\columnwidth]{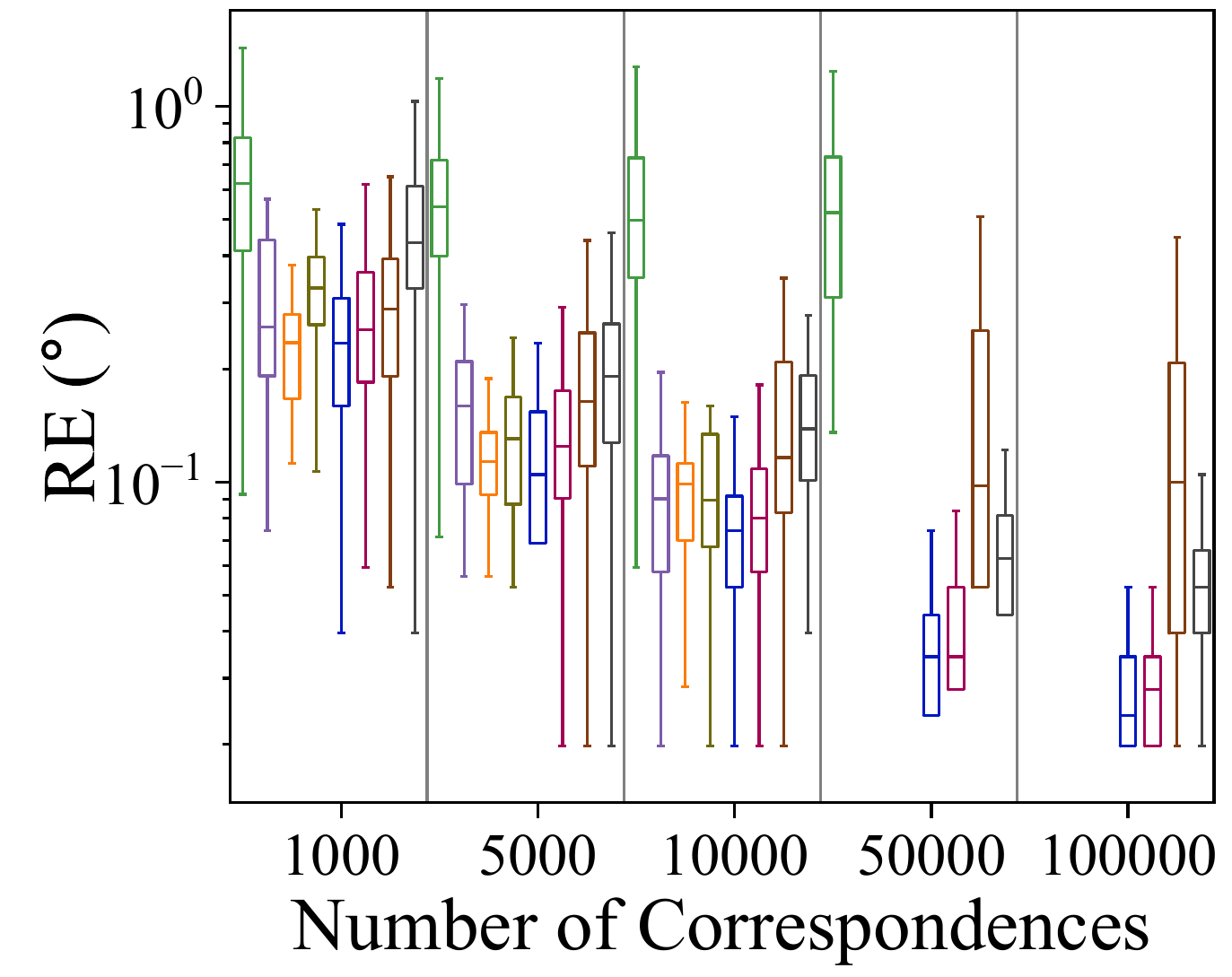}}
  \subfloat[]{\includegraphics[width=0.5\columnwidth]{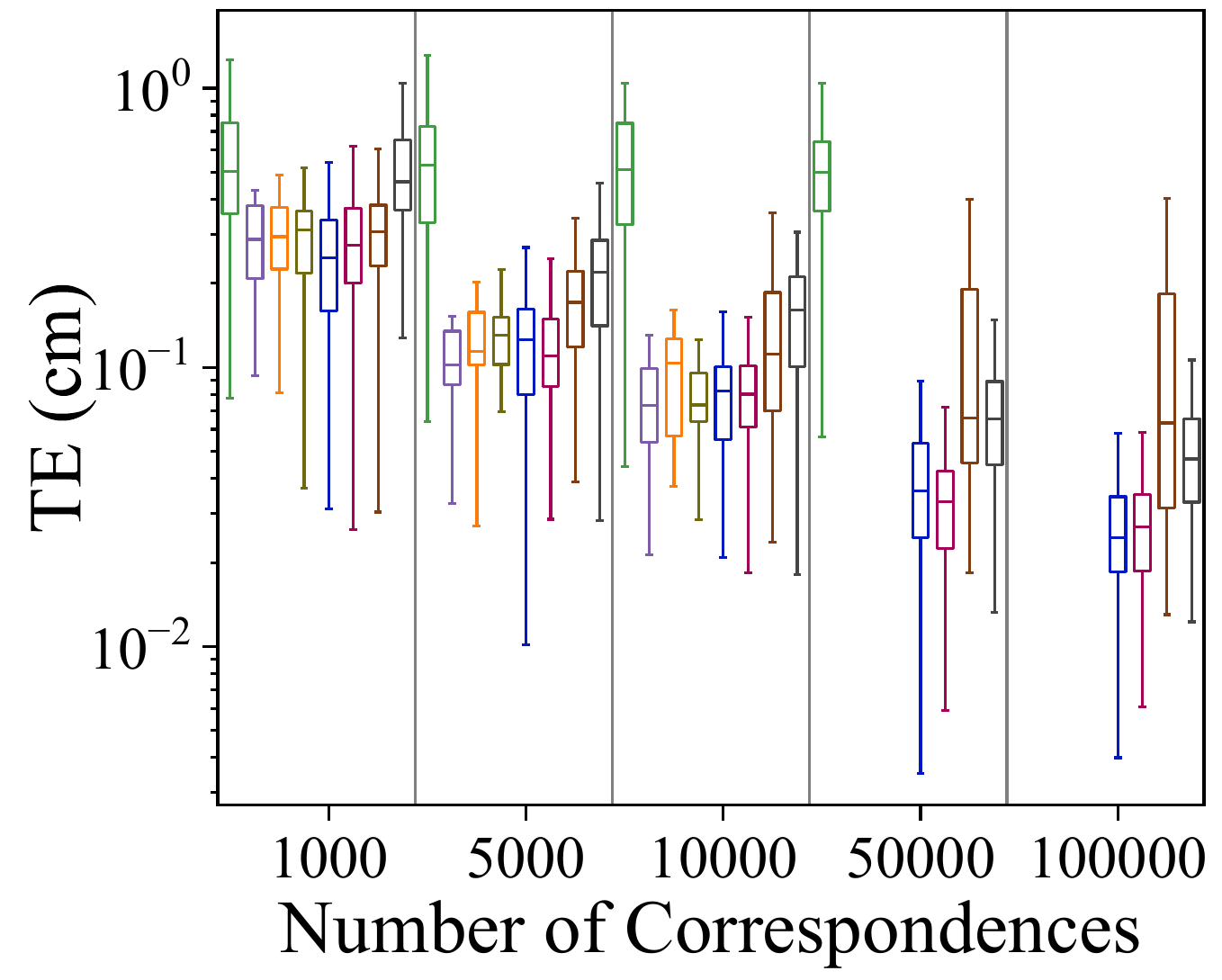}}
  \subfloat[]{\includegraphics[width=0.5\columnwidth]{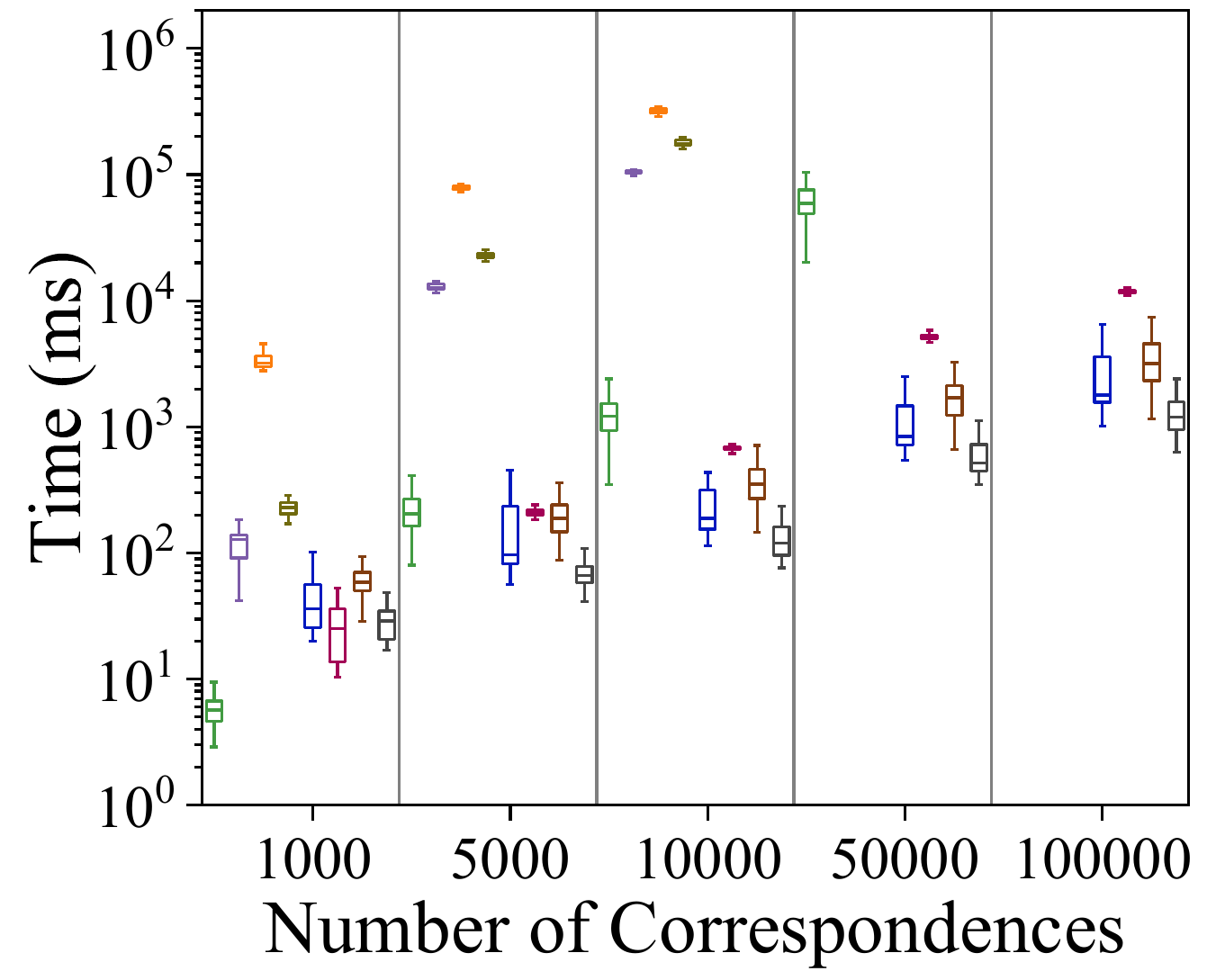}}
  \subfloat[]{\includegraphics[width=0.5\columnwidth]{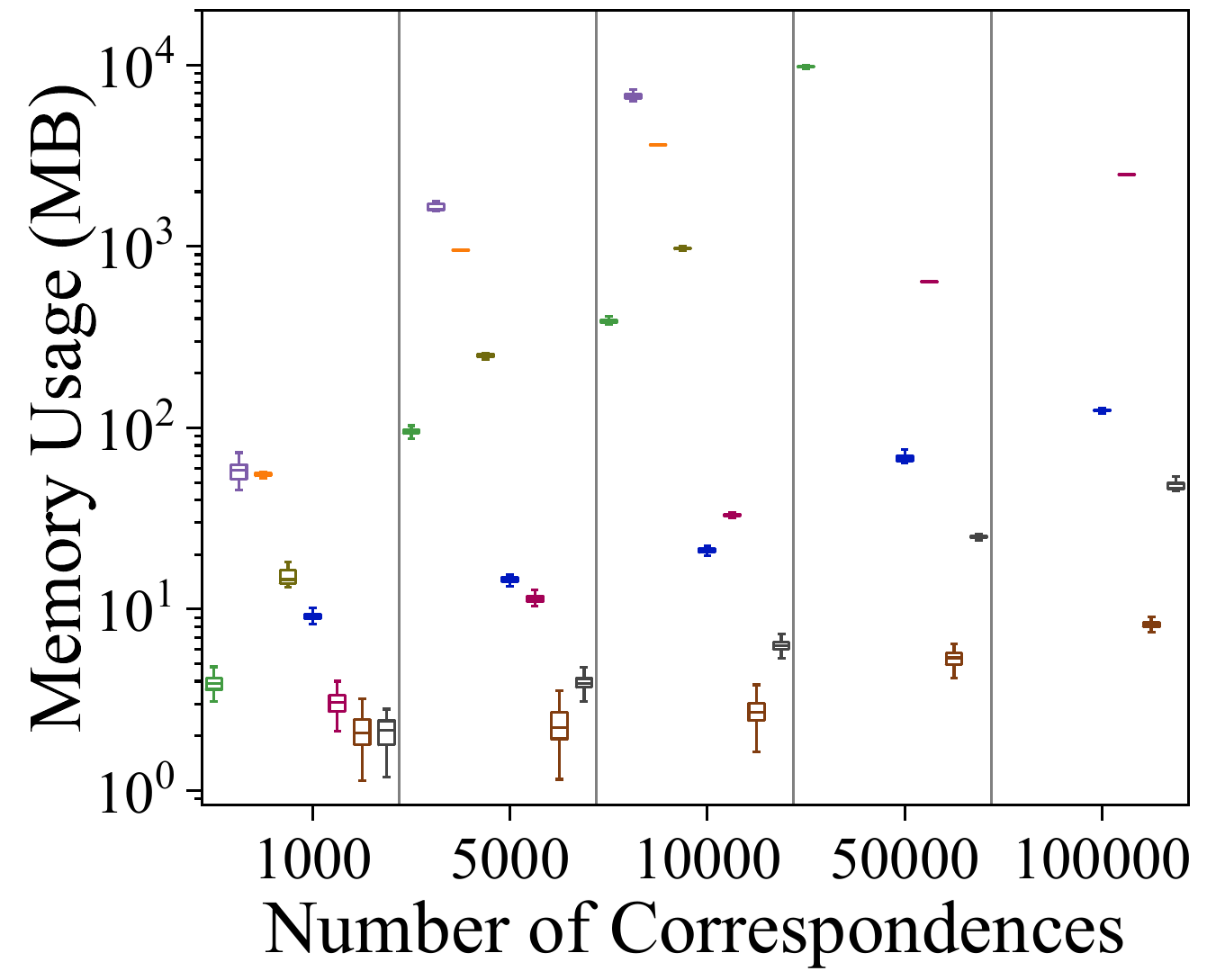}}
  \\
  \subfloat[]{\includegraphics[width=0.5\columnwidth]{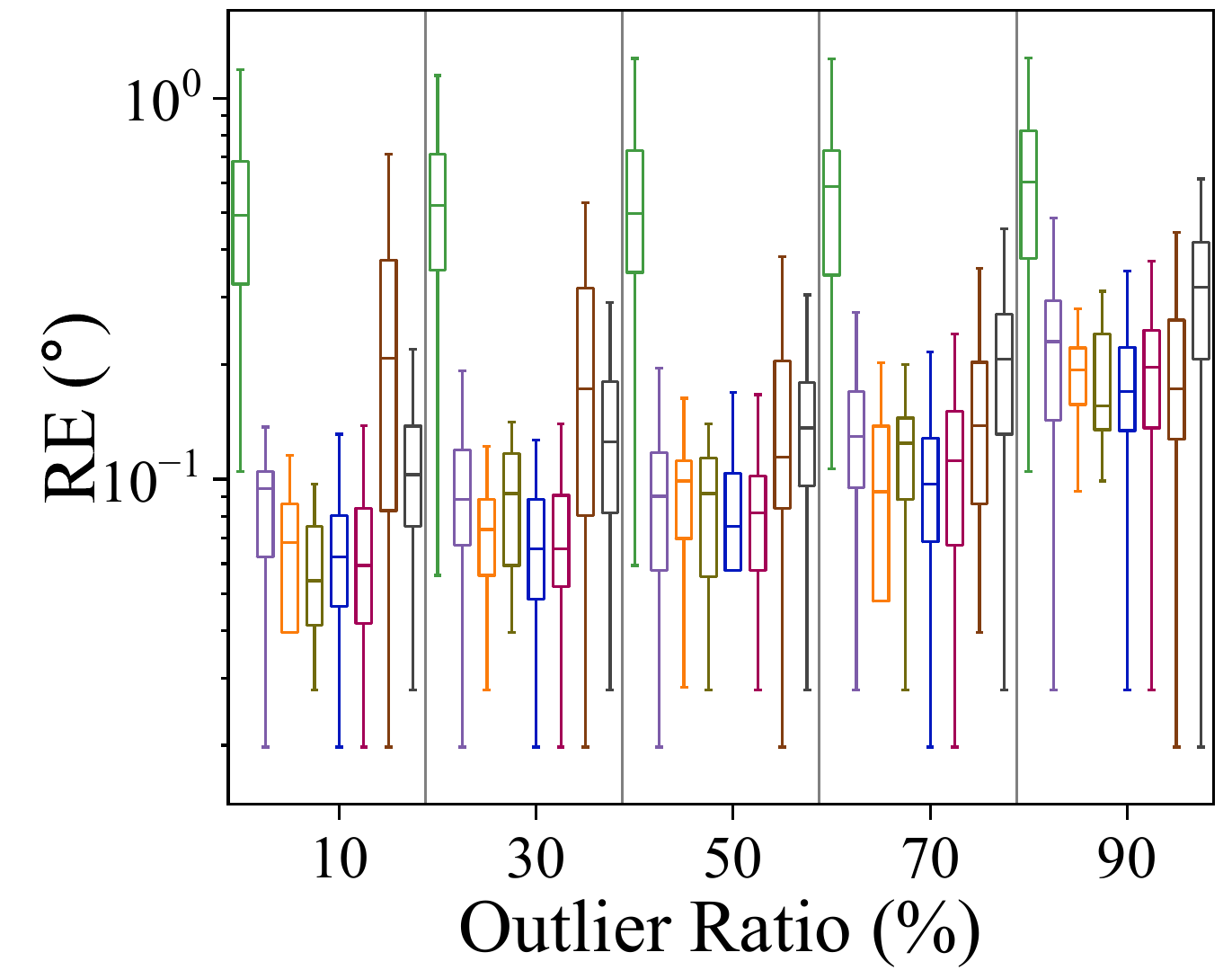}}
  \subfloat[]{\includegraphics[width=0.5\columnwidth]{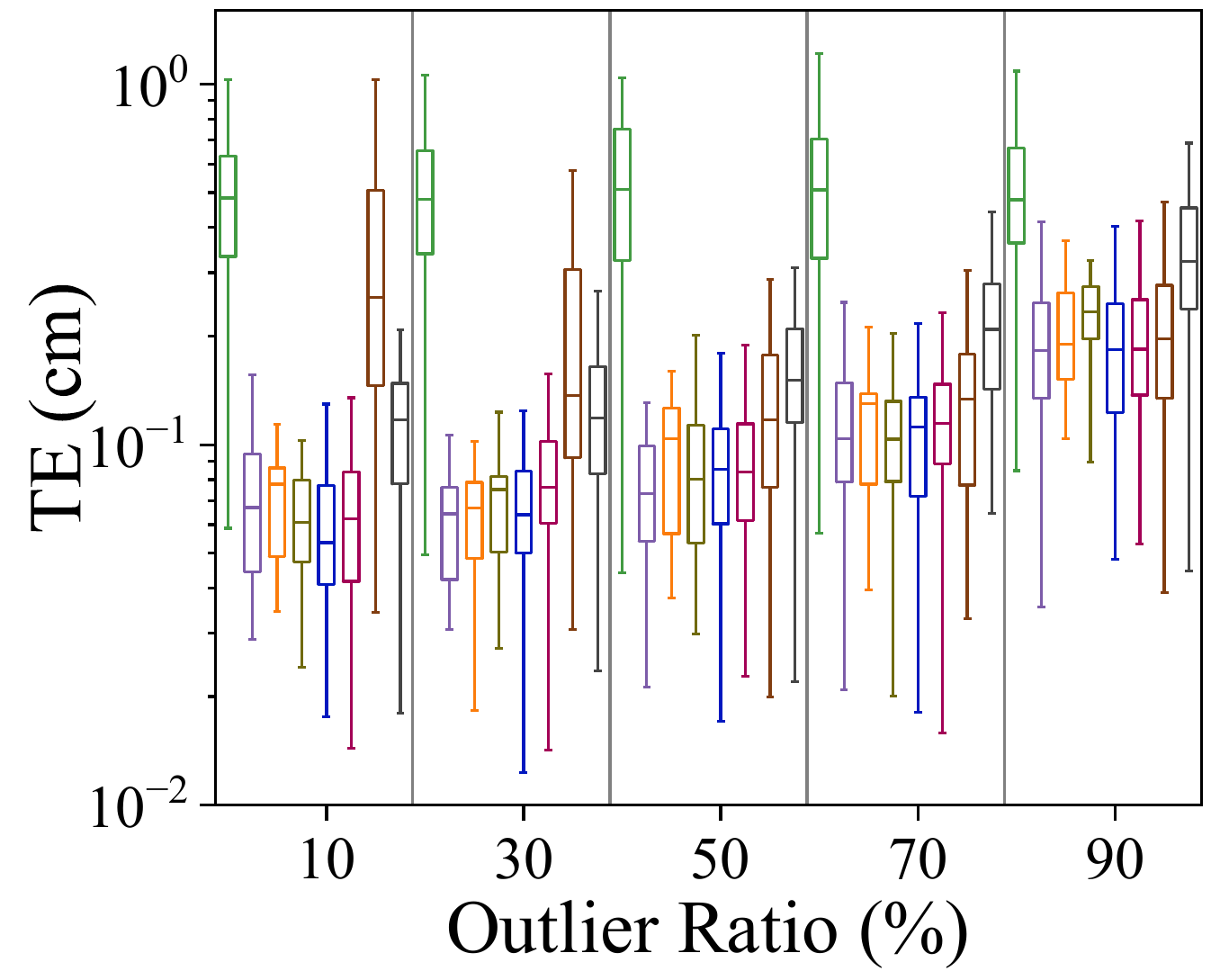}}
  \subfloat[]{\includegraphics[width=0.5\columnwidth]{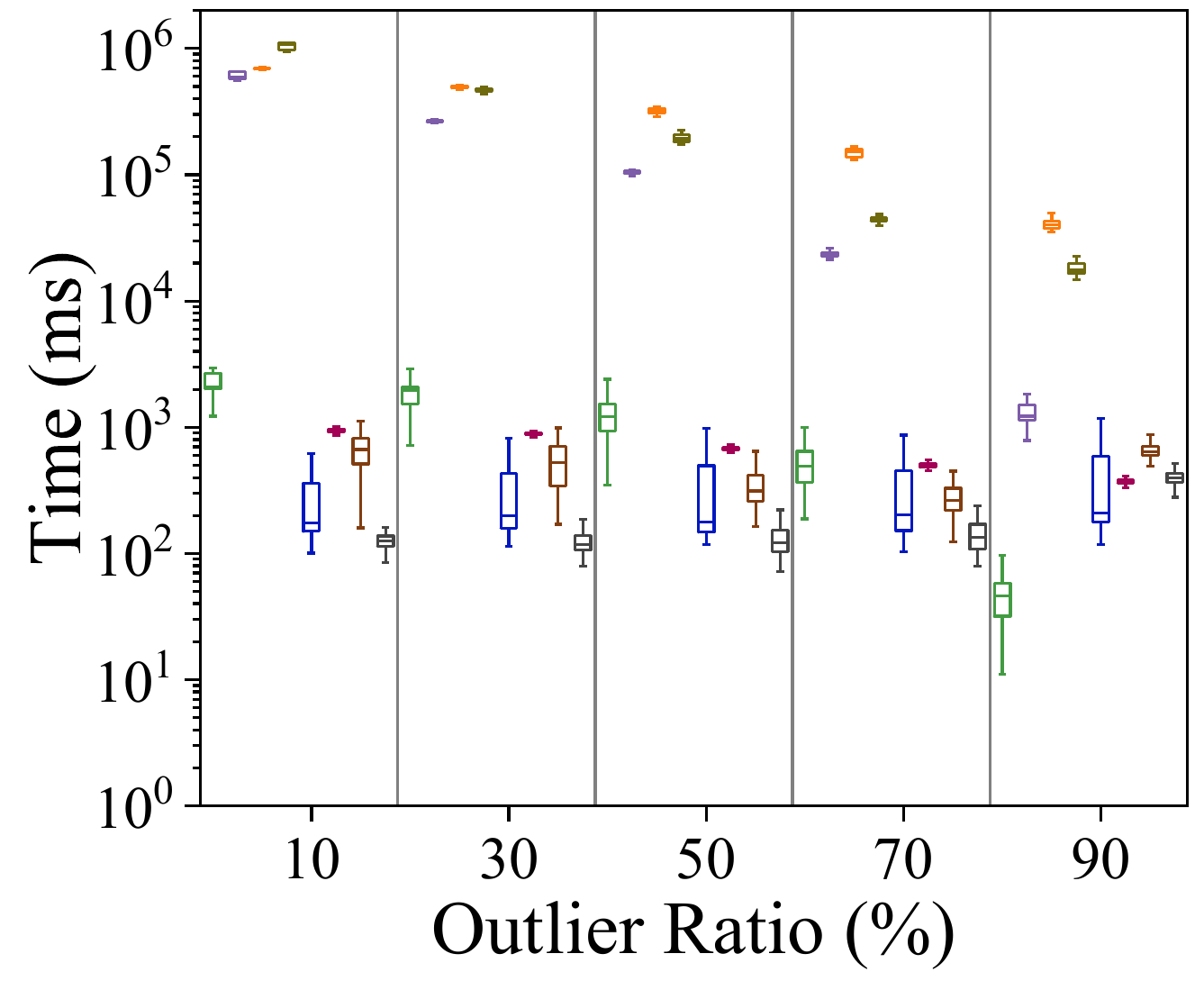}}
  \subfloat[]{\includegraphics[width=0.5\columnwidth]{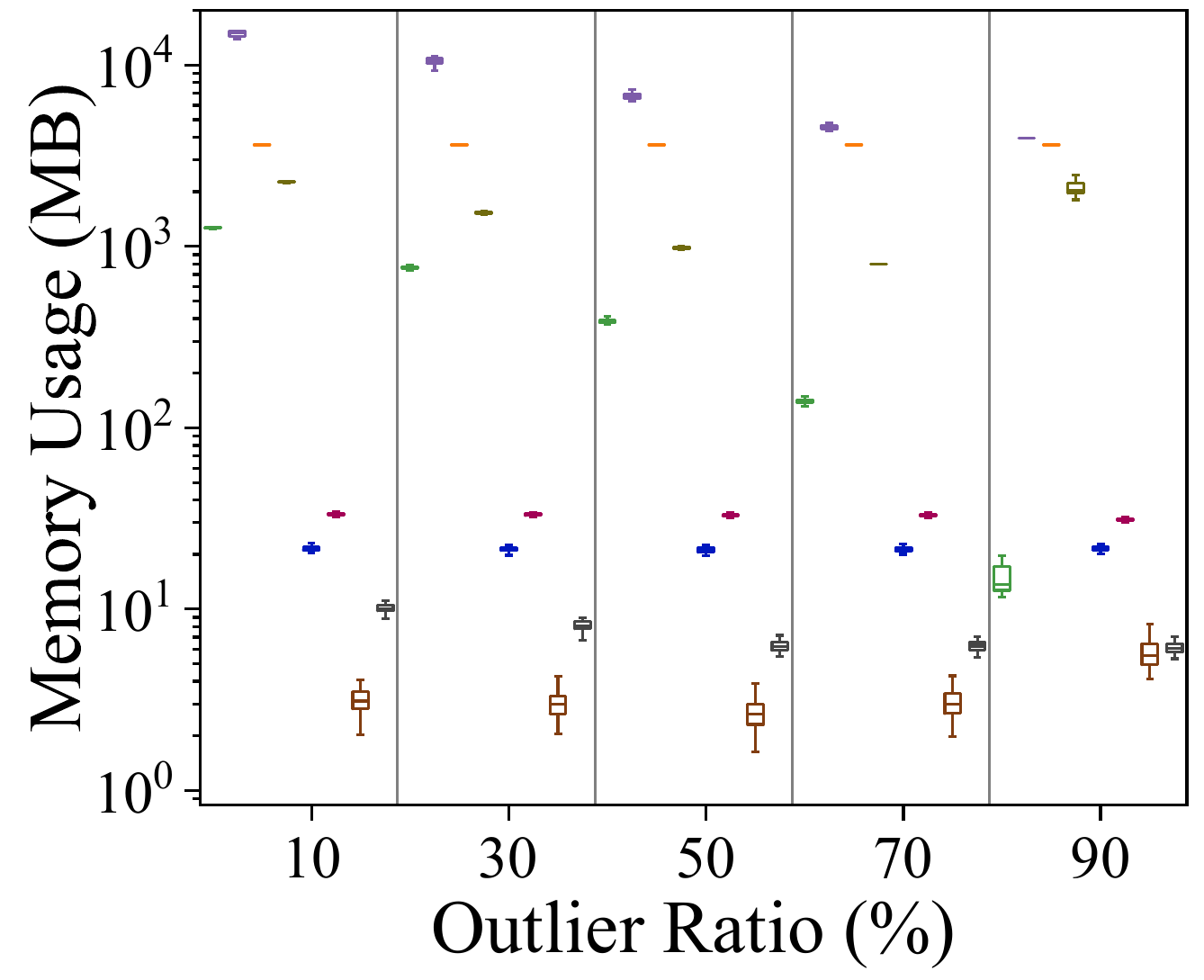}}
  \\
  \subfloat[]{\includegraphics[width=0.5\columnwidth]{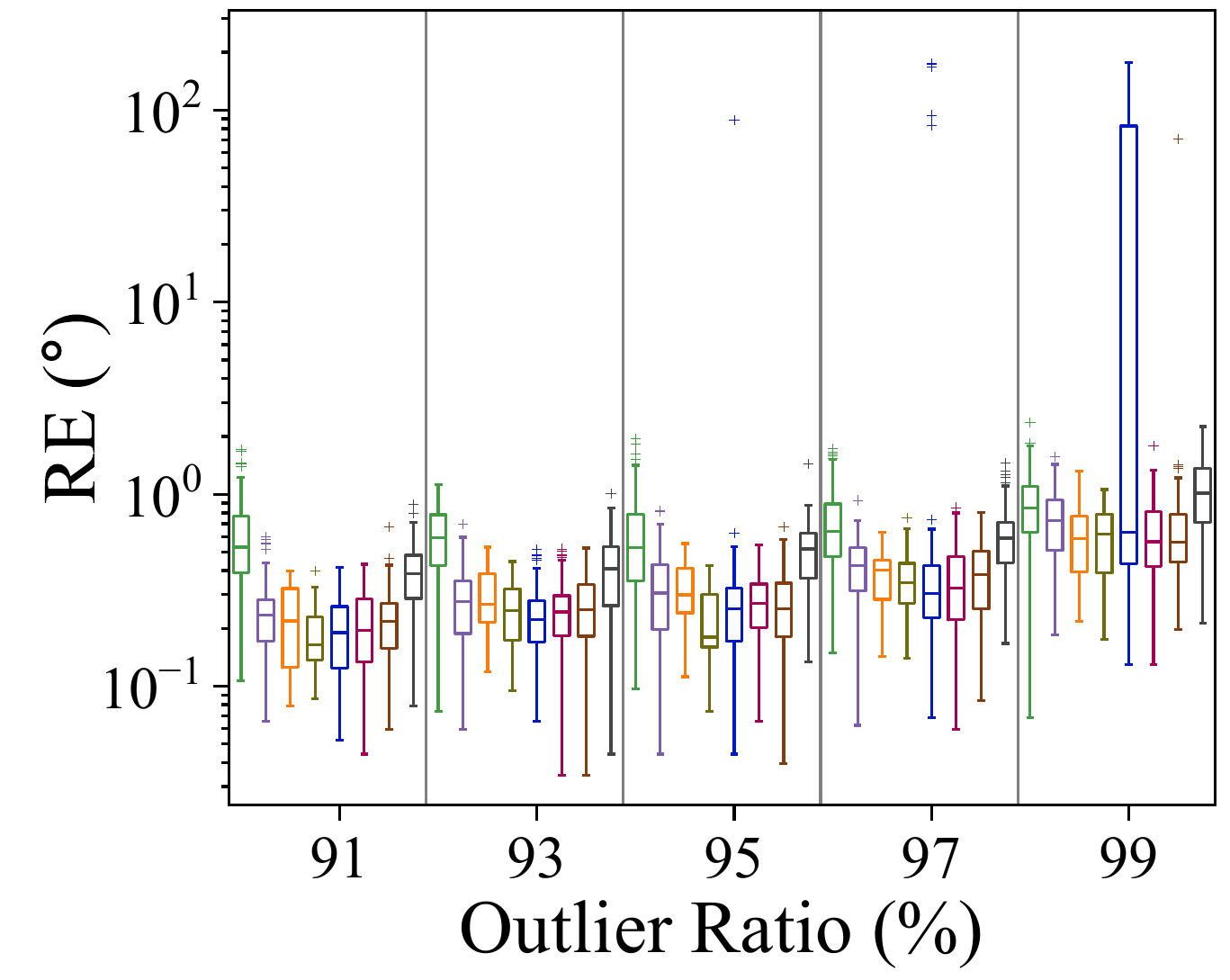}}
  \subfloat[]{\includegraphics[width=0.5\columnwidth]{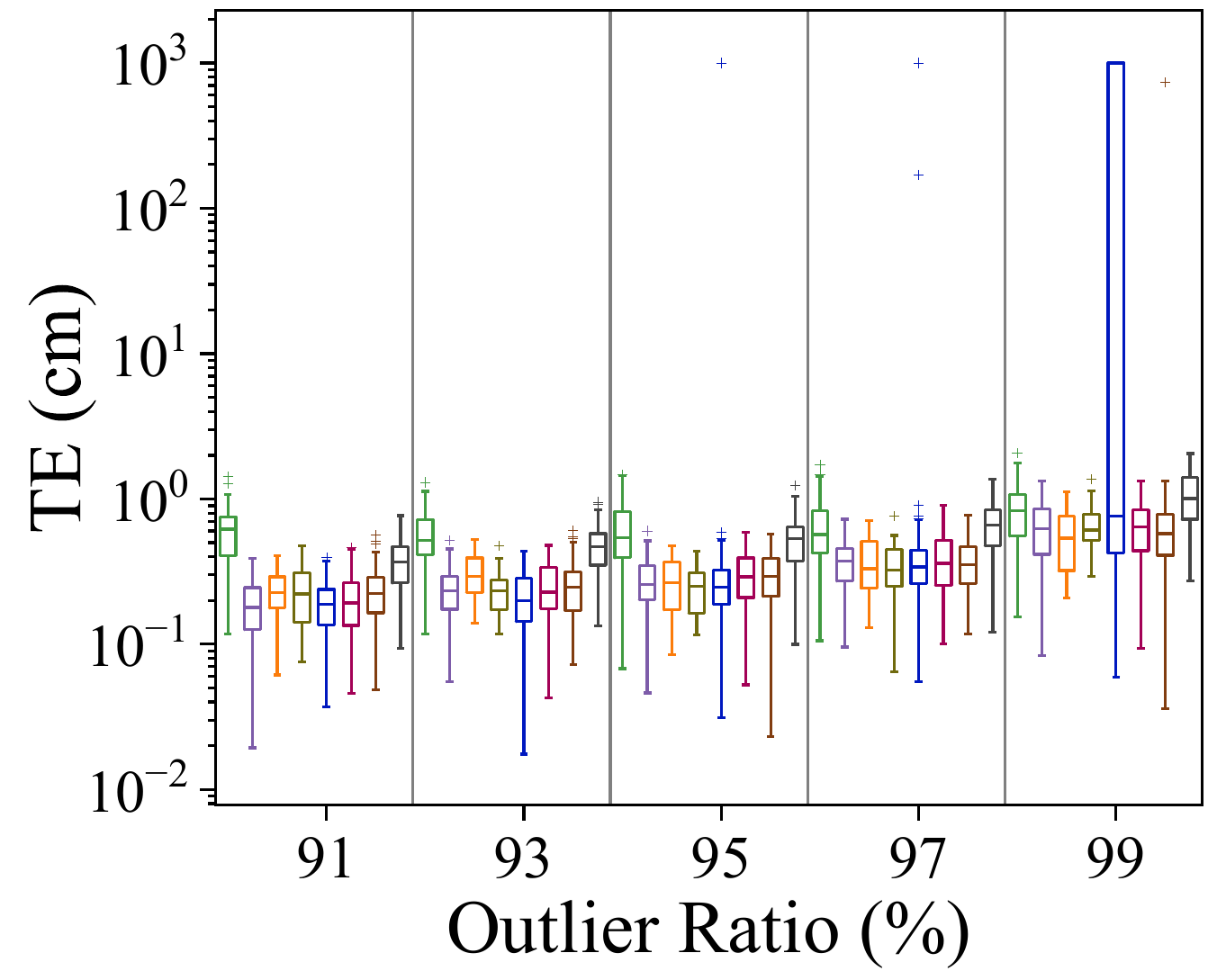}}
  \subfloat[]{\includegraphics[width=0.5\columnwidth]{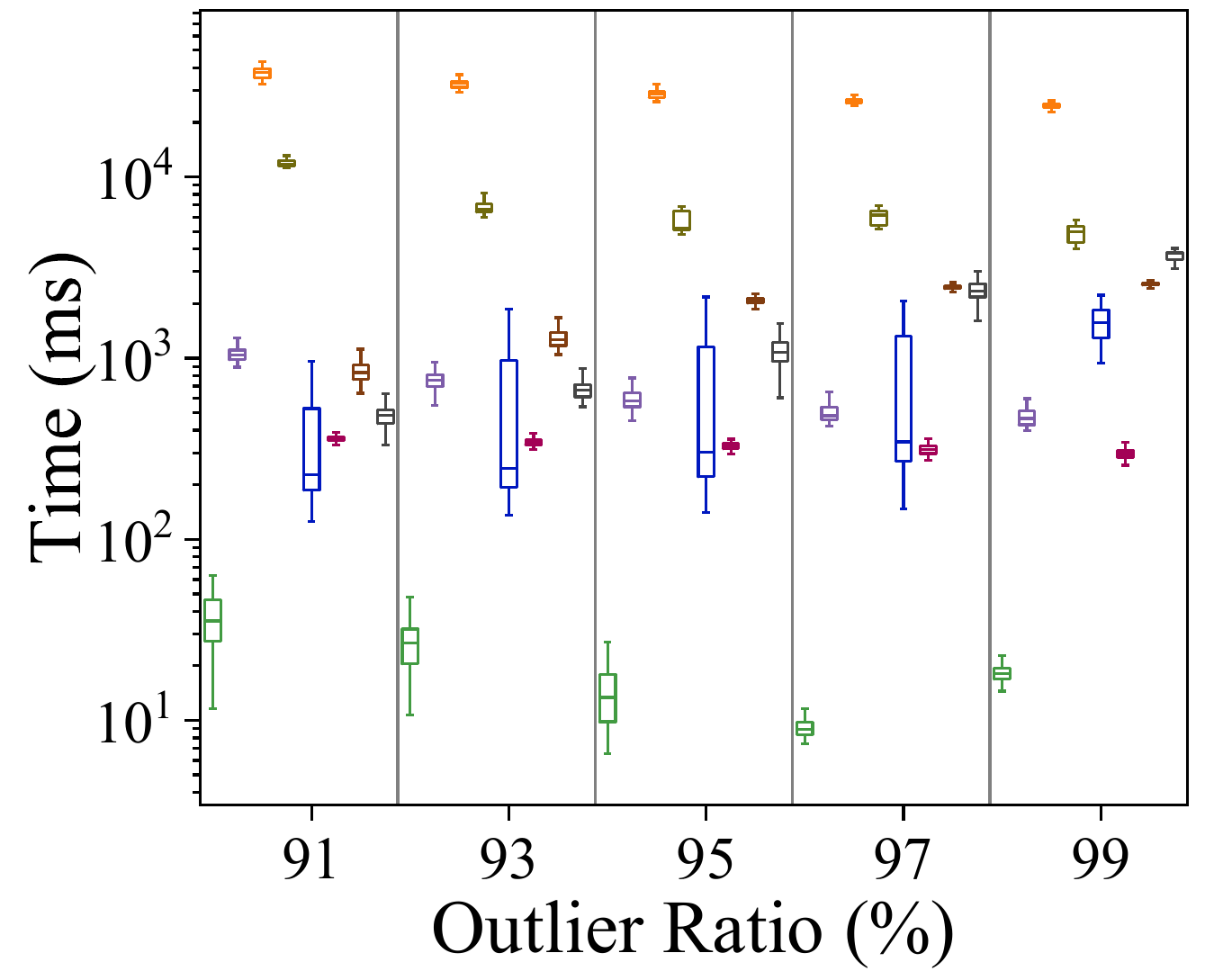}}
  \subfloat[]{\includegraphics[width=0.5\columnwidth]{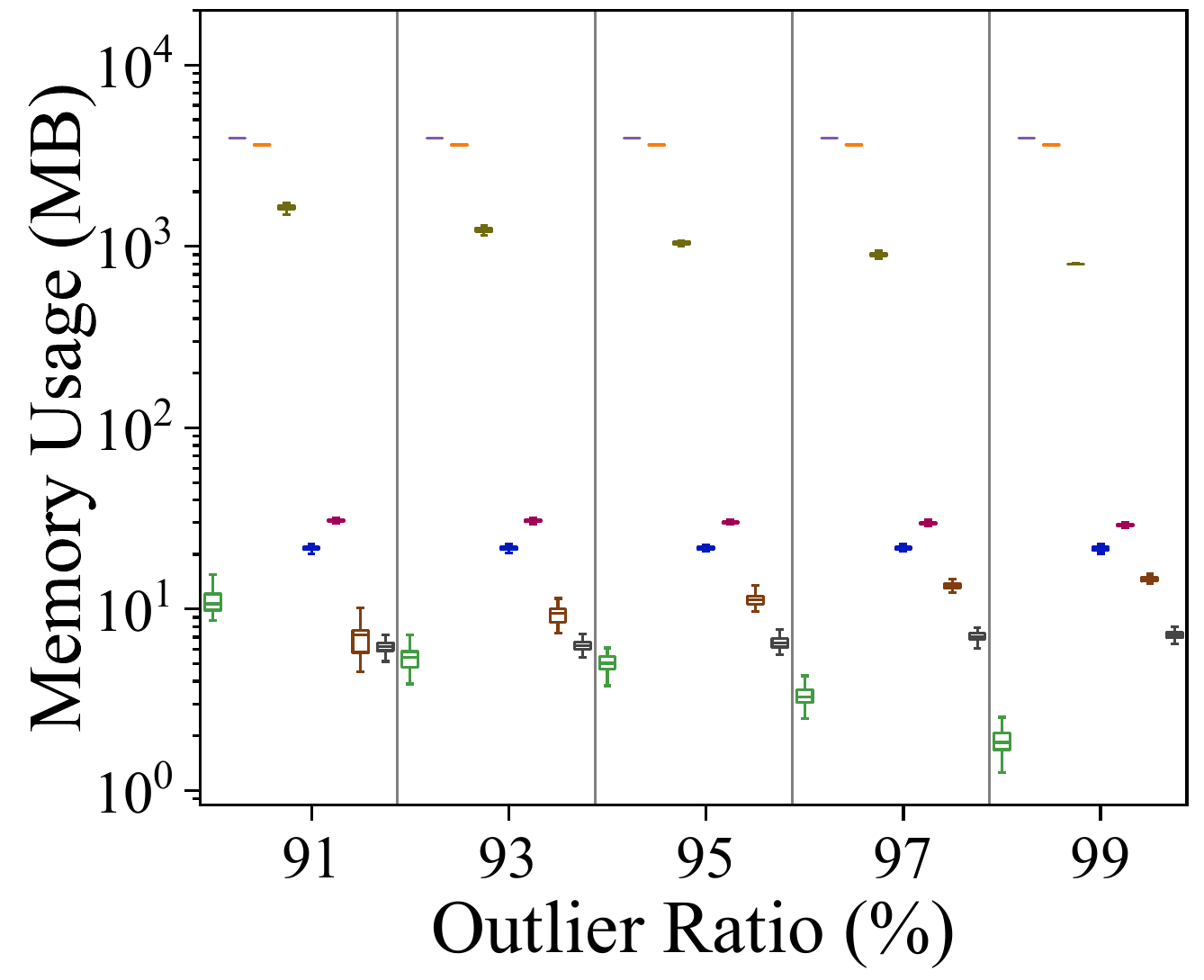}}
  \caption{Performance evaluation of compared SOTA methods with varying numbers of correspondences and outlier ratios on synthetic data.
    (a)-(d) Number of correspondences;
    (e)-(h) Outlier ratio;
    (i)-(l) Extreme outlier ratio.}
  \label{fig_syn}
\end{figure*}

Compared with existing 4-DoF registration methods, the proposed method achieves strong robustness while retaining computational efficiency.
Quatro~\cite{limQuatro2024} exhibits similar robustness with improved runtime efficiency compared to previous 6-DoF TEASER++~\cite{yangTEASER2021} on KITTI dataset.
BnB and FMP+BnB~\cite{caiPractical2019} perform translation-space search with interval stabbing of rotation angle after ISS-based correspondence filtering, while FMP further suppresses outliers, the reduced number of retained correspondences leads to higher efficiency but lower registration accuracy.
Li et al.~\cite{liTransformation2024} employ a three-stage pipeline combining interval stabbing, BnB searches and global voting, which is more sensitive to outlier distributions and results in lower recall.
Finally, the correspondence-free 3D-BBS~\cite{aoki3DBBS2024} achieves high recall at the cost of significantly increased runtime and TE, reflecting the trade-off between robustness and efficiency imposed by voxel map resolution.

\subsection{Performance Evaluation on Synthetic Data}

To validate the robustness, space and time complexity of our method versus SOTA methods,
we evaluate them with varying number of correspondences and outlier ratio.
The Armadillo model from the Stanford 3D Scanning Repository~\cite{curlessvolumetric1996} serves as synthetic data.
This model consists of 172974 points, and we downsample to $N \in \{1000, 5000, \dots, 100000\}$ correspondences.
Point clouds are normalized to a $[-1,1]^3$~m cube before adding zero-mean Gaussian noise and outliers.
For the outlier ratio, we randomly generate points in a 10~m diameter sphere, and replace the target points with a probability equal to outlier ratio.
Noise magnitude and inlier threshold $\xi$ (set to 0.1) correspond to~\eqref{eq_problem}.
We record the estimated RE and TE, elapsed time and memory usage of compared methods across number of correspondences and outlier ratios in synthetic experiments.

\subsubsection{Number of correspondences}

The number of correspondences increases from 1000 to 100000 with 50\% outliers across 100 repetitions.
The reported memory usage represents the memory allocated by the methods, excluding runtime overhead and data loading.
Besides, we modify the source code of HERE~\cite{huangEfficient2024} to dynamically allocate bitmask matrices, replacing static allocation for accurate measurement.

Fig.~\ref{fig_syn}(a)-(d) show the results across varying number of correspondences.
The estimated error decreases for all the methods as the number increases.
TCF~\cite{shiRANSAC2024}, TEASER++~\cite{yangTEASER2021}, MAC~\cite{zhang3D2023} and SC2-PCR++~\cite{chenSC2PCR2023} exhibit significantly higher time consumption than our method with large number of correspondences.
The memory usage increases linearly with the number of correspondences for TEAR~\cite{huangScalable2024}, TR-DE~\cite{chenDeterministic2022} and our method,
but polynomially for TCF~\cite{shiRANSAC2024}, TEASER++~\cite{yangTEASER2021}, MAC~\cite{zhang3D2023} and HERE~\cite{huangEfficient2024}.
TEASER++~\cite{yangTEASER2021}, SC2-PCR++~\cite{chenSC2PCR2023} and MAC~\cite{zhang3D2023} suffer from out-of-memory (OOM) errors beyond 50000 correspondences, while TCF~\cite{shiRANSAC2024} fails at 100000.
Our method maintains memory under 100~MB at 100000 correspondences.

\subsubsection{Outlier ratio}

We  evaluate performance across outlier ratios from 0.1 to 0.9 with 10000 correspondences, and the results are reported in Fig.~\ref{fig_syn}(e)-(h).
TCF~\cite{shiRANSAC2024} consumes much more time and memory when the outlier ratio is low, and it exhibits the highest error since IRLS converges slower than Ordinary Least Squares (OLS) under ideal Gaussian noises in our synthetic environment,
Similarly, the graph-based TEASER++~\cite{yangTEASER2021}, MAC~\cite{zhang3D2023}, SC2-PCR++~\cite{chenSC2PCR2023} incur substantial overhead due to dense graph construction and recursive search.
Our method maintains near-optimal time/memory efficiency while ensuring robustness across outlier ratios.

Furthermore, an independent experiment with extreme outlier ratio from 0.91 to 0.99 is also conducted, and the reported results are shown in Fig.~\ref{fig_syn}(i)-(l).
Not a number (NaN) rotation errors are set to 180$^\circ$, translation errors to 10~m.
Failure cases appear in TEAR~\cite{huangScalable2024} and TR-DE~\cite{chenDeterministic2022}, while other methods succeed.
The time advantage of our method narrows in BnB worst-case scenarios, while the graph-based methods gain efficiency.
Specifically, TCF~\cite{shiRANSAC2024} has the best performance of time and memory with extreme outlier ratio via one-point RANSAC outlier prefiltering.
However, this advantage is limited to ideal scenarios with synthetically added uniformly random outliers and Gaussian noise.
On previous real-world datasets, its computational efficiency degrades significantly.
Our method maintains constant memory usage and achieves the suboptimal performance.
In summary, our method demonstrates consistent robustness, efficiency, and low memory consumption across numbers of correspondences and outlier ratios, ensuring predictable time and memory growth in practical applications.

\section{Conclusion}
In this work, we propose a novel computational geometry-based point cloud registration method named GMOR, which combines BnB search strategy in parameter space and the solution of geometric maximum overlapping formulated as RMQ problem.
To ensure the polynomially bounded time complexity and linear space complexity with respect to the number of correspondences, we decompose the search into two stages: rotation axis search with interval stabbing for 1D RMQ, and rotation angle search with sweep line algorithm for 2D RMQ.
Experimental results on real and synthetic data demonstrate that our approach achieves efficient and robust registration with low computational overhead, showing potential in lightweight robotics and computer vision applications.

\appendices

\section{Derivation of the Reformulated Transformation}
\label{appd:chasles}

\subsection{Decomposition of residual vector}

Considering the residual vector $\mathbf{e}_i$ of the $i$-th correspondence in~\eqref{eq_problem}:
\begin{equation}
    \mathbf{e}_i = \mathbf{Q}_i - (\mathbf{R}\mathbf{P}_i + \mathbf{t}) .
    \label{eq_res_ei}
\end{equation}

Let $\mathbf{r}$ denote the unit rotation axis of $\mathbf{R}$ with $\|\mathbf{r}\| = 1$.
The translation vector $\mathbf{t}$ can be decomposed into components parallel and perpendicular to $\mathbf{r}$:
\begin{equation}
    d = \mathbf{r} \cdot \mathbf{t}, \quad
    \mathbf{t}_\perp = \mathbf{t} - d\mathbf{r}
    = (\mathbf{I} - \mathbf{r}\mathbf{r}^\top) \mathbf{t} .
    \label{eq_t_decomp}
\end{equation}

By Chasles' theorem, any rigid transformation can be reformulated as a screw motion:
a rotation about axis $\mathbf{r}$ passing through a point $\mathbf{C}$ and a translation $d\mathbf{r}$ along the same axis.
Equivalently, the rigid motion of point $\mathbf{C}$ is parallel to $\mathbf{r}$:
\begin{equation}
    \mathbf{R}\mathbf{C} + \mathbf{t} - \mathbf{C} = d\mathbf{r}
    \Longleftrightarrow
    \mathbf{t} + (\mathbf{R} - \mathbf{I}) \mathbf{C} = d\mathbf{r} .
    \label{eq_RC}
\end{equation}

Define $\mathbf{S}_i = \mathbf{Q}_i - \mathbf{P}_i$.
The residual in \eqref{eq_res_ei} can be rewritten as
\begin{equation}
    \begin{aligned}
        \mathbf{e}_i
         & = \mathbf{S}_i - (\mathbf{R} - \mathbf{I}) \mathbf{P}_i - \mathbf{t}                   \\
         & = \mathbf{S}_i - (\mathbf{R} - \mathbf{I}) (\mathbf{P}_i - \mathbf{C})
        - (\mathbf{t} + (\mathbf{R} - \mathbf{I}) \mathbf{C})                                     \\
         & = \mathbf{S}_i - (\mathbf{R} - \mathbf{I}) (\mathbf{P}_i - \mathbf{C}) - d\mathbf{r} .
    \end{aligned}
    \label{eq_res_screw}
\end{equation}

In the same way as $\mathbf{t}$, decompose $\mathbf{S}_i$ into components parallel and perpendicular to $\mathbf{r}$:
\begin{equation}
    \mathbf{S}_i
    = (\mathbf{r} \cdot \mathbf{S}_i)\mathbf{r}
    + (\mathbf{I} - \mathbf{r}\mathbf{r}^\top) \mathbf{S}_i .
\end{equation}

Substituting into~\eqref{eq_res_screw}, we obtain the residual vector $\mathbf{e}_\parallel$ parallel to $\mathbf{r}$ and $\mathbf{e}_\perp$ perpendicular to $\mathbf{r}$:
\begin{equation}
    \mathbf{e}_i
    =
    \underbrace{(\mathbf{r} \cdot \mathbf{S}_i - d)\,\mathbf{r}}_{\mathbf{e}_\parallel}
    +
    \underbrace{
        (\mathbf{I} - \mathbf{r}\mathbf{r}^\top)\mathbf{S}_i
        - (\mathbf{R} - \mathbf{I})(\mathbf{P}_i - \mathbf{C})
    }_{\mathbf{e}_\perp} .
    \label{eq_res_decomp}
\end{equation}

\subsection{Vector perpendicular to rotation axis}

Assume $\mathbf{R}$ represents a rotation of angle $\theta$ about axis $\mathbf{r}$.
Applying Rodrigues' formula,
\begin{equation}
    \mathbf{R} = \mathbf{I}\cos\theta
    + (1 - \cos\theta) \mathbf{r}\mathbf{r}^\top
    + \mathbf{r}_\times \sin\theta,
    \label{eq_rodrigues}
\end{equation}
where $\mathbf{r}_\times$ denotes the skew-symmetric matrix such that
$\mathbf{r}_\times \mathbf{v} = \mathbf{r} \times \mathbf{v}$.
We can get:
\begin{equation}
    \begin{aligned}
        \mathbf{R} \mathbf{r}                  & = \mathbf{r},                                                                                             \\
        \mathbf{R} - \mathbf{r}\mathbf{r}^\top & = \mathbf{R} - \mathbf{R}\mathbf{r}\mathbf{r}^\top = \mathbf{R} (\mathbf{I}- \mathbf{r}\mathbf{r}^\top) .
    \end{aligned}
\end{equation}

$\mathbf{e}_\perp$ in~\eqref{eq_res_decomp} can be rewritten as
\begin{equation}
    \begin{aligned}
        \mathbf{e}_\perp & = (\mathbf{I}-\mathbf{r}\mathbf{r}^\top)\mathbf{S}_i
        -(\mathbf{R}-\mathbf{I})(\mathbf{P}_i-\mathbf{C})                                                                                    \\
                         & = (\mathbf{I} - \mathbf{r}\mathbf{r}^\top) (\mathbf{Q}_i - \mathbf{C} - (\mathbf{P}_i - \mathbf{C}))
        - (\mathbf{R} - \mathbf{I}) (\mathbf{P}_i-\mathbf{C})                                                                                \\
                         & = (\mathbf{I} - \mathbf{r}\mathbf{r}^\top) (\mathbf{Q}_i - \mathbf{C})
        - (\mathbf{R} - \mathbf{r}\mathbf{r}^\top) (\mathbf{P}_i-\mathbf{C})                                                                 \\
                         & = (\mathbf{I} - \mathbf{r}\mathbf{r}^\top) (\mathbf{Q}_i - \mathbf{C})
        - \mathbf{R}(\mathbf{I} - \mathbf{r}\mathbf{r}^\top) (\mathbf{P}_i-\mathbf{C})                                                       \\
                         & = (\mathbf{I} - \mathbf{r}\mathbf{r}^\top) (\mathbf{Q}_i - \mathbf{C} - \mathbf{R} (\mathbf{P}_i - \mathbf{C})) .
    \end{aligned}
\end{equation}

Note that
\begin{equation}
    \begin{aligned}
        \mathbf{r}_\times^2                      & = \mathbf{r}\mathbf{r}^\top - \mathbf{I} ,                                                \\
        (\mathbf{I} - \mathbf{r}\mathbf{r}^\top) & = (\mathbf{I} - \mathbf{r}\mathbf{r}^\top)^\top(\mathbf{I} - \mathbf{r}\mathbf{r}^\top) .
    \end{aligned}
\end{equation}

For any vector $\mathbf{v}$, it holds that
\begin{equation}
    \begin{aligned}
        \left\|\mathbf{r} \times \mathbf{v}\right\|^2 & = \mathbf{v}^\top \mathbf{r}_\times^\top \mathbf{r}_\times \mathbf{v}   \\
                                                      & = \mathbf{v}^\top (-\mathbf{r}_\times^2)\mathbf{v}                      \\
                                                      & = \mathbf{v}^\top (\mathbf{I} - \mathbf{r}\mathbf{r}^\top)\mathbf{v}    \\
                                                      & = \left\|(\mathbf{I} - \mathbf{r}\mathbf{r}^\top)\mathbf{v}\right\|^2 .
    \end{aligned}
\end{equation}

Therefore,

\begin{equation}
    \left\|\mathbf{e}_\perp\right\|^2 = \left\|\mathbf{r} \times (\mathbf{Q}_i - \mathbf{C} - \mathbf{R} (\mathbf{P}_i - \mathbf{C}))\right\|^2
\end{equation}

Meanwhile, both $\mathbf{r}_\times \mathbf{v}$ and $(\mathbf{I} - \mathbf{r}\mathbf{r}^\top)\mathbf{v}$ are perpendicular to $\mathbf{r}$:
\begin{equation}
    \mathbf{r}^\top \mathbf{r}_\times \mathbf{v} = \mathbf{r}^\top (\mathbf{I} - \mathbf{r}\mathbf{r}^\top)\mathbf{v} = 0 .
\end{equation}

Since $\mathbf{e}_\parallel = (\mathbf{r} \cdot \mathbf{S_i} - d)\mathbf{r}$ is parallel to $\mathbf{r}$
and $\mathbf{e}_\perp \perp \mathbf{r}$, we have $\mathbf{e}_\parallel^\top \mathbf{e}_\perp=0$.
Thus, the residual takes the final form
\begin{equation}
    \begin{aligned}
        \left\|\mathbf{e}_i\right\|^2
         & = \left\| \mathbf{e}_\parallel \right\|^2 + \left\| \mathbf{e}_\perp \right\|^2                  \\
         & = \left\| (\mathbf{r} \cdot \mathbf{S}_i - d) \mathbf{r} \right\|^2 + \left\| \mathbf{r} \times(
        \mathbf{Q}_i - \mathbf{C} - \mathbf{R}(\mathbf{P}_i - \mathbf{C})) \right\|^2                       \\
         & = \left\|(\mathbf{r} \cdot \mathbf{S}_i - d)\mathbf{r} + \mathbf{r} \times(
        \mathbf{Q}_i - \mathbf{C} - \mathbf{R}(\mathbf{P}_i - \mathbf{C})) \right\|^2
    \end{aligned}
\end{equation}
which leads directly to~\eqref{eq_chasles} in the main text.

\begin{table*}[!htbp]
    \caption{$(x, y)$ parameterization of hemisphere.\label{tab:par_hs}}
    \centering
    \begin{tabular}{c|cc}
        \hline
        Projection    & Formulation                                                                & Parameter range                                                       \\
        \hline
        Spherical     & $\phi = x, \psi = y$                                                       & $x \in [-\pi, \pi], y \in [0, \frac{\pi}{2}]$                         \\
        Miller        & $\phi = x, \psi = \frac{5}{2}\arctan(\exp(\frac{4}{5}y)) - \frac{5}{8}\pi$ & $x \in [-\pi, \pi], y \in [0, \frac{5}{4}\ln(\tan(\frac{9\pi}{20}))]$ \\
        Stereographic & $\phi = x, \psi = \frac{\pi}{2} - 2\arctan(\frac{y}{2})$                   & $x \in [-\pi, \pi], y \in [0, 2]$                                     \\
        LCEA          & $\phi = x, \psi = \arccos(y)$                                              & $x \in [-\pi, \pi], y \in [0, 1]$                                     \\
        LAEA          & $\phi = x, \psi = \frac{\pi}{2} - 2\arcsin(\frac{y}{2})$                   & $x \in [-\pi, \pi], y \in [0, \sqrt{2}]$                              \\
        Cube (+Z)     & $\mathbf{r} =\frac{(x, y, 1)^\top}{\sqrt{x^2 + y^2 + 1}}$                  & $x, y \in [-1, 1]$                                                    \\
        \hline
    \end{tabular}
\end{table*}

\subsection{Rotation center}

Finally, since the point $\mathbf{C}$ on the rotation axis is not unique
(i.e., $\mathbf{C} + \alpha \mathbf{r}$ represents the same axis for any $\alpha \in \mathbb{R}$),
we arbitrarily choose $\mathbf{r}^\top \mathbf{C} = 0$.
\eqref{eq_RC} is rewritten to

\begin{equation}
    (\mathbf{I} - \mathbf{R})\mathbf{C} = (\mathbf{I} - \mathbf{r}\mathbf{r}^\top) \mathbf{t}.
    \label{eq_RC_rewritten}
\end{equation}

Using Rodrigues' formula in~\eqref{eq_rodrigues}, we have
\begin{equation}
    \begin{aligned}
        \mathbf{I} - \mathbf{R}
         & = (1 - \cos\theta)(\mathbf{I} - \mathbf{r}\mathbf{r}^\top) - \mathbf{r}_\times \sin\theta \\
         & = 2\sin\left(\frac{\theta}{2}\right)
        \left(
        \sin\left(\frac{\theta}{2}\right)(\mathbf{I} - \mathbf{r}\mathbf{r}^\top)
        - \mathbf{r}_\times \cos\left(\frac{\theta}{2}\right)
        \right).
    \end{aligned}
\end{equation}

Substituting into~\eqref{eq_RC_rewritten} together with
$\mathbf{r}^\top \mathbf{C} = 0$ yields
\begin{equation}
    2\sin\left(\frac{\theta}{2}\right)
    \left(
    \mathbf{I}\sin\left(\frac{\theta}{2}\right)
    - \mathbf{r}_\times \cos\left(\frac{\theta}{2}\right)
    \right)\mathbf{C}
    = (\mathbf{I}-\mathbf{r}\mathbf{r}^\top)\mathbf{t}.
\end{equation}

Left-multiplying both sides by $\mathbf{I}\sin\left(\frac{\theta}{2}\right) + \mathbf{r}_\times \cos\left(\frac{\theta}{2}\right)$ and using the identity $\mathbf{r}_\times^2 = \mathbf{r}\mathbf{r}^\top - \mathbf{I}$,
we obtain
\begin{equation}
    2\sin\left(\frac{\theta}{2}\right) \mathbf{C}
    = \left(\sin\left(\frac{\theta}{2}\right) (\mathbf{I} - \mathbf{r}\mathbf{r}^\top)
    + \mathbf{r}_\times \cos\left(\frac{\theta}{2}\right)
    \right) \mathbf{t}.
\end{equation}

Therefore,
\begin{equation}
    \mathbf{C}
    = \frac{1}{2} \left(\mathbf{I} - \mathbf{r}\mathbf{r}^\top
    + \mathbf{r}_\times \cot\left(\frac{\theta}{2}\right)
    \right) \mathbf{t},
\end{equation}
which corresponds to~\eqref{eq_chasles_params_d}.

\section{Comparison of Sphere Projections}
\label{appd:proj}

The unit rotation axis $\mathbf{r} = (r_x, r_y, r_z)^\top$ on the hemisphere ($r_z \in [0, 1]$) is defined as longitude $\phi \in [-\pi, \pi]$ and latitude $\psi \in [0, \frac{\pi}{2}]$:

\begin{equation}
    \begin{aligned}
        r_x & = \cos(\phi) \cos(\psi), \\
        r_y & = \sin(\phi) \cos(\psi), \\
        r_z & = \sin(\psi).
    \end{aligned}
    \label{eq_lonlat}
\end{equation}

Spherical, Miller (used in TR-DE~\cite{chenDeterministic2022}), Stereographic, Lambert Cylindrical Equal-Area (LCEA), and Lambert Azimuthal Equal-Area (LAEA) projections are compared with the proposed cube mapping.
Their formulations follow standard spherical projection models~\cite{snyderMap1987} and are summarized in Table~\ref{tab:par_hs}, where Cube (+Z) denotes the canonical face and the remaining faces are included via coordinate permutations.

Since LiDAR scans in the KITTI dataset~\cite{geigerAre2012} are predominantly acquired on ground vehicles, the distribution of rotation axes is highly biased and lacks sufficient diversity.
To better evaluate the effect of different hemisphere parameterizations under more general 3D rotations, we therefore conduct comparative experiments on the 3DMatch and 3DLoMatch datasets~\cite{zeng3DMatch2017}, which exhibit more diverse relative orientations.

\begin{table}[!htbp]
    \caption{Comparison of hemisphere projections on 3DMatch/3DLoMatch datasets.\label{tab:proj_result}}
    \centering
    \begin{tabular}{c|cc|cc}
        \hline
        Projection    & \multicolumn{2}{c|}{3DMatch} & \multicolumn{2}{c}{3DLoMatch}                                           \\
                      & RR(\%)$\uparrow$             & Time(s)$\downarrow$           & RR(\%)$\uparrow$  & Time(s)$\downarrow$ \\
        \hline
        Spherical     & 88.66                        & 0.61                          & 48.96             & \textbf{0.64}       \\
        Miller        & 88.85                        & \textbf{0.60}                 & 47.61             & 0.66                \\
        Stereographic & \underline{89.34}            & 0.62                          & 48.46             & 0.68                \\
        LCEA          & 89.09                        & 0.76                          & \underline{49.13} & 0.74                \\
        LAEA          & 88.48                        & \textbf{0.60}                 & 48.01             & \underline{0.65}    \\
        Cube          & \textbf{89.46}               & 0.62                          & \textbf{50.59}    & \underline{0.65}    \\
        \hline
    \end{tabular}
\end{table}

As shown in Table~\ref{tab:proj_result}, all hemisphere parameterizations achieve comparable recall on 3DMatch, indicating that the choice of projection has limited impact when sufficient overlap is available.
On the more challenging 3DLoMatch dataset with low overlap, cube mapping achieves the highest recall.
This advantage stems from its more uniform and bounded parameterization of the rotation axis space, which avoids severe distortion near the poles and leads to more balanced subbranch division under sparse correspondences.

\bibliographystyle{IEEEtran}
\bibliography{references}

\end{document}